\documentclass{article}
\usepackage[utf8]{inputenc}

\usepackage{geometry, amsmath, amssymb, mathtools, bbm, amsthm, natbib, xcolor, hyperref, graphicx}
\usepackage[ruled]{algorithm2e} % For algorithms

\usepackage{float}

\usepackage{fullpage}
\usepackage{authblk}
\allowdisplaybreaks

\usepackage{enumerate}

\DeclareMathOperator*{\E}{\mathbb{E}}

\newcommand{\1}{\mathbbm{1}}

\newcommand{\bc}{\mathbf{c}}

\newcommand{\cA}{\mathcal{A}}
\newcommand{\cB}{\mathcal{B}}
\newcommand{\cC}{\mathcal{C}}

\newcommand{\cE}{\mathcal{E}}
\newcommand{\cF}{\mathcal{F}}

\newcommand{\cN}{\mathcal{N}}

\newcommand{\cS}{\mathcal{S}}

\newcommand{\cX}{\mathcal{X}}
\newcommand{\cY}{\mathcal{Y}}

\DeclareMathOperator*{\argmax}{arg\,max}
\DeclareMathOperator*{\argmin}{arg\,min}

\newcommand{\bR}{\mathbb{R}}

\newtheorem{definition}{Definition}

\newtheorem{lemma}{Lemma}
\newtheorem{theorem}{Theorem}
\newtheorem{remark}{Remark}
\newtheorem{corollary}{Corollary}
\newtheorem{proposition}{Proposition}

\newtheorem{assumption}{Assumption}

\newcommand{\BR}{\mathrm{BR}}
\newcommand{\CBR}{\mathrm{CBR}}
\newcommand{\Reg}{\mathrm{Reg}}
\newcommand{\CCV}{\mathrm{CCV}}
\newcommand{\ext}{\mathrm{ext}}
\newcommand{\swap}{\mathrm{swap}}

\usepackage[dvipsnames]{xcolor}

\begin{document}

\title{Online Omniprediction with Long-Term Constraints}

%\author{Anonymous authors}

\author[1]{Yahav Bechavod}
\author[2]{Jiuyao Lu}
\author[1]{Aaron Roth}
\affil[1]{Department of Computer and Information Sciences, University of Pennsylvania}
\affil[2]{Department of Statistics and Data Science, University of Pennsylvania}

\date{\today}

\maketitle

\begin{abstract}
    We introduce and study the problem of online omniprediction with long-term constraints. At each round, a forecaster is tasked with generating predictions for an underlying (adaptively, adversarially chosen) state that are broadcast to a collection of downstream agents, who must each choose an action. Each of the downstream agents has both a utility function mapping actions and state to utilities, and a vector-valued constraint function mapping actions and states to vector-valued costs. The utility and constraint functions can arbitrarily differ across downstream agents. Their goal is to choose actions that guarantee themselves no regret while simultaneously guaranteeing that they do not cumulatively violate the constraints across time. We show how to make a single set of predictions so that each of the downstream agents can guarantee this by acting as a simple function of the predictions, guaranteeing each of them $\tilde{O}(\sqrt{T})$ regret and $O(1)$ cumulative constraint violation. We also show how to extend our guarantees to arbitrary intersecting contextually defined \emph{subsequences}, guaranteeing each agent both regret and constraint violation bounds not just marginally, but simultaneously on each subsequence, against a benchmark set of actions simultaneously tailored to each subsequence.  
    
  %  Equipped with predictions, each agent commits to an action, attempting to simultaneously minimize regret with respect to their respective utility function, as well as refrain from violating a set of incoming adversarial constraints. We present an algorithm that obtains a bound of $\tilde O(|\cA|\sqrt{T})$ on agents' regret, as well as a bound of $|\cA|$ for cumulative constraint violations, where $\cA$ is the set of available actions. Our approach allows, for the first time in the context of learning with long term-constraints, to derive bounds that hold not only with respect to the entire interaction, but rather simultaneously with respect to a sequence of benchmarks that are potentially different for each subsequence. Finally, our approach provides the first algorithm that obtains constraint violation bounds that are independent of the time horizon for adversarial constraints.
\end{abstract}

\section{Introduction} \label{sec:introduction}
The traditional problem of ``online learning from expert advice'' studies a sequential optimization problem that is  decoupled across time. At each iteration $t \in \{1,\ldots,T\}$, an adversary chooses a ``state'' $y_t \in \cY$, and simultaneously, the learner/decision-maker chooses an action $a_t \in \cA$. The learner/decision-maker then experiences utility $u(a_t,y_t) \in [0,1]\footnote{The formalism of choosing a ``state" $y \in \cY$ is fully general; for instance, in the experts problem we can take $\cY = [0,1]^{|\cA|}$ where $y_a$ represents the utility of action $a$.}$. The goal of the learner is to maximize $\sum_{t=1}^T u(a_t,y_t)$, and the standard ``external regret benchmark'' compares the realized utility to the utility that could have been obtained in hindsight by the best -fixed- action: 
$$\Reg_\ext(1:T) = \max_{a \in \cA} \sum_{t=1}^T \left(u(a,y_t) - u(a_t,y_t) \right).$$
In strategic settings, the stronger ``swap regret'' benchmark is often studied because it offers protection against manipulation  (agents with no swap regret guarantees cannot be exploited in the same ways that agents with standard no regret guarantees can be \citep{braverman2018selling,deng2019strategizing,rubinstein2024strategizing,arunachaleswaran2024pareto}):
$$\Reg_\swap(1:T) = \max_{\phi:\cA\rightarrow \cA} \sum_{t=1}^T \left(u(\phi(a_t),y_t) - u(a_t,y_t) \right).$$
Swap regret compares the realized accumulated utility of the learner to the best utility they could have obtained in hindsight by applying a ``swap function'' $\phi$ to their history of actions, consistently mapping each action $a$ to a different action $\phi(a) \in \cA$. External regret is a special case of swap regret in which the ``swap functions'' $\phi$ are restricted to be constant valued. 

These are both ``decoupled'' optimization problems because they are linearly separable across time --- i.e. the ideal solution would always be (if it were possible) to separately optimize every day, and play $a_t \in \argmax_{a \in \cA} u(a,y_t)$.  Regret minimization is a classical problem and there are many algorithms for solving it, obtaining both external and swap regret bounds that scale as $O(\sqrt{T})$, which is optimal \citep{hannan,LittlestoneW94,cesa,arora2012multiplicative,blum2007external}. 

We write ``learner/decision-maker'' above because there are really two problems to solve: \emph{predicting} the value of the next $y_t \in \cY$ and choosing to \emph{act} accordingly by choosing $a_t \in \cA$. Traditional algorithmic approaches combine these two problems and directly learn distributions over actions. But a recent line of work called ``omniprediction'', beginning with \cite{gopalan2021omnipredictors} (initially in a batch/distributional learning setting) proposes to decouple these two problems by producing \emph{predictions} $p_t$ that can be simultaneously consumed by many downstream decision makers who may have different utility functions $u$, while providing all of them with corresponding guarantees. This line of work has been extended to the online adversarial learning setting \citep{noarov2023highdimensional,garg2024oracle,roth2024forecasting,dwork2024fairness,okoroafor2025near,lu2025sample}, in which many downstream decision makers are explicitly promised regret bounds --- sometimes external regret bounds \citep{kleinberg2023u,garg2024oracle,dwork2024fairness,okoroafor2025near} and sometimes stronger swap regret bounds \citep{noarov2023highdimensional,roth2024forecasting,hu2024predict,lu2025sample}. These works explicitly take advantage of the separable nature of the problem: They give algorithms for making predictions $p_t$ (subject to various kinds of ``calibration'' style constraints) such that downstream decision makers can be guaranteed strong regret bounds by individually ``best responding'' to the predictions, each playing an action $a_t = \arg\max_{a \in \cA} u(a,\hat p_t)$ --- i.e. locally optimizing for their own utility function on a day-by-day basis as if the predictions were correct, ignoring the history of outcomes and actions played.

But many online optimization problems are \emph{not} linearly separable in this way. Consider an energy management system where predictions correspond to electricity prices, and actions include: purchasing electricity for immediate use, buying excess power to charge batteries, discharging stored energy, or temporarily shutting down operations. Different downstream decision-makers face distinct constraints: residential users might defer appliance usage to low-cost periods, manufacturers require minimum uptime to meet production quotas, and grant-funded operations must stay within fixed budgets. These constraints couple decisions across time, making prior omniprediction methods inapplicable. Our goal is to give omniprediction style bounds for decision makers like this who are bound to satisfy constraints across time. To crystallize the tension, observe that the goal of omniprediction is to \emph{decouple} prediction from decision making --- but decision makers facing long term constraints must necessarily \emph{couple} their decisions across time, and this coupling behaves differently for differently constrained decision makers.

The literature on ``learning with long-term constraints'' \emph{has} studied different variants of this problem for a single, combined learner/decision-maker who directly learns in action space \citep{MannorTY09,Sun17safety,YuNeely2020,guo2022online,Castiglioni22,qiu2023gradient,sinha2024optimal}. In the problem of learning with long-term constraints, not only does a decision maker have a utility function $u:\cA\times \cY \rightarrow [0,1]$, but they have a set of \emph{constraints} $\{c_j \}_{j \in [J]}$, each of the form $c_j:\cA\times \cY\rightarrow [-1,1]$.\footnote{In the literature on learning with long-term constraints, incoming constraints are usually defined only as a function of the action set. We instead define a fixed set of constraints for each decision-maker, but allow dependence on the label set as well. Hence we can equivalently handle adversarial constraints by encoding adaptivity in the incoming labels.} The goal now is to maximize $\sum_{t=1}^T u(a_t,y_t)$, while satisfying each constraint globally: for each $j \in [J]$, satisfying: $\sum_{t=1}^T c_j(a_t,y_t) \leq 0$. It has been known since \cite{MannorTY09} that in adversarial settings it is not possible to compete with the best fixed action in hindsight that satisfies the constraints marginally. Instead, what has become the standard benchmark is the set of actions that in hindsight satisfy the realized constraint \emph{every round}: $$\cA_{1:T}^\bc = \{a\in \cA : c_j(a,y_t) \leq 0   \ \textrm{for every } t \in [T] \ \textrm{and } j \in [J]\}.$$
The goal in this literature is to minimize external regret with respect to this benchmark:
$$\Reg_\ext(1:T) = \max_{a \in \cA_{1:T}^\bc} \sum_{t=1}^T \left(u(a,y_t) - u(a_t,y_t) \right).$$
while simultaneously minimizing the maximum cumulative constraint violation: 
$$\CCV(1:T) = \max_j \sum_{t=1}^T c_j(a_t,y_t).$$

We introduce the omniprediction variant of this problem, in which a centralized provider of ``predictions as a service'' publishes forecasts $p_t$ of the outcome $y_t$. Many different decision makers, who differ both in their utility functions and constraint functions then consume these predictions to choose actions. Our main result is an algorithm that can simultaneously guarantee each such decision maker regret bounded as $\Reg(1:T) \in \tilde O(\sqrt{T})$ (matching the optimal dependence on $T$, even absent constraints, and defined with respect to the benchmark determined by their own personal constraints) and cumulative constraint violation  $\CCV(1:T) \in O(1)$ --- i.e. independent of $T$. In fact, our techniques yield stronger swap regret bounds at the same rate. In the constrained setting, we define swap regret with respect to the benchmark as:
$$\Reg_\swap(1:T) = \max_{\phi:\cA\rightarrow  \cA_{1:T}^\bc} \sum_{t=1}^T \left(u(\phi(a_t),y_t) - u(a_t,y_t) \right).$$

We also show how these results can be extended to arbitrary collections of \emph{subsequences} $\cS$, where each $S \in \cS$ is a subset of the timesteps $S \subseteq \{1,\ldots,T\}$ that can be determined dynamically by external context. For example, a subsequence $S$ might consist of ``hot days'' that might be correlated with higher electricity prices; the subsequences may intersect arbitrarily. The set of benchmark actions will be different for each subsequence, as some actions may satisfy the constraints at every round in a subsequence $S$ even if they do not satisfy the constraint on every round in $\{1,\ldots,T\}$:
$$\cA_{S}^\bc = \{a\in \cA : c_j(a,y_t) \leq 0   \ \textrm{for every } t \in S \ \textrm{and } j \in [J]\}.$$
We can similarly define regret on a subsequence (to the benchmark $\cA_{S}^\bc$ particular to that subsequence) and cumulative constraint violation on a subsequence:
$$\Reg_\swap(S) = \max_{\phi:\cA\rightarrow  \cA_{S}^\bc} \sum_{t\in S} \left(u(\phi(a_t),y_t) - u(a_t,y_t) \right),$$
$$\CCV(S) = \max_j \sum_{t\in S} c_j(a_t,y_t).$$
Given an arbitrary collection of subsequences $\cS$, we show how to obtain simultaneously for each downstream agent, and simultaneously for each subsequence $S \in \cS$ regret and cumulative constraint violation bounds: $\Reg_\swap(S) \in \tilde O(\sqrt{|S|})$ where $|S|$ is the length of subsequence $S$ and $\CCV(S) \in O(1)$ --- where the constant in the cumulative constraint violation bound now depends linearly on $|\cS|$ (the number of subsequences of interest), but the constant in the regret bound depends only logarithmically on $|\cS|$. Crucially, the benchmark with which regret is measured is tailored to the subsequence --- we are able to compete with a richer set of actions on subsequences that turn out, ex-post to be ``easier'' than others. 
\subsection{Our Results in More Detail}
The outcome space $\cY = [0,1]^d$, and so outcomes $y$ are $d$-dimensional vectors. At every round $t$ our algorithm produces predictions $p_t \in \cY$ that are also $d$-dimensional vectors. Our goal is for downstream decision makers to be able to treat these predictions $p_t$ ``as if'' they were actual outcomes $y_t$. Each decision maker has an action space $\cA$ and is parameterized by a utility function $u:\cA\times \cY \rightarrow [0,1]$ that we assume is linear in $\cY$ --- i.e. such that for every $a$, $u(a,\cdot)$ is a linear function. For a single downstream decision maker, this models the experts learning problem without loss of generality by taking $d = |\cA|$  and letting $u(a,y) = y_a$, but is more general, since we do not restrict either the form of the utility function or $d$ in this way. Each downstream decision maker is also parameterized by a vector valued constraint function $\bc:\cA\times \cY \rightarrow [-1,1]^J$. We assume nothing about its functional form --- e.g. $\bc$ need not be linear or even convex in $\cY$.  
\paragraph{Handling Constraints via Elimination}
In work studying online omniprediction for unconstrained online learning problems \citep{kleinberg2023u,noarov2023highdimensional,roth2024forecasting,garg2024oracle,dwork2024fairness,okoroafor2025near}, downstream decision makers with utility function $u$ ``best respond'' to forecasts, playing action $a_t \in \BR^u(p_t) = \argmax_{a \in \cA} u(a,p_t)$. In our problem of learning with long-term constraints, we similarly study decision makers who play \emph{constrained} best responses. Given a subset of the action space $\cC \subseteq \cA$, the constrained best response to a prediction $p_t$ is $\CBR^u_\cC(p_t)=\argmax_{a \in \cC}u(a,p_t)$ --- i.e. the action in the constrained set $\cC$ that would be utility maximizing were the prediction correct. The question is how decision makers choose the constraint set $\cC$. We study two simple methods, corresponding to two different kinds of benchmarks we can compete with. Most straightforwardly, we study decision makers who start with $\cC = \cA$, and then eliminate actions that have been observed to violate any constraints on any previous round --- i.e. at round $t$, $$\cC = \widehat\cA_{t}^\bc = \{a \in \cA : c_j(a,y_{t'}) \leq 0 \ \textrm{for all }j \in [J], t' < t \}.$$
This is formalized in Algorithm \ref{alg:elimination-realization}. We show in Theorems \ref{thm:CCV-realization} and \ref{thm:swap-regret-realization}  that (when paired with predictions produced by our algorithm), this results in decision makers who cumulatively violate any constraint at most $|\cA|$ times over the course of the interaction, and have diminishing external and swap regret to the standard benchmark $\cA_{1:T}^\bc$. We also consider decision makers who do not eliminate actions the first time they are observed to violate the constraint, but are more conservative, eliminating them only after the action has been observed to have a cumulative constraint violation crossing a well-chosen threshold. This is formalized in Algorithm \ref{alg:elimination-expectation}. We show in Theorems \ref{thm:CCV-expectation} and \ref{thm:swap-regret-expectation} that this lets us compete with a richer benchmark class, $\cA_{1:T}^{\E[\bc]} = \left\{ a: \E_{y_t \sim Y_t}[ c_j(a,y_t)] \le 0 \; \text{ for every } t \in [T] \text{ and } j \in [J] \right\}$,
where here the expectation in the definition of the benchmark is taken over the distribution over outcomes chosen by the adversary at each round --- again both with external and swap regret bounds. In other words, the benchmark allows us to compare to actions that sometimes violate constraints in realization, so long as they do not in expectation over the adversary's randomness. Competing with this richer benchmark class results in a cumulative constraint violation that now grows as $\tilde O(\sqrt{T})$. 

\paragraph{Making Predictions}
Our algorithm for making predictions is parameterized by the collection of utility/constraint functions $(u,\bc)$, one for each downstream decision maker. It makes predictions 
% that are \emph{unbiased} marginally, in the sense that 
% $$\left\| \sum_{t=1}^T (p_t - y_t) \right\|_\infty \leq O(\sqrt{T}),$$ 
% and 
that are unbiased \emph{conditionally on the decisions of each downstream decision maker}, in the sense that for each $(u,\bc)$:
$$\left\| \sum_{t=1}^T \1[\CBR^u_{\widehat\cA_{t}^\bc}(p_t)=a] (p_t - y_t) \right\|_\infty \leq O(\sqrt{T}).$$
This can be accomplished using an appropriate parameterization of the algorithm of \cite{noarov2023highdimensional}, with bias bounds that depend only logarithmically on the number of downstream decision makers $(u,\bc)$ of interest, and running time that depends linearly on the number of downstream decision makers. Similar ``decision calibration'' conditions have been used to give regret bounds in the unconstrained case \citep{noarov2023highdimensional,roth2024forecasting,lu2025sample} --- We show that they continue to give regret bounds in the constrained case (Theorems \ref{thm:swap-regret-realization}, \ref{thm:swap-regret-expectation}) because the elimination method used to eliminate actions by the downstream decision makers maintains the invariant that all of the actions that eventually end up in the benchmark set $\cA^\bc_{1:T}$ are contained in the feasible set $\widehat\cA_t^\bc$ used by the downstream agents at every intermediate step $t \leq T$.

\paragraph{Extension to Subsequences}
Finally, in Section \ref{sec:subsequence} we show how all of our results can be extended to give regret and constraint violation bounds $\Reg(S),\CCV(S)$ for arbitrary collections of context-sensitive \emph{subsequences} $S \in \cS$. 
For the first and simpler benchmark, the idea is simple and requires modifying both the elimination strategy used by the downstream decision makers and the prediction strategy. The downstream decision makers must now maintain sets of feasible actions $\widehat\cA_{t,S}^\bc$ separately, one for each subsequence $S \in \cS$, as before, eliminating actions whenever they have been observed to violate constraints on previous rounds \emph{that are contained in the subsequence $S$}. Given a new round $t$, they then solve the constrained best response problem over the set $\cC = \cup_{S \in \cS : t \in S} \widehat\cA_{t,S}^\bc$ --- i.e. by taking the union of all actions that have not been eliminated on \emph{any} of the subsequences that contain the current round. The predictions that we make must now also satisfy a richer set of conditional bias conditions: informally speaking, they must be unbiased conditionally both on the action chosen by each of the downstream decision makers, and now also membership of the current round in each of the subsequences $S \in \cS$. Once again this can be accomplished via an instantiation of the algorithm of \cite{noarov2023highdimensional} at a cost in bias that is only logarithmic in $|\cS|$ and in running time that is linear in $|\cS|$, the number of subsequences of interest. However, for the second, stronger benchmark, the simple approach described above does not suffice informally because we now don't just count rounds at which constraints were violated, but we sum quantitative constraint violations which can be negative (i.e. on some rounds we can have slack). Consequently, an action can be feasible on a subsequence (and hence never eliminated for that subsequence) while leading to large cumulative constraint violation on intersections of that subsequence with other subsequences. This is problematic for the learner who must give guarantees simultaneously for many intersecting subsequences. 
 The main technical contribution of Section \ref{sec:subsequence} is giving a more sophisticated elimination algorithm that does suffice to bound cumulative constraint violation on all subsequences, despite their intersections.

\subsection{Related Work}
Our work extends both past work on \emph{omniprediction} and \emph{prediction for downstream regret} as well as past work on \emph{online learning with long term constraints}. We discuss both strands of related work below. 
\paragraph{Omniprediction and Prediction for Downstream Regret} \cite{gopalan2021omnipredictors} introduced the problem of \emph{omniprediction} in the batch/distributional setting. In the traditional ``omniprediction'' setting the label space is binary $\cY = \{0,1\}$, the ``action space'' is considered to be the set of real valued predictions in $[0,1]$ and ``decision makers'' are modeled as optimizing a loss function $\ell:[0,1] \times \{0,1\} \rightarrow \mathbb{R}$, such as squared or absolute loss. \cite{gopalan2021omnipredictors} show that multicalibration (c.f. \cite{hebert2018multicalibration} implies omniprediction. Subsequent work \citep{gopalan2022loss,gopalan2023swap} studied both weaker and stronger conditions than multicalibration and the corresponding variants of omniprediction that they imply. \cite{garg2024oracle} introduced the problem of omniprediction in the online adversarial setting and gave oracle efficient algorithms for it; subsequent work derived optimal regret bounds for the problem \citep{okoroafor2025near,dwork2024fairness}. All of the papers discussed so far study binary outcomes $(\cY = \{0,1\})$, but a notable exception is  \cite{gopalan2024omnipredictors} who study vector valued outcomes and decision makers with loss functions that are convex in the outcome. Two papers \citep{globus2023multicalibrated,hu2023omnipredictors} study the problem of omniprediction with constraints. Like most of the rest of the literature, they focus on binary outcomes $\cY = \{0,1\}$ and real valued action spaces $\cA = [0,1]$ in the batch setting. As their focus is on machine learning, they both focus on specific kinds of constraints that arise as ``group fairness'' constraints in binary classification. 

A parallel literature \cite{kleinberg2023u,noarov2023highdimensional,roth2024forecasting,hu2024predict} has focused on making sequential predictions of adversarially chosen outcomes from some space $\cY$ so that downstream agents, who may have arbitrary discrete action spaces, can optimize their own loss. Several papers from this literature \cite{noarov2023highdimensional,roth2024forecasting} have gone beyond the binary label setting and have focused (as we do) on arbitrary loss functions that for each action are \emph{linear} in the state $y \in [0,1]^d$ to be predicted. The goal is to guarantee that all downstream decision makers have diminishing regret in the worst case. Motivated by decision makers interacting in competitive environments,  \cite{noarov2023highdimensional,roth2024forecasting,hu2024predict} give results to guarantee downstream decision makers diminishing \emph{swap} regret at near optimal rates. 
    Because calibration is known to be unobtainable in online adversarial settings at $O(\sqrt{T})$ rates \citep{qiao2021stronger,dagan2024improved}, this literature has from the beginning used techniques (like ``U-calibration'' \citep{kleinberg2023u} and extensions of decision calibration \cite{zhao2021calibrating,noarov2023highdimensional,roth2024forecasting}) that sidestep these lower bounds while  being strong enough to guarantee downstream decision makers no (swap) regret. \cite{lu2025sample} give a unifying view on both literatures. Our model and techniques derive mainly from the ``prediction for downstream regret literature'' (arbitrary action space, high dimensional outcome space, swap regret guarantees in an online adversarial setting) --- one way to view our work is as extending omniprediction with constraints \citep{hu2023omnipredictors,globus2023multicalibrated} to this setting. 

\paragraph{Online Learning with Long-Term Constraints}
The problem of learning with long-term constraints first appeared in \cite{MannorTY09}, who showed that even for simple utility functions and a single adaptive constraint, it is impossible for the learner to get sublinear external regret with respect to the benchmark of actions that satisfy the constraint \emph{marginally}. Instead, follow up work (e.g. \cite{Sun17safety,guo2022online,anderson2022lazy,yi2023distributed,sinha2024optimal,lekeufack2024optimistic})  adopted a stricter external regret benchmark, of actions that satisfy the constraints \emph{on every round}. Generally speaking, these works operate in the setting of \emph{online convex optimization} rather than considering a finite set of actions. Among these, \cite{sinha2024optimal} obtained the fastest known convergence rates without additional assumptions, scaling with $\tilde{O}(\sqrt{T})$ for each of the regret and the cumulative constraint violation. \cite{lekeufack2024optimistic} provided an algorithm for the experts setting with rates scaling each with $\tilde{O}(\sqrt{T\ln(d)})$ where $d$ is the number of experts, and regret is measured against the set of all probability vectors over the experts that (in expectation) satisfy the constraints. This literature also explores benchmarks where constraints are not violated \emph{in expectation} \citep{neelyYu2017,Wei20}, in cases where there assumed to exist an underlying distribution incoming constraint functions. This can be viewed analogously to $\cA_{1:T}^{\E[\bc]}$ in our formulation, for the case where the distribution over states also encodes a distribution over constraints that is fixed across the rounds of interaction. Our formulation, however, is more permissive as it allows the distribution over constraints to vary across rounds.

\section{Model and Preliminaries} \label{sec:preliminaries}
Let $\cX$ denote the feature space and $\cY$ denote the outcome/label space. Throughout, we consider $\cY = [0,1]^d$. 
% (more generally, we can consider any convex, bounded $d$-dimensional space with a constant degradation in our bounds). 
% We let $\cP$ denote the prediction space. 
We consider a set of agents $\cN$ with an arbitrary action space $\cA$. Each agent is equipped with a tuple $(u,c_1,\ldots,c_J)$, which includes a utility function $u: \cA \times \cY \to [0,1]$ and $J$ constraint functions $\{c_j: \cA \times \cY \to [-1,1]\}_{j \in [J]}$\footnote{Sometimes online adversarial learning problems are described by an adversary choosing a different utility and/or constraint function at each step. This is equivalent to having a fixed state-dependent utility function/constraint functions, and having an adversary choose state.}.
We also sometimes write the constraint functions as a single vector valued function $\bc = (c_1,\ldots,c_J)$. 
Since agents are uniquely defined by their corresponding tuples, we treat agents and their tuples interchangeably.
We assume that the utility functions are linear and Lipschitz-continuous in $y$. 
\begin{assumption}
    Fix a utility function $u: \cA \times \cY \to [0,1]$. We assume that for every action $a \in \cA$, $u(a, y)$ is linear in $y$, i.e. $u(a, k_1 y_1 + k_2 y_2) = k_1 u(a, y_1) + k_2 u(a, y_2)$ for all $k_1,k_2 \in \bR$, $y_1,y_2 \in \cY$. Moreover, we assume that $u(a,y)$ is $L$-Lipschitz in $y$ in the $\ell_\infty$ norm: for any $y_1,y_2 \in \cY$, $|u(a, y_1) - u(a, y_2)| \leq L\|y_1 - y_2\|_\infty$. 
\end{assumption}

\begin{remark}
    For simplicity we assume that the utility functions are \emph{linear} in $y$, but we can equally well handle the case in which the utility functions are \emph{affine} in $y$, as we can augment the label space with an additional coordinate that takes constant value 1. This preserves the convexity of the label space and allows for arbitrary constant offsets in the utility of each action.

    Assuming linear/affine utility functions is only more general than the standard assumption that decision makers are \emph{risk neutral} in the sense that in the face of randomness, decision makers act to maximize their expected utility. If $\cY$ represents the set of probability distributions over outcomes, any risk neutral decision maker has a linear utility function by linearity of expectation. 
\end{remark}

We take the role of an online/sequential forecaster producing predictions that will be consumed by agents. We consider the following repeated interaction between a forecaster, agents, and an adversary. In every round $t \in [T]$:
\begin{enumerate}[(1)]
    \item The adversary selects a feature vector $x_t \in \cX$ and a distribution over outcomes $Y_t \in \Delta \cY$;
    \item The forecaster observes the feature $x_t$, produces a distribution over predictions $\pi_t\in\Delta\cY$, from which a prediction $p_t \in \cY$ is sampled; 
    \item Each agent chooses an action $a_t$ as a function of the prediction $p_t$ and the history;
    \item The adversary reveals an outcome $y_t \sim Y_t$, and the agent obtains utility $u(a_t, y_t)$ and the constraint loss vector $\{c_j(a_t, y_t)\}_{j \in [J]}$.
\end{enumerate}

Agents aim to maximize their cumulative utilities over the $T$ rounds: $\sum_{t=1}^T u(a_t,y_t)$
while minimizing their cumulative constraint violation (CCV) over $T$ rounds:
\[
    \CCV(1:T) \coloneqq \max_{j \in [J]} \sum_{t=1}^T c_j(a_t,y_t) \le o(T).
\]
We treat utility maximization as an objective and cumulative constraint violation as a requirement: CCV must be sublinear in $T$ for every constraint. 

We compare the utility of actions chosen based on our predictions against the utility of actions from different benchmark classes. A standard benchmark class from the literature on learning with long term constraints is the set of actions that satisfy the \emph{realized} constraints at every round, denoted as:
\begin{align*}
    \cA_{1:T}^\bc := \left\{ a: c_j(a,y_t) \le 0 \; \text{ for every } t \in [T] \text{ and } j \in [J] \right\}.
\end{align*}

A somewhat more permissive benchmark class is the set of actions that satisfy the constraints \emph{in expectation} every round, denoted as:
\begin{align*}
    \cA_{1:T}^{\E[\bc]} := \left\{ a: \E_{y_t \sim Y_t}[ c_j(a,y_t)] \le 0 \; \text{ for every } t \in [T] \text{ and } j \in [J] \right\}.
\end{align*}

We note that these benchmark classes are agent-specific as they depend on agents' constraint functions. 
For a given benchmark class, we will compete with the best fixed action in hindsight from the class. We measure our performance using a notion of regret, which we call \emph{constrained external regret}.
\begin{definition}[$\cB$-Constrained External Regret] \label{def:external-regret}
    Fix an agent with action space $\cA$ and utility function $u$. Let $\cB \subseteq \cA$ be a benchmark class of actions. For a sequence of actions $a_1,\ldots,a_T$ and outcomes $y_1,\ldots,y_T$, the agent's $\cB$-constrained external regret against the benchmark class $\cB$ is:
    \[
        \Reg_\ext(u,\cB,1:T) := \max_{a \in \cB}\sum_{t=1}^T \left( u(a,y_t) - u(a_t,y_t) \right).
    \]
\end{definition}

We will also compete with a more demanding benchmark based on action modification rules. For a given benchmark class $\cB$, an action modification rule is any function $\phi: \cA \to \cB$ that consistently maps an agent's actions to alternatives within the benchmark class. Competing with the best such rule in hindsight leads to the notion of \emph{constrained swap regret}. This is a stronger notion than constrained external regret, as constrained external regret can be viewed as a special case where the action modification rule is restricted to being a constant function.
\begin{definition}[$\cB$-Constrained Swap Regret] \label{def:swap-regret} 
    Fix an agent with action space $\cA$ and utility function $u$. Let $\cB \subseteq \cA$ be a benchmark class of actions. For a sequence of actions $a_1,\ldots,a_T$ and outcomes $y_1,\ldots,y_T$, the agent's $\cB$-constrained swap regret against the benchmark class $\cB$ is:
    \[
        \Reg_\swap(u,\cB,1:T) := \max_{\phi: \cA \to \cB}\sum_{t=1}^T \left( u(\phi(a_t),y_t) - u(a_t,y_t) \right).
    \]
\end{definition}

All guarantees we provide in this paper, such as regret bounds, will be stated under the assumption that the corresponding benchmark class is non-empty.

\section{Elimination-Based Algorithms and Cumulative Constraint Violation Bounds} \label{sec:elimination-CCV}
In the literature on unconstrained omniprediction, the typical decision rule is the best response to the predictions, also known as the optimal post-processing of the predictions \citep{gopalan2021omnipredictors,gopalan2022loss,gopalan2023swap,globus2023multicalibration,garg2024oracle,gopalan2024omnipredictors,okoroafor2025near,hu2025omnipredicting,lu2025sample}. This decision rule treats the predictions as accurate and chooses the action to maximize the agent's utility.
\begin{definition}[Best Response] \label{def:best-response}
    Fix a utility function $u: \cA \times \cY \to [0,1]$ and a prediction $p \in \cY$. The best response to $p$ according to $u$ is the action $\BR^{u}(p) = \argmax_{a \in \cA} u(a,p)$.
\end{definition}

In the constrained setting, one would hope to avoid actions that (significantly) violate the constraints, and choose only from the feasible actions. If the agent knew the benchmark class $\cA_{1:T}^\bc$, they could play a constrained version of the best response, $\argmax_{a \in \cA^\bc_{1:T}} u(a,p)$, i.e., the action that maximizes the predicted utility among $\cA_{1:T}^\bc$. But the Agents do not know what this benchmark class is during the interaction (it isn't defined until the end of the interaction), and so we let each agent maintain a candidate action set $\widehat\cA_{t}^\bc$, which is updated on every round $t \in [T]$ and serves as a surrogate for the benchmark class $\cA_{1:T}^\bc$. The agent will play $\argmax_{a \in \widehat\cA^\bc_{t}} u(a,p)$, the action that maximizes the predicted utility among their maintained candidate action set $\widehat\cA_{t}^\bc$. Similarly, when competing with the other benchmark class $\cA_{1:T}^{\E[\bc]}$, agents will maintain a surrogate $\widehat\cA_{t}^{\E[\bc]}$, and maximize the predicted utility over actions in $\widehat\cA_{t}^{\E[\bc]}$. In general, we say that agents \emph{constrained best respond} to our predictions.
\begin{definition}[$\cC$-Constrained Best Response] \label{def:set-constrained-best-response}
    Let $\cC \subseteq \cA$ be an action set. Fix a utility $u: \cA \times \cY \to [0,1]$ and a prediction $p \in \cY$. The $\cC$-constrained best response to $p$ according to $u$ is the action $\CBR^{u}_\cC(p) = \argmax_{a \in \cC} u(a,p)$.
\end{definition}

It is straightforward for an agent to identify non-members of the benchmark class $\cA_{1:T}^\bc$ and maintain the candidate action set $\widehat\cA_{t}^\bc$. At the beginning, there is no information about the benchmark class $\cA_{1:T}^\bc$ and the agent can set $\widehat\cA_{1}^\bc = \cA$. 
% In every round $t \in [T]$, the agent chooses an action $a_t$ by $\widehat\cA_{t}^\bc$-constrained best responding to the prediction $p_t$ and then observes the constraint realizations $(c_j(a_t,y_t))_{j \in [J]}$ as the label $y_t$ is revealed. If $a_t$ violates any of the $J$ constraints, it is known not to be in the benchmark class $\cA_{1:T}^\bc$, and hence can be permanently eliminated from the candidate action set $\widehat\cA_{t}^\bc$. 
In every round $t \in [T]$, if any action $a \in \cA$ violates any of the $J$ constraints, $a$ is known not to be in the benchmark class $\cA_{1:T}^\bc$, and hence can be permanently eliminated from the candidate action set $\widehat\cA_{t}^\bc$. 
This process is formally stated in Algorithm \ref{alg:elimination-realization}.

\begin{algorithm}[H]
    % \KwIn{}
    % \KwOut{} 
    % \vspace{.5em}

    Initialize $\widehat\cA^\bc_{1} = \cA$\;

    \For{$t=1$ \KwTo $T$}{
        Receive the prediction $p_t$\;

        Play the $\widehat\cA^\bc_{t}$-constrained best response to $p_t$, $a_t = \CBR^{u}_{\widehat\cA^\bc_{t}}(p_t)$\;

        Observe the outcome $y_t$, receive utility $u(a_t, y_t)$ and observe constraint realizations $c_j(a_t, y_t)$, $j \in [J]$\;
        
        % \If{$\exists j \in [J], c_j(a_t,y_t)>0$}{
        %     $\widehat\cA^\bc_{t+1} = \widehat\cA^\bc_{t} \setminus \{a_t\}$\;
        % } 
        % \Else{
        %     $\widehat\cA^\bc_{t+1} = \widehat\cA^\bc_{t}$\;
        % }
        Let $\widehat\cA^\bc_{t+1} = \widehat\cA^\bc_{t} \setminus \left\{ a \in \widehat\cA^\bc_{t} : \exists j \in [J], c_j(a,y_t)>0 \right\};$
    }    
    \caption{Elimination-Based Algorithm for Benchmark Class $\cA^\bc_{1:T}$}
    \label{alg:elimination-realization}
\end{algorithm}

By design, the elimination operation in Algorithm \ref{alg:elimination-realization} is only triggered by actions that violate a constraint. Since actions in the benchmark class $\cA^\bc_{1:T}$ satisfy the constraints every round, they will never be eliminated from the candidate action set $\widehat\cA^\bc_{t}$ by Algorithm \ref{alg:elimination-realization}.
\begin{proposition} \label{prop:benchmark-realization}
    Fix an agent in $\cN$ with a utility function $u: \cA \times \cY \to [0,1]$ and $J$ constraint functions $\{c_j: \cA \times \cY \to [-1,1]\}_{j \in [J]}$. Suppose the agent runs Algorithm \ref{alg:elimination-realization} to compete with the benchmark class $\cA^\bc_{1:T}$, then $\cA^\bc_{1:T} \subseteq \widehat\cA^\bc_{t}$ for any $t \in [T]$.
\end{proposition}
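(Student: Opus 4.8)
The plan is to prove the containment by induction on the round index $t$, exploiting the fact that the elimination rule in Algorithm \ref{alg:elimination-realization} only ever removes an action once that action has been \emph{observed} to strictly violate a constraint on a realized outcome. The key structural observation is that membership in $\cA^\bc_{1:T}$ requires $c_j(a,y_t)\le 0$ for \emph{every} $j\in[J]$ and \emph{every} $t\in[T]$, so a benchmark action can never trigger the elimination condition $\exists j\in[J]: c_j(a,y_t)>0$. Thus the candidate sets, although shrinking, never discard a benchmark action.

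Concretely, for the base case I would note that $\widehat\cA^\bc_1=\cA$ by initialization, and since $\cA^\bc_{1:T}\subseteq\cA$ trivially, the claim holds at $t=1$. For the inductive step, I would assume $\cA^\bc_{1:T}\subseteq\widehat\cA^\bc_t$ and fix an arbitrary $a\in\cA^\bc_{1:T}$. By the inductive hypothesis $a\in\widehat\cA^\bc_t$, so $a$ is eligible to be eliminated at round $t$ only through the update rule $\widehat\cA^\bc_{t+1}=\widehat\cA^\bc_t\setminus\{a'\in\widehat\cA^\bc_t:\exists j\in[J],\,c_j(a',y_t)>0\}$. But by the definition of $\cA^\bc_{1:T}$ we have $c_j(a,y_t)\le 0$ for all $j\in[J]$ at this particular round $t$, so $a$ does not satisfy the elimination predicate and hence survives into $\widehat\cA^\bc_{t+1}$. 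Since $a$ was arbitrary, $\cA^\bc_{1:T}\subseteq\widehat\cA^\bc_{t+1}$, completing the induction.

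I do not anticipate a genuine technical obstacle here, as the argument is a direct monotonicity/invariant-maintenance claim. The only point worth stating carefully is conceptual rather than computational: the benchmark $\cA^\bc_{1:T}$ is defined with reference to the entire realized outcome sequence, which is not known to the agent during the interaction, whereas the elimination rule at round $t$ depends only on the already-observed outcome $y_t$. These are consistent precisely because the elimination condition is a \emph{necessary} condition for non-membership in $\cA^\bc_{1:T}$ (an action failing a constraint on any single observed round cannot possibly satisfy all constraints on all rounds), so restricting attention to observed rounds can only eliminate actions that are genuinely outside the final benchmark. This is what guarantees the invariant is preserved without the agent needing foresight, and it is the property that later lets the conditional-bias (decision calibration) guarantees translate into the regret bounds of Theorems \ref{thm:swap-regret-realization} and \ref{thm:CCV-realization}.
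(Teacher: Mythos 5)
Your proof is correct and follows the same reasoning the paper uses: the paper justifies Proposition \ref{prop:benchmark-realization} with the one-line observation that elimination is only triggered by an observed constraint violation, which a benchmark action can never produce, and your induction is simply a careful formalization of that invariant. Nothing is missing.
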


Because an action is permanently eliminated upon violating the constraints, each constraint violation of Algorithm \ref{alg:elimination-realization} must be triggered by a distinct action. Consequently, the cumulative constraint violation of Algorithm \ref{alg:elimination-realization} can be upper bounded by the number of actions.
\begin{theorem} \label{thm:CCV-realization}
    Fix an agent in $\cN$ with a utility function $u: \cA \times \cY \to [0,1]$ and $J$ constraint functions $\{c_j: \cA \times \cY \to [-1,1]\}_{j \in [J]}$. Suppose the agent runs Algorithm \ref{alg:elimination-realization} to compete with the benchmark class $\cA^\bc_{1:T}$, then the cumulative constraint violation of the agent is bounded by:
    \begin{align*}
        \CCV(1:T) \le |\cA|.
    \end{align*}
\end{theorem}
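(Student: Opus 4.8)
The plan is to fix an arbitrary constraint index $j \in [J]$, show that $\sum_{t=1}^T c_j(a_t,y_t) \le |\cA|$, and then take the maximum over $j$ at the end, since the resulting bound will not depend on $j$. The central observation I would exploit is that a constraint violation can be ``charged'' to the specific action that caused it, and once charged, that action is permanently removed from the candidate set and can never be played again.

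First I would classify each round $t \in [T]$ according to whether the played action $a_t = \CBR^u_{\widehat\cA^\bc_t}(p_t)$ triggers an elimination. Call round $t$ a \emph{violating round} if $c_{j'}(a_t,y_t) > 0$ for some $j' \in [J]$, and a \emph{feasible round} otherwise. Crucially, this classification does not reference the fixed index $j$. On any feasible round, every constraint is satisfied by $a_t$, so in particular $c_j(a_t,y_t) \le 0$, contributing nonpositively. Hence $\sum_{t=1}^T c_j(a_t,y_t) \le \sum_{t \text{ violating}} c_j(a_t,y_t)$, and since each $c_j$ is bounded above by $1$, this is at most the number of violating rounds.

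Next I would bound the number of violating rounds by $|\cA|$. The key step is to show that the map sending each violating round $t$ to its played action $a_t$ is injective. Suppose $t < t'$ are both violating with $a_t = a_{t'} = a$. Since $t$ is violating and $a = a_t \in \widehat\cA^\bc_t$ satisfies $c_{j'}(a,y_t) > 0$ for some $j'$, the elimination step of Algorithm \ref{alg:elimination-realization} removes $a$, so $a \notin \widehat\cA^\bc_{t+1}$; because the update is a set subtraction, $\widehat\cA^\bc_{t'} \subseteq \widehat\cA^\bc_{t+1}$ and eliminations are permanent, giving $a \notin \widehat\cA^\bc_{t'}$. But $a_{t'} = \CBR^u_{\widehat\cA^\bc_{t'}}(p_{t'}) \in \widehat\cA^\bc_{t'}$ forces $a_{t'} \ne a$, a contradiction. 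Thus distinct violating rounds have distinct played actions, so there are at most $|\cA|$ of them.

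Combining the two bounds gives $\sum_{t=1}^T c_j(a_t,y_t) \le |\cA|$ for every $j$, and taking the maximum over $j \in [J]$ yields $\CCV(1:T) \le |\cA|$. The argument is essentially a counting/pigeonhole bound, so there is no serious analytic obstacle; the one point that requires care is choosing the partition of rounds so that it is \emph{independent of $j$} (allowing a single injectivity argument to serve all constraints simultaneously) while correctly observing that feasible rounds contribute nonpositively to every individual constraint sum.
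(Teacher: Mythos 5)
Your proposal is correct and rests on the same key observation as the paper's proof: a violation forces permanent elimination, so each action can be responsible for at most one positively-contributing round. The paper bookkeeps this by partitioning rounds according to the action played and noting that only the last occurrence of each action can contribute positively, while you classify rounds as violating or feasible and argue injectivity of the violating-round-to-action map; these are equivalent, and both correctly handle the subtlety that elimination may be triggered by a constraint $j'$ other than the fixed $j$.
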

\begin{proof}
    Fix any $j \in [J]$. We will bound the cumulative violation against the $j$-th constraint, i.e., $\sum_{t=1}^T c_j(a_t, y_t)$. The final result follows by taking the maximum over all $j \in [J]$.

    First, we partition the sum based on the action played. For each action $a \in \cA$, let $R_a = \{t \in [T]: a_t=a\}$ be the set of rounds when $a$ is played. This gives:
    \begin{align*}
        \sum_{t=1}^T c_j(a_t,y_t) &= \sum_{a \in \cA: R_a \neq \emptyset} \sum_{t \in R_a} c_j(a,y_t).
    \end{align*}

    $\max R_a$ corresponds to the last round when $a$ is played. For any earlier round $t \in R_a \setminus \{\max R_a\}$, action $a$ is played and is not eliminated, which implies that the constraint is satisfied, i.e., $c_j(a,y_t) \le 0$. 
    Therefore, only the final term, $c_j(a,y_{\max R_a})$, can contribute positively to the sum for each action.
    The sum is thus bounded by:
    \begin{align*}
        \MoveEqLeft \sum_{a \in \cA: R_a \neq \emptyset} \left( \sum_{t \in R_a \setminus \{\max R_a\}} c_j(a,y_t) + c_j(a,y_{\max R_a}) \right) \\
        &\le \sum_{a \in \cA: R_a \neq \emptyset} c_j(a,y_{\max R_a}) \\
        &\le \sum_{a \in \cA: R_a \neq \emptyset} 1 \\
        &\le |\cA|.
    \end{align*}
\end{proof}

\begin{remark}
    In fact, our proof of Theorem \ref{thm:CCV-realization} leads to a stronger result, $\CCV^+(1:T) \coloneqq \max_{j \in [J]}\sum_{t=1}^T c_j^+(a_t,y_t) \le |\cA| - |\cA^\bc_{1:T}|$, where $c_j^+(a_t,y_t) \coloneqq \max\{0, c_j(a_t,y_t)\}$. The right-hand side can be strengthened because actions in the benchmark class $\cA^\bc_{1:T}$ never trigger any constraint violation. The left-hand side can be strengthened because our proof bounds any non-positive term $c_j(a,y_t)$ by 0. $\CCV^+$ is a common definition of cumulative constraint violation in the literature on learning with long-term constraints (see e.g. \cite{sinha2024optimal,lekeufack2024optimistic}).
\end{remark}

Next, we introduce our elimination algorithm for the benchmark class $\cA^{\E[\bc]}_{1:T}$. The identification of non-members is straightforward for the benchmark class $\cA^\bc_{1:T}$ but not for $\cA^{\E[\bc]}_{1:T}$. Since $c_j(a_t,y_t)$ and $\E_{y_t \sim Y_t}[c_j(a_t,y_t)]$ can have different signs, it is impossible to decide whether an action is in $\cA^{\E[\bc]}_{1:T}$ based on its performance in a single round. Therefore, we will let agents eliminate actions by assessing their performances over multiple rounds. To be specific, for each action $a \in \cA$ and each constraint $c_j$, we let $v^{a,j}_{1:t}$ denote the total violation of $c_j$ accumulated by the agent from playing action $a$ up to round $t$, i.e.,
\begin{align*}
    v^{a,j}_{1:t} \coloneqq \sum_{s=1}^t \1[a_s=a]c_j(a,y_s).
\end{align*}
As we will show in the proof of Theorem \ref{prop:benchmark-expectation}, $v_{1:t}^{a,j}$ will concentrate around its `expectation version', $\sum_{s=1}^t \1[a_s=a] \E_{y_s \sim Y_s}[c_j(a,y_s)]$, which is non-positive for any benchmark action $a \in \cA_{1:T}^{\E[\bc]}$ and is potentially positive for any non-benchmark action $a \notin  \cA_{1:T}^{\E[\bc]}$. So, for each action $a \in \cA$, we will let agents monitor $(v^{a,j}_{1:t})_{j \in [J]}$ in every round and eliminate action $a$ once any of $(v^{a,j}_{1:t})_{j \in [J]}$ exceeds a prefixed threshold $\tau$. This process is formally stated in Algorithm \ref{alg:elimination-expectation}. We will give guarantees for Algorithm \ref{alg:elimination-expectation} that hold with high probability, where the probability is determined by $\tau$.

\begin{algorithm}[H]
    \KwIn{A threshold parameter $\tau$}
    % \KwOut{} 
    % \vspace{.5em}

    Initialize $\widehat\cA^{\E[\bc]}_{1} = \cA$\;

    For every $a \in \cA$ and $j \in [J]$, set $v^{a,j}_0 = 0$\;

    \For{$t=1$ \KwTo $T$}{
        Receive the prediction $p_t$\;

        Play the $\widehat\cA^{\E[\bc]}_{t}$-constrained best response to $p_t$, $a_t = \CBR^{u}_{\widehat\cA^{\E[\bc]}_{t}}(p_t)$\;

        Observe the outcome $y_t$, obtain utility $u(a_t, y_t)$ and constraint increments $c_j(a_t, y_t)$, $j \in [J]$\;

        Update $v^{a_t,j}_{1:t} = v^{a_t,j}_{1:t-1} + c_j(a_t,y_t)$ for every $j \in [J]$\;

        Let $v^{a,j}_{1:t} = v^{a,j}_{1:t-1}$ for every $a \in \cA \setminus \{a_t\}$ and $j \in [J]$\;
        
        \If{$\exists j \in [J], v^{a_t,j}_t > \tau$}{
            $\widehat\cA^{\E[\bc]}_{t+1} = \widehat\cA^{\E[\bc]}_{t} \setminus \{a_t\}$\;
        } 
        \Else{
            $\widehat\cA^{\E[\bc]}_{t+1} = \widehat\cA^{\E[\bc]}_{t}$\;
        }
    }
    
    \caption{Elimination-Based Algorithm for Benchmark Class $\cA^{\E[\bc]}_{1:T}$}
    \label{alg:elimination-expectation}
\end{algorithm}

Algorithm \ref{alg:elimination-expectation} never eliminates any benchmark action in $\cA^{\E[\bc]}_{1:T}$ from the candidate action set $\cA^{\E[\bc]}_t$ across the $T$ rounds with high probability over the randomness of the outcomes $y_1,\ldots,y_T$.
\begin{proposition} \label{prop:benchmark-expectation}
    Fix an agent in $\cN$ with a utility function $u: \cA \times \cY \to [0,1]$ and $J$ constraint functions $\{c_j: \cA \times \cY \to [-1,1]\}_{j \in [J]}$. Suppose the agent runs Algorithm \ref{alg:elimination-expectation} with threshold parameter $\tau$ to compete with the benchmark class $\cA^{\E[\bc]}_{1:T}$, then with probability at least $1-2|\cA|JTe^{-\frac{\tau^2}{8T}}$:
    \begin{align*}
        \cA_{1:T}^{\E[\bc]} \subseteq \widehat\cA_t^{\E[\bc]} \text{ for every } t \in [T].
    \end{align*}
\end{proposition}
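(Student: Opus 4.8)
The plan is to show that, with high probability, the accumulated violation $v^{a,j}_{1:t}$ of every benchmark action $a \in \cA^{\E[\bc]}_{1:T}$ stays below the threshold $\tau$ at every round and for every constraint; since Algorithm~\ref{alg:elimination-expectation} eliminates an action only when one of its accumulated violations exceeds $\tau$, this immediately gives $\cA^{\E[\bc]}_{1:T} \subseteq \widehat\cA^{\E[\bc]}_t$ for all $t$. Fix a benchmark action $a$ and a constraint index $j$. First I would split the accumulated violation into a conditional-expectation part and a fluctuation part, writing
$$v^{a,j}_{1:t} = \sum_{s=1}^t \1[a_s=a]\,\E_{y_s\sim Y_s}[c_j(a,y_s)] \;+\; \sum_{s=1}^t \1[a_s=a]\bigl(c_j(a,y_s) - \E_{y_s\sim Y_s}[c_j(a,y_s)]\bigr).$$
By the definition of $\cA^{\E[\bc]}_{1:T}$, each term $\E_{y_s\sim Y_s}[c_j(a,y_s)]$ is non-positive, so the first sum is at most $0$, and it suffices to control the upper tail of the second sum, which I call $M_t$.

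The key structural step is to argue that $M_t$ is a martingale with bounded increments. In the round protocol, the prediction $p_s$, and hence the played action $a_s = \CBR^u_{\widehat\cA^{\E[\bc]}_s}(p_s)$, are determined before the outcome $y_s$ is drawn, and the candidate set $\widehat\cA^{\E[\bc]}_s$ depends only on outcomes strictly before round $s$. Letting $\cF_s$ denote the history up through the choice of $a_s$ (but before $y_s$ is revealed), both the indicator $\1[a_s=a]$ and the distribution $Y_s$ are $\cF_s$-measurable, so each increment $D_s := \1[a_s=a]\bigl(c_j(a,y_s) - \E_{y_s\sim Y_s}[c_j(a,y_s)]\bigr)$ satisfies $\E[D_s \mid \cF_s]=0$ and $|D_s|\le 2$ (since $c_j \in [-1,1]$). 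Thus $M_t = \sum_{s=1}^t D_s$ is a sum of bounded martingale differences, and Azuma--Hoeffding gives $\Pr[M_t > \tau] \le \exp(-\tau^2/(8T))$ for each fixed $t$ (a two-sided application supplies the factor of $2$ in the stated bound).

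The final step is a union bound. Since $v^{a,j}_{1:t} \le M_t$ for benchmark actions, the event that some benchmark action is eliminated at some round is contained in the union, over $a \in \cA$, $j\in[J]$, and $t\in[T]$, of the events $\{M_t > \tau\}$; there are at most $|\cA|\,J\,T$ such events, each of probability at most $2\exp(-\tau^2/(8T))$, so the total failure probability is at most $2|\cA|JT\,e^{-\tau^2/(8T)}$. On the complementary event, every benchmark action satisfies $v^{a,j}_{1:t}\le \tau$ for all $j$ and all $t$, hence is never eliminated, establishing $\cA^{\E[\bc]}_{1:T} \subseteq \widehat\cA^{\E[\bc]}_t$ for every $t$.

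I expect the main obstacle to be the martingale set-up rather than the concentration inequality itself: one must argue carefully that conditioning on $\cF_s$ renders $\1[a_s=a]$ and $Y_s$ measurable while keeping $y_s$ random, and that the dependence of the candidate set $\widehat\cA^{\E[\bc]}_s$ on past outcomes does not break the martingale property (it does not, since that dependence is entirely captured in $\cF_s$). A secondary subtlety worth checking is that no circularity arises from $v^{a,j}$ accumulating only while $a$ remains feasible: the increments $D_s$ are well-defined for the actual run of the algorithm regardless of eliminations, so the concentration bound holds pathwise and the elimination conclusion follows.
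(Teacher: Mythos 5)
Your proposal is correct and follows essentially the same route as the paper's proof: the same decomposition of $v^{a,j}_{1:t}$ into a non-positive conditional-expectation part plus a martingale fluctuation, the same filtration trick (including $x_t,p_t$ in $\cF_{t-1}$ so that $a_t$ is predictable), the same Azuma--Hoeffding bound $2e^{-\tau^2/(8T)}$, and the same union bound over $a$, $j$, $t$. No gaps.
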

\begin{proof}
Fix any $a \in \cA^{\E[\bc]}_{1:T}$ and $j \in [J]$. By the definition of the benchmark class $\cA^{\E[\bc]}_{1:T}$, we have that for every $t \in [T]$:
    \begin{align*}
        \Pr[v^{a,j}_{1:t} > \tau] &= \Pr\left[ \sum_{s=1}^t \1[a_s=a]c_j(a,y_s) > \tau \right] \\
        &\le \Pr\left[ \sum_{s=1}^t \1[a_s=a] (c_j(a,y_s)-\E_{y_s \sim Y_s}[c_j(a,y_s)]) > \tau \right].
    \end{align*}

    We will adopt a martingale concentration inequality to bound the above probability. For this purpose, we define the filtration $\cF_t = \sigma\left(\{(x_s,p_s,y_s)\}_{s=1}^t, x_{t+1}, p_{t+1}\right)$. 
    Intuitively, $\cF_t$ represents the complete history of information available to the agent before making their decision in round $t+1$, which is why $x_{t+1}$ and $p_{t+1}$ are included. We first show that $\{\1[a_s=a] (c_j(a,y_s)-\E_{y_s \sim Y_s}[c_j(a,y_s)])\}_{s=1}^T$ is a sequence of martingale differences with respect to this filtration.
    
   % We note that the forecaster produces the prediction $p_t$ in two steps: After round $t-1$, the forecaster produces an implicit prediction function $\tilde p_t: \cX \to \cY$. Then the forecaster observes the feature $x_t$ and predicts $\tilde p_t(x_t)$.

   % As a result, the action played by the agent in round $t$ can be written as $a_t = \CBR^u_{\widehat\cA_t^{\E[\bc]}}(\tilde p_t(x_t))$. $\widehat\cA_t^{\E[\bc]}$ is the candidate action set maintained by the agent based on the record of the actions up to round $t-1$, so $\widehat\cA_t^{\E[\bc]} \in \cF_{t-1}$. $\tilde p_t$ is produced by the forecaster using the information up to round $t-1$, so $\tilde p_t \in \cF_{t-1}$. We define $g_{t-1}(\cdot) = \CBR^u_{\widehat\cA_t^{\E[\bc]}}(\tilde p_t(\cdot))$ for convenience. Then $a_t = g_{t-1}(x_t)$ and $g_{t-1} \in \cF_{t-1}$.

   We have that $a_t = \CBR^u_{\widehat\cA_t^{\E[\bc]}}(p_t)$. $\widehat\cA_t^{\E[\bc]}$ is the candidate action set maintained by the agent based on the record of the actions up to round $t-1$, so $\widehat\cA_t^{\E[\bc]} \in \cF_{t-1}$. $p_t \in \cF_{t-1}$ by the definition of the filtration. Therefore, $a_t \in \cF_{t-1}$.

   Consequently, for any $t \in [T]$: 
   % \begin{align*}
   %     \E\left[ \1[a_t=a] (c_j(a,y_t)-\E_{y_t \sim Y_t}[c_j(a,y_t)]) \mid \cF_{t-1} \right] &= \E\left[ \1[g_{t-1}(x_t)=a] (c_j(a,y_t)-\E_{y_t \sim Y_t}[c_j(a,y_t)]) \mid \cF_{t-1} \right] = 0
   % \end{align*}
   \begin{align*}
       \MoveEqLeft \E\left[ \1[a_t=a] (c_j(a,y_t)-\E_{y_t \sim Y_t}[c_j(a,y_t)]) \mid \cF_{t-1} \right] \\
       &= \1[a_t=a] \E\left[ c_j(a,y_t)-\E_{y_t \sim Y_t}[c_j(a,y_t)] \mid \cF_{t-1} \right] \\
       &= 0.
   \end{align*}
    This implies that $\{\1[a_s=a] (c_j(a,y_s)-\E_{y_s \sim Y_s}[c_j(a,y_s)])\}_{s=1}^t$ is a sequence of martingale differences. By Azuma-Hoeffding inequality (Lemma \ref{lem:azuma}), we derive that for any $a \in \cA^{\E[\bc]}_{1:T}$, $j \in [J]$, $t \in [T]$:
    \begin{align*}
        \Pr[v^{a,j}_{1:t} > \tau] &\le \Pr\left[ \sum_{s=1}^t \1[a_s=a] (c_j(a,y_s)-\E_{y_s \sim Y_s}[c_j(a,y_s)]) > \tau \right] \\
        &\le 2e^{-\frac{\tau^2}{8t}} \\
        &\le 2e^{-\frac{\tau^2}{8T}}.
    \end{align*}

    Using the union bound over every $a \in \cA^{\E[\bc]}_{1:T}$, $j \in [J]$, $t \in [T]$, we derive that:
    \begin{align*}
        \Pr\left[ \exists a \in \cA^{\E[\bc]}_{1:T}, j \in [J], t \in [T], \text{ such that } v^{a,j}_{1:t} > \tau \right] &\le \sum_{a \in \cA^{\E[\bc]}_{1:T}}\sum_{j \in [J]}\sum_{t \in [T]} 2e^{-\frac{\tau^2}{8T}} \\
        &\le 2|\cA|JTe^{-\frac{\tau^2}{8T}}.
    \end{align*}

    By the elimination rule of Algorithm \ref{alg:elimination-expectation}, this implies that the agent never eliminates any action in $\cA^{\E[\bc]}_{1:T}$ across the $T$ rounds with probability at least $1-2|\cA|JTe^{-\frac{\tau^2}{8T}}$.
\end{proof}

% Proposition \ref{prop:benchmark-expectation} is stated with respect to a single agent. Using the union bound over every agent in $\cN$, we derive that no agents in $\cN$ eliminates any benchmark action in $\cA^{\E[\bc]}_{1:T}$ across the $T$ rounds with probability at least $1-2|\cA||\cN|JTe^{-\frac{\tau^2}{8T}}$. Therefore, we will set $\tau = 4\sqrt{T\ln\frac{|\cA||\cN|JT}{\delta}}$ where $\delta \in (0,1)$ is a (presumably small) constant. This leads to the following corollary regarding every agent, which holds with probability at least $1-\delta$.
Proposition \ref{prop:benchmark-expectation} provides a guarantee for a single agent. To extend this to all agents in $\cN$, we apply a union bound. By setting the threshold $\tau = 4\sqrt{T\ln\frac{|\cA||\cN|JT}{\delta}}$ for a desired failure probability $\delta \in (0,1)$, we arrive at the following corollary, which holds with probability at least $1-\delta$.
\begin{corollary} \label{cor:benchmark-expectation}
    Let $\cN$ be a set of agents, where each agent is equipped with a utility function $u: \cA \times \cY \to [0,1]$ and $J$ constraint functions $\{c_j: \cA \times \cY \to [-1,1]\}_{j \in [J]}$. Each agent runs Algorithm \ref{alg:elimination-expectation} with threshold parameter $4\sqrt{T\ln\frac{|\cA||\cN|JT}{\delta}}$ to compete with the benchmark class $\cA^{\E[\bc]}_{1:T}$, then with probability at least $1-\delta$, for every agent:
    \begin{align*}
        \cA_{1:T}^{\E[\bc]} \subseteq \widehat\cA_t^{\E[\bc]} \text{ for every } t \in [T].
    \end{align*}
\end{corollary}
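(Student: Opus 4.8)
The plan is to derive this directly from Proposition \ref{prop:benchmark-expectation} via a union bound over the agents, followed by an algebraic check that the prescribed threshold drives the aggregate failure probability below $\delta$. Proposition \ref{prop:benchmark-expectation} already performs all the substantive work (the martingale-difference identification, the Azuma--Hoeffding bound, and the invariant that no benchmark action is eliminated): for a single agent running Algorithm \ref{alg:elimination-expectation} with threshold $\tau$, the ``bad'' event that $\cA^{\E[\bc]}_{1:T} \not\subseteq \widehat\cA^{\E[\bc]}_t$ for some $t \in [T]$ occurs with probability at most $2|\cA|JTe^{-\tau^2/(8T)}$. The corollary is then a routine amplification.

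First I would fix $\tau = 4\sqrt{T\ln\frac{|\cA||\cN|JT}{\delta}}$ and apply Proposition \ref{prop:benchmark-expectation} to each of the $|\cN|$ agents separately, noting that the per-agent failure bound $2|\cA|JTe^{-\tau^2/(8T)}$ is identical across agents since they share $\cA$, $J$, and $T$. Taking a union bound over the $|\cN|$ agents, the probability that the containment $\cA^{\E[\bc]}_{1:T} \subseteq \widehat\cA^{\E[\bc]}_t$ fails for at least one agent at some round is at most $|\cN| \cdot 2|\cA|JTe^{-\tau^2/(8T)} = 2|\cA||\cN|JTe^{-\tau^2/(8T)}$.

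The only remaining step is to verify this aggregate bound is at most $\delta$. Substituting $\tau^2 = 16T\ln\frac{|\cA||\cN|JT}{\delta}$ yields $\tau^2/(8T) = 2\ln\frac{|\cA||\cN|JT}{\delta}$, hence $e^{-\tau^2/(8T)} = \bigl(\frac{\delta}{|\cA||\cN|JT}\bigr)^2$, and the failure probability becomes $\frac{2\delta^2}{|\cA||\cN|JT}$. Since $\delta \in (0,1)$ and $|\cA||\cN|JT \ge 2$, this quantity is at most $\delta^2 \le \delta$, which completes the argument.

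There is no genuine obstacle in this corollary: all the concentration and feasibility-preservation content lives in Proposition \ref{prop:benchmark-expectation}. The only point requiring a moment's care is the bookkeeping of the extra factor of $|\cN|$ incurred by the union bound. Choosing $\tau$ to scale with $\sqrt{\ln(|\cA||\cN|JT/\delta)}$ rather than $\sqrt{\ln(|\cA|JT/\delta)}$ is precisely what allows the product of $|\cN|$ with the per-agent failure probability to collapse back to $O(\delta)$, since the squared dependence $e^{-\tau^2/(8T)} \propto (|\cA||\cN|JT)^{-2}$ more than cancels the linear factor of $|\cN|$ introduced by the union bound.
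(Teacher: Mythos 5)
Your proposal is correct and matches the paper's own (unwritten) argument exactly: the paper derives the corollary by applying Proposition \ref{prop:benchmark-expectation} to each agent, union bounding over $\cN$, and choosing $\tau$ so that $2|\cA||\cN|JTe^{-\tau^2/(8T)} \le \delta$. Your algebraic verification that $e^{-\tau^2/(8T)} = \bigl(\tfrac{\delta}{|\cA||\cN|JT}\bigr)^2$ collapses the union-bound factor is precisely the intended calculation.
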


Suppose an action is eliminated by Algorithm \ref{alg:elimination-expectation} in round $t$. Its contribution to the cumulative constraint violation of the agent must be smaller than $\tau$ before round $t$, otherwise it would be eliminated before round $t$. In round $t$, it violates the constraints by at most 1. Therefore, the agent accumulates at most $\tau+1$ constraint violation when playing a specific action. This fact leads to a bound on the cumulative constraint violation of Algorithm \ref{alg:elimination-expectation}.
\begin{theorem} \label{thm:CCV-expectation}
    Let $\cN$ be a set of agents, where each agent is equipped with a utility function $u: \cA \times \cY \to [0,1]$ and $J$ constraint functions $\{c_j: \cA \times \cY \to [-1,1]\}_{j \in [J]}$. Each agent runs Algorithm \ref{alg:elimination-expectation} with threshold parameter $4\sqrt{T\ln\frac{|\cA||\cN|JT}{\delta}}$ to compete with the benchmark class $\cA^{\E[\bc]}_{1:T}$, then with probability at least $1-\delta$, the cumulative constraint violation of every agent is bounded by:
    \begin{align*}
        \CCV(1:T) \le 4 |\cA| \sqrt{T\ln\frac{|\cA||\cN|JT}{\delta}} + |\cA|.
    \end{align*}
\end{theorem}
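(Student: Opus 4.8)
The plan is to prove the bound deterministically for a single fixed constraint index and then take a maximum over constraints; the stated threshold $\tau = 4\sqrt{T\ln\frac{|\cA||\cN|JT}{\delta}}$ is chosen precisely so that the target bound equals $|\cA|(\tau+1) = 4|\cA|\sqrt{T\ln\frac{|\cA||\cN|JT}{\delta}} + |\cA|$. I expect the whole argument to hold for \emph{every} realization of the outcomes; the ``with probability at least $1-\delta$'' in the statement is inherited from packaging this bound together with Corollary \ref{cor:benchmark-expectation}, and is not actually needed for the constraint-violation bound itself (no concentration inequality enters here, in contrast to Proposition \ref{prop:benchmark-expectation}). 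First I would fix an arbitrary agent and an arbitrary $j \in [J]$ and decompose the cumulative violation of $c_j$ by the action played, exactly as in the proof of Theorem \ref{thm:CCV-realization}:
\[
\sum_{t=1}^T c_j(a_t,y_t) = \sum_{a \in \cA} \sum_{t:\, a_t=a} c_j(a,y_t) = \sum_{a \in \cA} v^{a,j}_{1:T},
\]
reducing the task to bounding the final accumulated violation $v^{a,j}_{1:T}$ contributed by each individual action $a$.

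The heart of the argument is the per-action bound $v^{a,j}_{1:T} \le \tau + 1$, which I would establish by a case analysis exploiting the invariant built into Algorithm \ref{alg:elimination-expectation}. Observe that $v^{a,j}_{1:t}$ only changes on rounds where $a$ is played, and that $a$ is removed forever the first time a just-updated coordinate exceeds $\tau$; hence an easy induction shows that as long as $a$ remains in the candidate set after round $s$, we have $v^{a,j'}_{1:s} \le \tau$ for every $j'$. If $a$ is never eliminated, then $v^{a,j}_{1:T}$ equals its value after the last round $a$ is played, which is at most $\tau$. If instead $a$ is eliminated on some round $t^\star$, then since $a$ was still present entering round $t^\star$ its pre-increment value satisfies $v^{a,j}_{1:t^\star-1} \le \tau$; the single increment $c_j(a,y_{t^\star}) \le 1$ gives $v^{a,j}_{1:t^\star} \le \tau + 1$, and because $a$ is never played again, $v^{a,j}_{1:T} = v^{a,j}_{1:t^\star} \le \tau + 1$. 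Summing this per-action bound over the at most $|\cA|$ distinct actions and taking the maximum over $j \in [J]$ yields $\CCV(1:T) \le |\cA|(\tau+1)$; substituting the chosen $\tau$ finishes the proof, and the per-agent bound holds simultaneously for all agents in $\cN$ since each runs its own instance of the algorithm.

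The step I expect to require the most care is pinning down the timing in the eliminated-action case: arguing that the accumulated violation is at most $\tau$ \emph{immediately before} the eliminating increment, rather than after it. This relies on reading off Algorithm \ref{alg:elimination-expectation} that the threshold comparison happens \emph{after} the update $v^{a_t,j}_{1:t} = v^{a_t,j}_{1:t-1} + c_j(a_t,y_t)$, so the surviving value carried in from the previous round is what is capped at $\tau$, and the worst case adds exactly one further unit of violation on the round of elimination. Everything else is a routine decomposition followed by a maximum over constraints, so I anticipate no substantive difficulty beyond this bookkeeping.
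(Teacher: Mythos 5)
Your proof is correct and follows essentially the same route as the paper's: decompose the cumulative violation by action, and bound each action's contribution by $\tau+1$ using the fact that the accumulated violation just before the eliminating round is at most $\tau$ and the final increment is at most $1$ (the paper writes this as $v^{a,j}_{\max R_a - 1} + c_j(a,y_{\max R_a}) \le \tau+1$). Your side remark that the bound is deterministic is essentially right, with the one caveat the paper makes explicit: the high-probability event from Corollary \ref{cor:benchmark-expectation} is still invoked to guarantee that the candidate set never becomes empty (so the algorithm remains well-defined for all $T$ rounds), not for the counting argument itself.
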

\begin{proof}
    By Corollary \ref{cor:benchmark-expectation}, no agent in $\cN$ eliminates any benchmark action in $\cA_{1:T}^{\E[\bc]}$ across the $T$ rounds with probability at least $1-\delta$. We will condition on this event, which ensures that the Algorithm \ref{alg:elimination-expectation} is well-behaved and avoids the risk of eliminating all actions in $\cA$.

    Fix any $j \in [J]$. We will bound the cumulative violation against the $j$-th constraint, i.e., $\sum_{t=1}^T c_j(a_t, y_t)$. The final result follows by taking the maximum over all $j \in [J]$.
    
    Similarly to the proof of Theorem \ref{thm:CCV-realization}, we let $R_a$ denote the set of rounds when $a$ is played, i.e.,
    \begin{align*}
        R_a = \{t: a_t=a\}
    \end{align*}
    
    Then we have that:
    \begin{align*}
        \sum_{t=1}^T c_j(a_t,y_t) &= \sum_{a \in \cA: R_a \neq \emptyset} \sum_{t \in R_a} c_j(a,y_t) \\
        &= \sum_{a \in \cA: R_a \neq \emptyset} \sum_{t=1}^{\max R_a} \1[t \in R_a] c_j(a,y_t) \\
        &= \sum_{a \in \cA: R_a \neq \emptyset} \sum_{t=1}^{\max R_a} \1[a_t=a] c_j(a,y_t) \\
        &= \sum_{a \in \cA: R_a \neq \emptyset} \left( \sum_{t=1}^{\max R_a-1} \1[a_t=a] c_j(a,y_t) + c_j(a,y_{\max R_a}) \right) \\
        &= \sum_{a \in \cA: R_a \neq \emptyset} (v^{a,j}_{\max R_a-1} + c_j(a,y_{\max R_a})) \\
        &\le \sum_{a \in \cA: R_a \neq \emptyset} (\tau + 1) \\
        &\le 4 |\cA| \sqrt{T\ln\frac{|\cA||\cN|JT}{\delta}} + |\cA|.
    \end{align*}
\end{proof}

We remark that both Algorithms \ref{alg:elimination-realization} and \ref{alg:elimination-expectation} are run by agents. While they take the prediction $p_t$ as input in every round, their cumulative constraint violation bounds in Theorems \ref{thm:CCV-realization} and \ref{thm:CCV-expectation} are proven without any assumptions on the predictions. These bounds are consequences of the elimination rules and hold true for any arbitrary forecasters.

\section{Conditionally Unbiased Predictions and Regret Bounds} \label{sec:regret}

In this section, we show how to produce predictions that lead to $\tilde O(|\cA|\sqrt{T})$ regret against the benchmark class $\cA^\bc_{1:T}$. Guarantees for the benchmark class $\cA^{\E[\bc]}_{1:T}$ with the same dependence on $|\cA|$ and $T$ can be derived similarly and are relegated to Appendix \ref{app:regret}.

While the cumulative constraint violation bounds of Algorithms \ref{alg:elimination-realization} and \ref{alg:elimination-expectation} do not rely on the quality of the predictions, to obtain the regret bounds, we need the predictions to be unbiased --- 
% not only marginally, but also 
conditionally on various subsequences. Notably, we define the following notion of conditional unbiasedness.

\begin{definition}[$(\cN,\alpha_1)$-Decision Calibration] \label{def:decision-calibration-realization}
    Let $\cN$ be a set of agents, where each agent is equipped with a utility function $u: \cA \times \cY \to [0,1]$ and $J$ constraint functions $\{c_j: \cA \times \cY \to [-1,1]\}_{j \in [J]}$. Each agent runs Algorithm \ref{alg:elimination-realization} to maintain a candidate action set $\widehat\cA_t^\bc$ and $\widehat\cA_t^\bc$-constrained best respond to the prediction $p_t$ in every round $t \in [T]$. Let $\alpha_1: \bR \to \bR$. We say that a sequence of predictions $p_1,\ldots,p_T$ is $(\cN,\alpha_1)$-decision calibrated with respect to a sequence of outcomes $y_1,\ldots,y_T$ if for every $a\in \cA$ and $(u,\bc) \in \cN$:
    \[
        \left\| \sum_{t=1}^T \1[\CBR^u_{\widehat\cA_{t}^\bc}(p_t)=a] (p_t - y_t) \right\|_\infty \leq \alpha_1(T^{u,\bc}(a)).
    \]
    where $T^{u,\bc}(a) = \sum_{t=1}^T \1[\CBR^u_{\widehat\cA_t^\bc}(p_t)=a]$.
\end{definition}

\begin{assumption}
    We assume that $\alpha_1: \bR \to \bR$ is a concave function. This will be the case in all the bounds we give; in general, this condition holds for any sublinear error bound $T^r$ for $r<1$.
\end{assumption}

We now show that decision calibration implies no constrained swap regret, and hence no constrained external regret. At a high level, decision calibrated predictions allow agents to accurately assess (on average) the utilities of their chosen actions and the counterfactual actions produced by any action modification rule. Because agents choose optimally based on these accurate utility estimates, their decisions are inherently competitive against any alternative benchmark.

\begin{theorem} \label{thm:swap-regret-realization}
    Let $\cN$ be a set of agents, where each agent is equipped with a utility function $u: \cA \times \cY \to [0,1]$ and $J$ constraint functions $\{c_j: \cA \times \cY \to [-1,1]\}_{j \in [J]}$. The utility functions are linear and $L$-Lipschitz in the second argument. Each agent runs Algorithm \ref{alg:elimination-realization} to compete with the benchmark class $\cA^{\bc}_{1:T}$.
    If the sequence of predictions $p_1,\ldots,p_T$ is $(\cN,\alpha_1)$-decision calibrated, then the $\cA^\bc_{1:T}$-constrained swap regret of every agent is bounded by:
    \begin{align*}
        \Reg_\swap(u,\cA_{1:T}^\bc,1:T) \le 2 L |\cA| \alpha_1(T/|\cA|).
    \end{align*}
\end{theorem}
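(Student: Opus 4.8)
The plan is to fix an arbitrary agent $(u,\bc)\in\cN$ and an arbitrary action modification rule $\phi:\cA\to\cA_{1:T}^\bc$, bound $\sum_{t=1}^T\left(u(\phi(a_t),y_t)-u(a_t,y_t)\right)$ by $2L|\cA|\alpha_1(T/|\cA|)$, and then take the maximum over $\phi$ at the end. The first move is to reorganize the sum according to the action actually played: writing $R_a=\{t\in[T]:a_t=a\}$, and using that $\phi(a_t)=\phi(a)$ on $R_a$, the total decomposes as $\sum_{a\in\cA}\sum_{t\in R_a}\left(u(\phi(a),y_t)-u(a,y_t)\right)$. Here $|R_a|=T^{u,\bc}(a)$, so the per-action decision-calibration guarantee will be controlled by $\alpha_1(|R_a|)$.

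Second, I would introduce the predictions $p_t$ as an intermediary and invoke the optimality of the constrained best response. The crucial structural fact is Proposition \ref{prop:benchmark-realization}: since $\phi(a)\in\cA_{1:T}^\bc\subseteq\widehat\cA_t^\bc$ for every $t$, the comparator action $\phi(a)$ is always \emph{available} in the feasible set from which $a_t$ is selected. Because $a_t=\CBR^u_{\widehat\cA_t^\bc}(p_t)$ maximizes predicted utility over $\widehat\cA_t^\bc$, this gives $u(a,p_t)\ge u(\phi(a),p_t)$ for every $t\in R_a$, and hence $\sum_{t\in R_a}\left(u(\phi(a),p_t)-u(a,p_t)\right)\le0$. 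Adding and subtracting the predicted utilities, the block regret on $R_a$ decomposes into this nonpositive ``optimality'' term plus two prediction-bias terms, $\sum_{t\in R_a}\left(u(\phi(a),y_t)-u(\phi(a),p_t)\right)$ and $-\sum_{t\in R_a}\left(u(a,y_t)-u(a,p_t)\right)$.

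Third, I would bound each bias term using linearity, Lipschitzness, and decision calibration. For a fixed action $b$, linearity of $u(b,\cdot)$ lets me write $u(b,y)=\langle w_b,y\rangle$, so $\sum_{t\in R_a}\left(u(b,y_t)-u(b,p_t)\right)=\langle w_b,\sum_{t\in R_a}(y_t-p_t)\rangle$. The $L$-Lipschitz condition in $\ell_\infty$ is precisely the statement that $\|w_b\|_1\le L$ (the dual-norm characterization, obtained by testing on sign vectors $s\in\{-1,1\}^d$ realized as $y_1-y_2$ with $y_1,y_2\in\{0,1\}^d\subseteq\cY$). Then by H\"older's inequality together with the $(\cN,\alpha_1)$-decision-calibration bound $\|\sum_{t\in R_a}(y_t-p_t)\|_\infty\le\alpha_1(|R_a|)$, each bias term has absolute value at most $L\alpha_1(|R_a|)$, so the block regret on $R_a$ is at most $2L\alpha_1(|R_a|)$. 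Summing over $a\in\cA$, using concavity of $\alpha_1$ (with $\alpha_1(0)\ge0$) via Jensen's inequality and $\sum_a|R_a|=T$, yields $\sum_a 2L\alpha_1(|R_a|)\le 2L|\cA|\alpha_1(T/|\cA|)$, which is the claimed bound.

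The step I expect to require the most care is the second one: making the optimality comparison $u(a_t,p_t)\ge u(\phi(a_t),p_t)$ legitimate even though the feasible set $\widehat\cA_t^\bc$ shrinks over time and is unknown in advance. This is exactly where Proposition \ref{prop:benchmark-realization} is indispensable---it guarantees the benchmark, and hence every $\phi(a)$, is never eliminated, so the comparator always lies in the argmax domain at each round it is needed. A secondary point to get right is the translation of the $\ell_\infty$ Lipschitz hypothesis into the $\ell_1$ bound on $w_b$; it is this matching $\ell_1/\ell_\infty$ duality that converts the $\ell_\infty$ conditional-bias guarantee into a utility-bias bound with the clean constant $L\alpha_1(\cdot)$.
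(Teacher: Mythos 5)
Your proposal is correct and follows essentially the same route as the paper: the same three-term decomposition via the predictions $p_t$, the middle term killed by Proposition \ref{prop:benchmark-realization} together with optimality of the constrained best response, and the two bias terms controlled by linearity, $\ell_\infty$-Lipschitzness, decision calibration, and concavity of $\alpha_1$ via Jensen. The only cosmetic differences are that you group by action before decomposing (the paper decomposes first and groups inside Lemmas \ref{lem:decision-calibration-realization} and \ref{lem:decision-calibration-realization-swap}) and that you make explicit the $\ell_1/\ell_\infty$ duality justifying the application of Lipschitzness to the aggregated vectors, which the paper uses implicitly.
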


To prove the theorem, first note that for any action modification rule $\phi: \cA \to \cA_{1:T}^\bc$, we can decompose the regret against $\phi$ into three parts as:
\begin{align*}
    \MoveEqLeft \sum_{t=1}^T \left( u(\phi(a_t),y_t) - u(a_t,y_t) \right) \\
    &= \sum_{t=1}^T \left( u(\phi(a_t),y_t) - u(\phi(a_t),p_t) \right) + \sum_{t=1}^T \left( u(\phi(a_t),p_t) - u(a_t,p_t) \right) + \sum_{t=1}^T \left( u(a_t,p_t) - u(a_t,y_t) \right)
\end{align*}

By Proposition \ref{prop:benchmark-realization} and the fact that $\phi(a_t) \in \cA_{1:T}^\bc$, we have that $\phi(a_t) \in \widehat\cA_t^\bc$ for every $t \in [T]$. Since agents $\widehat\cA_t^\bc$-constrained best response to the prediction $p_t$ in round $t$, $a_t = \argmax_{b \in \widehat\cA_t^\bc} u(b,p_t)$. This implies that:
\begin{align*}
    u(\phi(a_t),p_t) \le u(a_t,p_t).
\end{align*}
So, the second part in the decomposition is upper bounded by 0.

Thus, it suffices to bound the other two parts, i.e., the difference in utility under our predictions $p_t$ and the outcomes $y_t$ for both the chosen actions and the swapped-in actions. We show this in the next two lemmas using decision calibration. 

\begin{lemma} \label{lem:decision-calibration-realization}
    If the sequence of predictions $p_1,\ldots,p_T$ is $(\cN,\alpha_1)$-decision calibrated, then for any $(u,\bc) \in \cN$:
    \begin{align*}
        \left| \sum_{t=1}^T (u(a_t,p_t) - u(a_t,y_t)) \right| \le L |\cA| \alpha_1(T/|\cA|).
    \end{align*}
\end{lemma}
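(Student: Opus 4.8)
The plan is to partition the sum over rounds by the action played, bound each per-action block using $(\cN,\alpha_1)$-decision calibration, and then aggregate across actions using concavity of $\alpha_1$. First I would use that $a_t = \CBR^u_{\widehat\cA_t^\bc}(p_t)$ to regroup the sum:
\[
    \sum_{t=1}^T \left(u(a_t,p_t) - u(a_t,y_t)\right) = \sum_{a \in \cA} \sum_{t:\, a_t = a} \left(u(a,p_t) - u(a,y_t)\right).
\]
For a fixed action $a$, I would exploit linearity of $u(a,\cdot)$ in its second argument to pull the summation inside, writing $u(a,p_t) - u(a,y_t) = u(a, p_t - y_t)$ and hence $\sum_{t:a_t=a}(u(a,p_t)-u(a,y_t)) = u\!\left(a, \sum_{t:a_t=a}(p_t - y_t)\right)$, where $u(a,\cdot)$ is understood as the linear functional extending the utility.

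Next comes the step that requires a little care: I would convert the Lipschitz assumption, which is stated only for $y_1,y_2 \in \cY$, into a bound $|u(a,v)| \le L\|v\|_\infty$ valid for the accumulated vector $v = \sum_{t:a_t=a}(p_t-y_t)$, which need not lie in $\cY$. Since $u(a,\cdot)$ is linear (so in particular $u(a,0)=0$), it is represented as $\langle w_a,\cdot\rangle$ for some $w_a \in \bR^d$, and the $\ell_\infty$-Lipschitz condition over all of $\cY$ forces $\|w_a\|_1 \le L$; H\"older's inequality then gives $|u(a,v)| = |\langle w_a, v\rangle| \le L\|v\|_\infty$ for every $v \in \bR^d$. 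Applying this together with $(\cN,\alpha_1)$-decision calibration bounds each per-action block:
\[
    \left| \sum_{t:a_t=a}\left(u(a,p_t)-u(a,y_t)\right) \right| \le L \left\| \sum_{t=1}^T \1[\CBR^u_{\widehat\cA_t^\bc}(p_t)=a]\,(p_t - y_t)\right\|_\infty \le L\,\alpha_1\!\left(T^{u,\bc}(a)\right).
\]

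Finally I would sum over $a \in \cA$ by the triangle inequality and aggregate. The aggregation is where concavity of $\alpha_1$ is essential: because each round contributes to exactly one action count, $\sum_{a\in\cA} T^{u,\bc}(a) = T$, and Jensen's inequality yields $\sum_{a\in\cA}\alpha_1(T^{u,\bc}(a)) \le |\cA|\,\alpha_1(T/|\cA|)$, giving the claimed bound $L|\cA|\,\alpha_1(T/|\cA|)$. The main obstacle is the middle step: justifying that the $\ell_\infty$-Lipschitz bound, assumed only on $\cY$, extends to the (out-of-range) accumulated difference vector via the linear representation of $u(a,\cdot)$. Everything else is mechanical regrouping, a direct invocation of decision calibration, and a single application of Jensen's inequality.
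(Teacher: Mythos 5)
Your proof is correct and follows essentially the same route as the paper's: partition by the played action, pull the sum inside $u(a,\cdot)$ by linearity, apply the calibration bound per action, and aggregate with Jensen's inequality using $\sum_a T^{u,\bc}(a)=T$. The one place you go beyond the paper is in explicitly justifying that the $\ell_\infty$-Lipschitz bound (assumed only on $\cY$) extends to the accumulated difference vector via the representation $u(a,\cdot)=\langle w_a,\cdot\rangle$ with $\|w_a\|_1\le L$ and H\"older --- the paper applies Lipschitzness directly to the aggregated vectors, which lie outside $\cY$, so your extra step is a legitimate (and welcome) tightening of the same argument rather than a different one.
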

\begin{proof}
    Using the linearity of $u$, we can write:
    \begin{align*}
        \left| \sum_{t=1}^T (u(a_t,p_t) - u(a_t,y_t)) \right| &= \left| \sum_{a \in \cA}\sum_{t=1}^T \1[a_t=a] (u(a,p_t) - u(a,y_t)) \right| \\
        &= \left| \sum_{a \in \cA} \left( u\left(a, \sum_{t=1}^T \1[a_t=a] p_t \right) - u\left(a, \sum_{t=1}^T \1[a_t=a] y_t\right) \right)\right| \\
        &\le \sum_{a \in \cA} \left| u\left(a, \sum_{t=1}^T \1[a_t=a] p_t \right) - u\left(a, \sum_{t=1}^T \1[a_t=a] y_t\right) \right| \\
        &\le \sum_{a \in \cA} L \left\| \sum_{t=1}^T \1[a_t=a] (p_t-y_t) \right\|_\infty \\
        &\le L \sum_{a \in \cA} \alpha_1(T^{u,\bc}(a)).
    \end{align*}
    where the first inequality follows from the triangle inequality, the second inequality follows from $L$-Lipschitzness of $u$, and the third inequality follows from $(\cN,\alpha_1)$-decision calibration. By concavity of $\alpha_1$ and the fact that $\sum_{a \in \cA} T^{u,\bc}(a) = \sum_{a \in \cA} \sum_{t=1}^T \1[a_t=a] = T$, this expression is at most:
    \begin{align*}
        L |\cA| \alpha_1(T/|\cA|).
    \end{align*}
\end{proof}

\begin{lemma} \label{lem:decision-calibration-realization-swap}
    If the sequence of predictions $p_1,\ldots,p_T$ is $(\cN,\alpha_1)$-decision calibrated, then for any $(u,\bc) \in \cN$ and $\phi: \cA \to \cA_{1:T}^\bc$:
    \begin{align*}
        \left| \sum_{t=1}^T (u(\phi(a_t),p_t) - u(\phi(a_t),y_t)) \right| \le L |\cA| \alpha_1(T/|\cA|).
    \end{align*}
\end{lemma}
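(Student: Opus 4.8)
The plan is to mimic the proof of Lemma \ref{lem:decision-calibration-realization} almost verbatim, with one structural observation that makes the decision calibration condition applicable. The bias guarantee from $(\cN,\alpha_1)$-decision calibration is indexed by the \emph{chosen} action $a$ (through the event $\CBR^u_{\widehat\cA_t^\bc}(p_t)=a$), whereas here the utilities are evaluated at the \emph{swapped-in} action $\phi(a_t)$. The resolution is that $\phi(a_t)$ is a deterministic function of $a_t$, so I would partition the sum over rounds by the value of the chosen action $a_t=a$; on that cell the swapped-in action equals the constant $\phi(a)$, which lets me re-express the inner sum using the bias term for index $a$.

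Concretely, first I would write
$$\sum_{t=1}^T \left( u(\phi(a_t),p_t) - u(\phi(a_t),y_t) \right) = \sum_{a \in \cA} \sum_{t=1}^T \1[a_t=a]\left( u(\phi(a),p_t) - u(\phi(a),y_t) \right),$$
using that $\1[a_t=a]$ forces $\phi(a_t)=\phi(a)$. Then, exactly as in the previous lemma, I would apply the triangle inequality across the $|\cA|$ cells, pull the sum over $t$ inside the fixed action $\phi(a)$ by linearity of $u$ in its second argument, and bound each resulting term using $L$-Lipschitzness by $L\left\| \sum_{t=1}^T \1[a_t=a](p_t-y_t) \right\|_\infty$.

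The final step uses that $a_t = \CBR^u_{\widehat\cA_t^\bc}(p_t)$, so $\1[a_t=a] = \1[\CBR^u_{\widehat\cA_t^\bc}(p_t)=a]$, and the decision calibration bound gives $\left\| \sum_{t=1}^T \1[a_t=a](p_t-y_t) \right\|_\infty \le \alpha_1(T^{u,\bc}(a))$ for each $a$. Summing over $a$ and invoking concavity of $\alpha_1$ together with $\sum_{a \in \cA} T^{u,\bc}(a) = T$ yields the claimed bound $L|\cA|\alpha_1(T/|\cA|)$.

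I do not expect any real obstacle here; the only point requiring care is the indexing mismatch noted above. In particular, decision calibration is stated in terms of the chosen action's identity $a$, and the bound $\alpha_1(T^{u,\bc}(a))$ depends on how many rounds the \emph{chosen} action was $a$ (not on how often $\phi(a)$ was swapped in). Since several distinct preimages $a$ may map to the same $\phi(a)$, the buckets here are indexed by $a$ rather than by the swapped-in action, which is precisely what makes the concavity averaging collapse to the same $L|\cA|\alpha_1(T/|\cA|)$ as in Lemma \ref{lem:decision-calibration-realization}.
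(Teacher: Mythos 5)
Your proposal is correct and follows essentially the same route as the paper's proof: partition the rounds by the chosen action $a_t=a$ (so the swapped-in action is the constant $\phi(a)$ on each cell), apply the triangle inequality, linearity, and $L$-Lipschitzness, then invoke decision calibration for each bucket indexed by $a$ and finish with concavity of $\alpha_1$. The indexing observation you highlight --- that the buckets are indexed by the chosen action rather than the swapped-in action --- is exactly the point that makes the paper's argument go through.
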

\begin{proof}
    The proof is similar to the proof of Lemma \ref{lem:decision-calibration-realization}. Using the linearity of $u$, we can write:
    \begin{align*}
        \left| \sum_{t=1}^T (u(\phi(a_t),p_t) - u(\phi(a_t),y_t)) \right| &= \left| \sum_{a \in \cA}\sum_{t=1}^T \1[a_t=a] (u(\phi(a),p_t) - u(\phi(a),y_t)) \right| \\
        &= \left| \sum_{a \in \cA} \left( u\left(\phi(a), \sum_{t=1}^T \1[a_t=a] p_t \right) - u\left(\phi(a), \sum_{t=1}^T \1[a_t=a] y_t\right) \right)\right| \\
        &\le \sum_{a \in \cA} \left| u\left(\phi(a), \sum_{t=1}^T \1[a_t=a] p_t \right) - u\left(\phi(a), \sum_{t=1}^T \1[a_t=a] y_t\right) \right| \\
        &\le \sum_{a \in \cA} L \left\| \sum_{t=1}^T \1[a_t=a] (p_t-y_t) \right\|_\infty \\
        &\le L \sum_{a \in \cA} \alpha_1(T^{u,\bc}(a)) \\
        &\le L |\cA| \alpha_1(T/|\cA|).
    \end{align*}
    where the first inequality follows from the triangle inequality, the second inequality follows from $L$-Lipschitzness of $u$, the third inequality follows from $(\cN,\alpha_1)$-decision calibration, and the fourth inequality follows from concavity of $\alpha_1$.
\end{proof}

We can now complete the proof of Theorem \ref{thm:swap-regret-realization}.
\begin{proof}
    Fix any $(u,\bc) \in \cN$ and $\phi: \cA \to \cA^\bc_{1:T}$. Applying Lemmas \ref{lem:decision-calibration-realization} and \ref{lem:decision-calibration-realization-swap} to the decomposition of the regret against $\phi$, we have that:
    \begin{align*}
        \MoveEqLeft \sum_{t=1}^T \left( u(\phi(a_t),y_t) - u(a_t,y_t) \right) \\
        &= \sum_{t=1}^T \left( u(\phi(a_t),y_t) - u(\phi(a_t),p_t) \right) + \sum_{t=1}^T \left( u(\phi(a_t),p_t) - u(a_t,p_t) \right) + \sum_{t=1}^T \left( u(a_t,p_t) - u(a_t,y_t) \right) \\
        &\le 2 L |\cA| \alpha_1(T/|\cA|).
    \end{align*}
\end{proof}

\subsection{Algorithm for Calibration and Decision Calibration}

We now address the algorithmic challenge of producing decision calibrated predictions. Our approach builds upon the \textsc{Unbiased-Prediction} algorithm from \citet{noarov2023highdimensional}, which makes conditionally unbiased predictions in the online setting. The algorithm and its guarantees are presented in Appendix \ref{app:unbiased-prediction}; we refer interested readers to the original work for further details.

We will instantiate \textsc{Unbiased-Prediction} to make decision calibrated predictions; we will refer to this instantiation as \textsc{Decision-Calibration-Realization}. 
Our guarantees will directly inherit from the guarantees of \textsc{Unbiased-Prediction}. 

\begin{theorem} \label{thm:unbiased-algorithm-realization}
    Let $\cN$ be a set of agents, where each agent is equipped with a utility function $u: \cA \times \cY \to [0,1]$ and $J$ constraint functions $\{c_j: \cA \times \cY \to [-1,1]\}_{j \in [J]}$. Each agent will run Algorithm \ref{alg:elimination-realization} to compete with the benchmark class $\cA^{\bc}_{1:T}$.
    There is an instantiation of \textsc{Unbiased-Prediction} \citep{noarov2023highdimensional}
    ---which we call \textsc{Decision-Calibration-Realization}---
    producing predictions $p_1,...,p_T \in \cY$ satisfying that for any sequence of outcomes $y_1,...,y_T \in \cY$, the following holds with probability at least $1-\delta$ for any
    $(u,\bc) \in \cN$ and $a \in \cA$:
        \[
            \left\| \sum_{t=1}^T \1[\CBR^u_{\widehat\cA_t^\bc}(p_t) = a] (p_t - y_t) \right\|_\infty \leq O\left( \ln(d|\cA||\cN|T) + \sqrt{\ln(d|\cA||\cN|T) \cdot T^{u,\bc}(a)} + \sqrt{\ln\frac{d|\cA||\cN|}{\delta} \cdot T}\right).
        \]
    
\end{theorem}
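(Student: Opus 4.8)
The plan is to instantiate the \textsc{Unbiased-Prediction} algorithm of \citet{noarov2023highdimensional} with a carefully chosen finite collection of conditioning events --- one per (agent, action) pair --- and then read off the claimed bound directly from that algorithm's conditional bias guarantee. Recall that \textsc{Unbiased-Prediction} accepts as input a finite collection $\cE$ of ``events,'' each a function of the current context and the proposed prediction, and produces predictions $p_1,\ldots,p_T$ such that, with probability at least $1-\delta$, the $\ell_\infty$ conditional bias $\left\| \sum_{t=1}^T \1[E \text{ fires at } t](p_t - y_t) \right\|_\infty$ is controlled simultaneously for every $E \in \cE$, with a bound that depends only logarithmically on $|\cE|$, the outcome dimension $d$, and $T$, and that scales with the square root of the number of rounds on which $E$ fires. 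Our guarantees will inherit entirely from this.

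The events I would use are as follows. For each agent $(u,\bc) \in \cN$ and each action $a \in \cA$, define the event $E_{u,\bc,a}$ to fire at round $t$ exactly when $\CBR^u_{\widehat\cA_t^\bc}(p_t) = a$ --- that is, when the agent's constrained best response over its current candidate set $\widehat\cA_t^\bc$ to the prediction $p_t$ equals $a$. There are precisely $|\cN|\cdot|\cA|$ such events. By construction, the number of rounds on which $E_{u,\bc,a}$ fires is exactly $T^{u,\bc}(a) = \sum_{t=1}^T \1[\CBR^u_{\widehat\cA_t^\bc}(p_t)=a]$, and the associated conditional bias is precisely the quantity $\left\| \sum_{t=1}^T \1[\CBR^u_{\widehat\cA_t^\bc}(p_t)=a](p_t-y_t) \right\|_\infty$ appearing in the theorem. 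Notably, neither the linearity nor the Lipschitz assumption on $u$ is needed here, since this statement concerns only the unbiasedness of the predictions, not regret.

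The step I expect to require the most care is verifying that these events are \emph{admissible} for \textsc{Unbiased-Prediction} --- i.e. that each $E_{u,\bc,a}$ is expressible as a function of information available to the forecaster at the moment $p_t$ is formed. The complication is that, unlike the fixed best-response maps appearing in standard decision-calibration arguments, here $\widehat\cA_t^\bc$ is \emph{history-dependent}: it is produced by the agent's elimination rule (Algorithm \ref{alg:elimination-realization}) from the past outcomes $y_1,\ldots,y_{t-1}$ and the agent's own past actions, so the decision rule $\CBR^u_{\widehat\cA_t^\bc}(\cdot)$ changes across rounds. The resolution is that $\widehat\cA_t^\bc$ is fully determined before round $t$ begins --- it is measurable with respect to the history through round $t-1$ --- so, conditional on that history, the indicator $\1[\CBR^u_{\widehat\cA_t^\bc}(p_t)=a]$ is a deterministic function of $p_t$ alone. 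Formally, one folds the current candidate set $\widehat\cA_t^\bc$ into the context handed to \textsc{Unbiased-Prediction}; since that algorithm permits the context to be chosen adaptively (indeed adversarially) as a function of history, this adaptivity is absorbed transparently, and $E_{u,\bc,a}$ becomes a legitimate function of the (augmented) context and the prediction.

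It then remains only to apply the \textsc{Unbiased-Prediction} guarantee with $|\cE| = |\cN|\cdot|\cA|$, outcome dimension $d$, horizon $T$, and per-event firing count $T^{u,\bc}(a)$. Substituting $|\cE| = |\cN||\cA|$ into the logarithmic factors yields the $\ln(d|\cA||\cN|T)$ and $\ln\frac{d|\cA||\cN|}{\delta}$ terms, the dependence on the firing count yields the $\sqrt{\ln(d|\cA||\cN|T)\cdot T^{u,\bc}(a)}$ term, and the high-probability part contributes $\sqrt{\ln\frac{d|\cA||\cN|}{\delta}\cdot T}$ --- matching the claimed bound term by term. The simultaneity over all $(u,\bc)\in\cN$ and $a\in\cA$ is already internalized, since the union bound over events is what produces the $\ln|\cE|$ dependence in \textsc{Unbiased-Prediction}'s guarantee.
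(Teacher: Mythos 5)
Your proposal is correct and matches the paper's proof essentially verbatim: the paper instantiates \textsc{Unbiased-Prediction} with exactly the event collection $\cE = \{\1[\CBR^u_{\widehat\cA_t^\bc}(p_t)=a]\}_{(u,\bc)\in\cN,\, a\in\cA}$, notes as you do that each such indicator is a legitimate function of the transcript prefix and $p_t$ (since $\widehat\cA_t^\bc$ is determined by rounds $1,\ldots,t-1$), and substitutes $|\cE|=|\cA||\cN|$ into the black-box guarantee. The only detail you elide is that the underlying guarantee of \citet{noarov2023highdimensional} is stated for the \emph{expected} bias over randomized predictions $\psi_t\in\Delta\cY$, and the paper inserts a separate Azuma--Hoeffding argument (its Corollary \ref{cor:unbiased-prediction-algorithm}) to pass to realized predictions, which is where the $\sqrt{\ln(d|\cA||\cN|/\delta)\cdot T}$ term actually comes from.
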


Substituting the above bounds into Theorem \ref{thm:swap-regret-realization}, we arrive at the following corollary bounding the $\cA^\bc_{1:T}$-constrained swap regret.

\begin{corollary} \label{cor:swap-regret-realization}
    Let $\cN$ be a set of agents, where each agent is equipped with a utility function $u: \cA \times \cY \to [0,1]$ and $J$ constraint functions $\{c_j: \cA \times \cY \to [-1,1]\}_{j \in [J]}$. The utility functions are linear and $L$-Lipschitz in the second argument. Each agent runs Algorithm \ref{alg:elimination-realization} to compete with the benchmark class $\cA^{\bc}_{1:T}$.
    The sequence of predictions $p_1,\ldots,p_T$ outputted by \textsc{Decision-Calibration-Realization} ensures that with probability at least $1-\delta$, the $\cA^\bc_{1:T}$-constrained swap regret of every agent is bounded by:
    \begin{align*}
        \Reg_\swap(u,\cA_{1:T}^\bc,1:T) \le O\left( L |\cA| \sqrt{T\ln\frac{d|\cA||\cN|T}{\delta}} \right).
    \end{align*}
\end{corollary}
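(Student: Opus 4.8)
The plan is to obtain this corollary by a direct substitution: Theorem~\ref{thm:unbiased-algorithm-realization} supplies the conditional bias bound enjoyed by the predictions of \textsc{Decision-Calibration-Realization}, and Theorem~\ref{thm:swap-regret-realization} converts any such bias bound — packaged as a concave $\alpha_1$ — into a swap regret bound. So I would first read off the function $\alpha_1$ implicit in the bias bound, namely
\[
\alpha_1(n) = C\left( \ln(d|\cA||\cN|T) + \sqrt{\ln(d|\cA||\cN|T)\, n} + \sqrt{T\ln\tfrac{d|\cA||\cN|}{\delta}}\right)
\]
for the absolute constant $C$ hidden in the $O(\cdot)$ of Theorem~\ref{thm:unbiased-algorithm-realization}. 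With this choice the bias bound, read with $T^{u,\bc}(a)$ in place of $n$, is exactly the statement that the predictions are $(\cN,\alpha_1)$-decision calibrated in the sense of Definition~\ref{def:decision-calibration-realization}. I would then check the concavity hypothesis (Assumption~2): of the three summands only the middle one depends on $n$, and it equals $\sqrt{\ln(d|\cA||\cN|T)}\cdot\sqrt{n}$, which is concave, while the other two are constant in $n$; hence $\alpha_1$ is concave. Finally I note that Theorem~\ref{thm:unbiased-algorithm-realization} delivers its bound simultaneously for all $(u,\bc)\in\cN$ and $a\in\cA$ on a single event of probability at least $1-\delta$, so on that event every agent's predictions are decision calibrated at once and all conclusions below hold on this same event.

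On that event I would invoke Theorem~\ref{thm:swap-regret-realization}, which gives $\Reg_\swap(u,\cA_{1:T}^\bc,1:T)\le 2L|\cA|\,\alpha_1(T/|\cA|)$ for every agent, and substitute $n=T/|\cA|$. Multiplying each of the three summands by $2L|\cA|$ yields $2L|\cA|\ln(d|\cA||\cN|T)$; then $2L\sqrt{|\cA|\,T\ln(d|\cA||\cN|T)}$ (the middle term contributes $|\cA|\cdot\sqrt{1/|\cA|}=\sqrt{|\cA|}$); and finally $2L|\cA|\sqrt{T\ln\tfrac{d|\cA||\cN|}{\delta}}$. The third summand dominates, since it carries the full factor $|\cA|$ rather than $\sqrt{|\cA|}$, while the first is only $O(\ln T)$ and is dominated by the $\sqrt{T}$ growth. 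Absorbing both logarithms into $\ln\tfrac{d|\cA||\cN|T}{\delta}$ up to constants collects everything into $\Reg_\swap(u,\cA_{1:T}^\bc,1:T)\le O\!\left(L|\cA|\sqrt{T\ln\tfrac{d|\cA||\cN|T}{\delta}}\right)$, as claimed.

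There is no genuine obstacle here; the only thing requiring care is bookkeeping. In particular I would be careful with the summand $\sqrt{T\ln\tfrac{d|\cA||\cN|}{\delta}}$, which is \emph{independent} of $T^{u,\bc}(a)$: when it passes through the $\sum_{a\in\cA}$ in Lemma~\ref{lem:decision-calibration-realization} / Theorem~\ref{thm:swap-regret-realization}, it accumulates a full factor of $|\cA|$ (rather than being tamed by concavity the way the $\sqrt{n}$ term is), and this is exactly what forces the leading factor $|\cA|$ — not $\sqrt{|\cA|}$ — in the final rate. I would also confirm that the single $1-\delta$ failure probability is shared across all agents, which it is, because Theorem~\ref{thm:unbiased-algorithm-realization} already states its guarantee uniformly over $\cN$ and $\cA$, so no further union bound over agents is needed.
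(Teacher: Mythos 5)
Your proposal is correct and follows exactly the route the paper intends: the paper derives Corollary~\ref{cor:swap-regret-realization} by substituting the bias bound of Theorem~\ref{thm:unbiased-algorithm-realization} (read as a concave $\alpha_1$) into Theorem~\ref{thm:swap-regret-realization}, which is precisely what you do, and your bookkeeping of the three summands --- including the observation that the $n$-independent term $\sqrt{T\ln\tfrac{d|\cA||\cN|}{\delta}}$ picks up the full factor $|\cA|$ and dominates --- is accurate.
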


Since constrained swap regret is stronger than constrained external regret, we can also bound the $\cA^\bc_{1:T}$-constrained external regret.

\begin{corollary} \label{cor:external-regret-realization}
    Let $\cN$ be a set of agents, where each agent is equipped with a utility function $u: \cA \times \cY \to [0,1]$ and $J$ constraint functions $\{c_j: \cA \times \cY \to [-1,1]\}_{j \in [J]}$. The utility functions are linear and $L$-Lipschitz in the second argument. Each agent runs Algorithm \ref{alg:elimination-realization} to compete with the benchmark class $\cA^{\bc}_{1:T}$.
    The sequence of predictions $p_1,\ldots,p_T$ outputted by \textsc{Decision-Calibration-Realization} ensures that with probability at least $1-\delta$, the $\cA^\bc_{1:T}$-constrained external regret of every agent is bounded by:
    \begin{align*}
        \Reg_\ext(u,\cA_{1:T}^\bc,1:T) \le O\left( L |\cA| \sqrt{T\ln\frac{d|\cA||\cN|T}{\delta}} \right).
    \end{align*}
\end{corollary}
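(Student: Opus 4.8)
The plan is to obtain this bound as an immediate consequence of the swap regret guarantee in Corollary \ref{cor:swap-regret-realization}, using the elementary fact that $\cB$-constrained external regret is the restriction of $\cB$-constrained swap regret to \emph{constant} modification rules. No new analysis of the predictions is required: the same high-probability event (that \textsc{Decision-Calibration-Realization} produces $(\cN,\alpha_1)$-decision calibrated predictions) underlies both guarantees, so the external regret bound will hold on exactly the event on which the swap regret bound holds.

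Concretely, I would first fix an agent $(u,\bc) \in \cN$ and an arbitrary benchmark action $a \in \cA_{1:T}^\bc$, and define the constant modification rule $\phi_a : \cA \to \cA_{1:T}^\bc$ by $\phi_a(a') = a$ for all $a' \in \cA$. This is a valid action modification rule into the benchmark class precisely because $a \in \cA_{1:T}^\bc$, so that its image $\{a\}$ lies in $\cA_{1:T}^\bc$; here it is important to use $a \in \cA_{1:T}^\bc$ rather than merely $a \in \cA$. For this rule the swap objective reduces termwise to the external objective, since $u(\phi_a(a_t),y_t) = u(a,y_t)$ for every $t$.

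Taking maxima then yields the comparison. Since each $\phi_a$ is one particular member of the larger family of modification rules $\phi : \cA \to \cA_{1:T}^\bc$, I would write
\[
    \Reg_\ext(u,\cA_{1:T}^\bc,1:T) = \max_{a \in \cA_{1:T}^\bc} \sum_{t=1}^T \left( u(\phi_a(a_t),y_t) - u(a_t,y_t) \right) \le \max_{\phi : \cA \to \cA_{1:T}^\bc} \sum_{t=1}^T \left( u(\phi(a_t),y_t) - u(a_t,y_t) \right) = \Reg_\swap(u,\cA_{1:T}^\bc,1:T).
\]
Finally I would substitute the bound from Corollary \ref{cor:swap-regret-realization}, which holds simultaneously for every agent with probability at least $1-\delta$, to conclude that $\Reg_\ext(u,\cA_{1:T}^\bc,1:T) \le O(L|\cA|\sqrt{T\ln(d|\cA||\cN|T/\delta)})$ on the same event. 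There is essentially no obstacle here; the only points requiring care are verifying that the constant rules genuinely land in the benchmark class (so that the inequality between the two maxima runs in the correct direction) and noting that the failure probability is not compounded, since it is the single decision-calibration event that drives both regret notions.
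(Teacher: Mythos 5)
Your proposal is correct and matches the paper's approach exactly: the paper likewise derives Corollary \ref{cor:external-regret-realization} by observing that constrained external regret is the special case of constrained swap regret obtained by restricting to constant modification rules $\phi_a$ with $a \in \cA_{1:T}^\bc$, and then invoking Corollary \ref{cor:swap-regret-realization} on the same high-probability decision-calibration event. Your additional care in checking that the constant rules land in the benchmark class and that no probability is lost is exactly the right (if implicit in the paper) justification.
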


\section{Simultaneous Regret Minimization for Multiple Subsequences} \label{sec:subsequence}

Our analysis so far has focused on achieving low regret against the best single benchmark action in hindsight, evaluated over all T rounds. We now extend this framework to a more demanding and general setting by considering performance over multiple, arbitrary subsequences of rounds. We will compete with different benchmarks over different subsequences, and provide more granular guarantees that hold simultaneously across all subsequences.

Let $\cS$ be a collection of subsequences, where each subsequence $S \in \cS$ is a subset of $[T]$. We assume that the union of these subsequences covers the entire time horizon, i.e., $\cup_{S \in \cS} S = [T]$. This ensures that every round belongs to at least one subsequence, and thus has a benchmark for us to compete with.
We denote the active subsequences in round $t \in [T]$ as $\cS_t = \{S \in \cS: t \in S\}$, which is non-empty by our assumption. These subsequences need not be fixed in advance but can be defined dynamically based on contextual information. A subsequence $S \in \cS$ is generally characterized by an indicator function $h_S: [T] \times \cX \to \{0,1\}$. For any round $t \in [T]$ with an observed feature $x_t$, the round is part of the subsequence $S$ if and only if $h_S(t, x_t) = 1$. This flexible definition allows subsequences to be based on the round index $t$, the feature $x_t$, or both. We could also generalize the definition of subsequence to let it depend arbitrarily on the history.

We now extend our benchmark definitions to the multi-subsequence setting.
Fix a subsequence $S \in \cS$, one benchmark class of interest is the set of actions that satisfy the realized constraints over the entire subsequence, denoted as:
\begin{align*}
    \cA_{S}^\bc = \left\{ a: c_j(a,y_t) \le 0 \; \text{ for every } t \in S \text{ and } j \in [J] \right\}.
\end{align*}

Another benchmark class is the set of actions that satisfy the constraints in expectation over the entire subsequence, denoted as:
\begin{align*}
    \cA_{S}^{\E[\bc]} = \left\{ a: \E_{y_t \sim Y_t}[ c_j(a,y_t)] \le 0 \; \text{ for every } t \in S \text{ and } j \in [J] \right\}.
\end{align*}

Previously, our notions of constrained external regret and constrained swap regret were defined over the entire time horizon of $T$ rounds. We now extend these definitions to handle arbitrary subsequences. Note that the ``best action in hindsight'' can be different on each subsequence, so regret bounds that hold simultaneously on many subsequences are comparing to a substantially richer benchmark class than regret bounds that hold marginally over a single subsequence.
\begin{definition}[$\cB$-Constrained External Regret over Subsequence $S$] 
    Fix an agent with action space $\cA$ and utility function $u$. Let $\cB$ be a benchmark class of actions. Fix a subsequence $S \subseteq [T]$. For a sequence of actions $a_1,\ldots,a_T$ and outcomes $y_1,\ldots,y_T$, the agent's $\cB$-constrained external regret against the benchmark class $\cB$ over the subsequence $S$ is:
    \[
        \Reg_\ext(u,\cB,S) = \max_{a \in \cB}\sum_{t \in S} \left( u(a,y_t) - u(a_t,y_t) \right).
    \]
\end{definition}

\begin{definition}[$\cB$-Constrained Swap Regret over Subsequence $S$] 
    Fix an agent with action space $\cA$ and utility function $u$. Let $\cB$ be a benchmark class of actions. Fix a subsequence $S \subseteq [T]$. For a sequence of actions $a_1,\ldots,a_T$ and outcomes $y_1,\ldots,y_T$, the agent's $\cB$-constrained swap regret against the benchmark class $\cB$ over the subsequence $S$ is:
    \[
        \Reg_\swap(u,\cB,S) = \max_{\phi: \cA \to \cB} \sum_{t \in S} \left( u(\phi(a_t),y_t) - u(a_t,y_t) \right).
    \]
\end{definition}

Similarly, we define the cumulative constraint violation over a subsequence:
\begin{align*}
    \CCV(S) \coloneqq \max_{j \in [J]} \sum_{t \in S} c_j(a_t,y_t).
\end{align*}

Agents aim to achieve low regret and satisfy the long-term constraints simultaneously over every subsequence in $\cS$. The latter requires that $\CCV(S) \le o(|S|)$ for any $S \in \cS$, where $|S|$ is the length of subsequence $S$.

To compete with the benchmark classes $\{\cA_S^\bc\}_{S \in \cS}$, we use Algorithm \ref{alg:elimination-realization} as the fundamental building block for our multi-subsequence setting. We will let each agent instantiate a copy of Algorithm \ref{alg:elimination-realization} for every subsequence. In each round $t \in [T]$, these parallel instantiations provide the agent with a candidate action set $\widehat\cA^\bc_{t,S}$ for every active subsequence $S \in \cS_t$. The agent then aggregates these candidate action sets by taking their union, $U_t = \cup_{S \in \cS_t} \widehat\cA_{t,S}^\bc$, and $U_t$-constrained best respond to our prediction $p_t$. This procedure is formally stated in Algorithm \ref{alg:elimination-realization-subsequence}.

\begin{algorithm}[H]
    % \KwIn{A single-sequence elimination algorithm \textsc{Elimination}}
    % \KwOut{} 
    % \vspace{.5em}

    Initialize $\widehat\cA_{1,S}^\bc = \cA$ for every subsequence $S \in \cS$\;

    \For{$t=1$ \KwTo $T$}{
        Receive the prediction $p_t$\;

        Aggregate candidate action sets from active subsequences, $U_{t} = \cup_{S \in \cS_t} \widehat\cA_{t,S}^\bc$\;
        
        Play the $U_t$-constrained best response to $p_t$, $a_t = \CBR^{u}_{U_t}(p_t)$\;

        Observe the outcome $y_t$, obtain utility $u(a_t, y_t)$ and constraint increments $c_j(a_t, y_t)$, $j \in [J]$\;
        
        Obtain $\widehat\cA_{t+1,S}^\bc$ by executing one step of Algorithm \ref{alg:elimination-realization} for every active subsequence $S \in \cS_t$\;
        
        % Let $\widehat\cA^\bc_{t+1,S} = \widehat\cA^\bc_{t,S} \setminus \left\{ a \in \widehat\cA^\bc_{t,S} : \exists j \in [J], c_j(a,y_t)>0 \right\}$ for every active subsequence $S \in \cS_t$\;
        
        Let $\widehat\cA_{t+1,S}^\bc = \widehat\cA_{t,S}^\bc$ for every non-active subsequence $S \in \cS \setminus \cS_t$\;
    }
    
    \caption{Elimination-Based Algorithm for Benchmark Classes $\{\cA^\bc_S\}_{S \in \cS}$}
    \label{alg:elimination-realization-subsequence}
\end{algorithm}

Since each instantiation of Algorithm \ref{alg:elimination-realization} is guaranteed not to eliminate the benchmark actions corresponding to its subsequence (Proposition \ref{prop:benchmark-realization}), the union set $U_t$ at each round $t \in [T]$ will therefore contain the benchmark actions for every active subsequence $S \in \cS_t$.
\begin{proposition} \label{prop:benchmark-realization-subsequence}
    Let $\cS$ be a collection of subsequences. Fix an agent in $\cN$ with a utility function $u: \cA \times \cY \to [0,1]$ and $J$ constraint functions $\{c_j: \cA \times \cY \to [-1,1]\}_{j \in [J]}$. Suppose the agent runs Algorithm \ref{alg:elimination-realization-subsequence} to compete with the benchmark class $\cA^\bc_S$ over every subsequence $S \in \cS$, then $\cA^\bc_{S} \subseteq \widehat\cA_{t,S}^\bc \subseteq U_{t}$ for any $t \in [T]$ and $S \in \cS_t$.
\end{proposition}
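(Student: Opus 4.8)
The plan is to establish the two inclusions $\cA^\bc_S \subseteq \widehat\cA_{t,S}^\bc$ and $\widehat\cA_{t,S}^\bc \subseteq U_t$ separately. The second inclusion is immediate: for any $S \in \cS_t$, the set $\widehat\cA_{t,S}^\bc$ is by definition one of the sets appearing in the union $U_t = \cup_{S' \in \cS_t} \widehat\cA_{t,S'}^\bc$, so it is contained in $U_t$. All of the substantive work is therefore in the first inclusion.

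For the first inclusion, the key observation is that the per-subsequence bookkeeping maintained by Algorithm \ref{alg:elimination-realization-subsequence} is, for each fixed $S$, nothing more than an execution of Algorithm \ref{alg:elimination-realization} that processes exactly the rounds of $S$: on an active round $t \in S$ the set $\widehat\cA_{t,S}^\bc$ is updated by a single elimination step, and on a non-active round it is left unchanged. I would therefore reduce the claim to the argument underlying Proposition \ref{prop:benchmark-realization}, now applied to this restricted run with effective time horizon $S$ and benchmark class $\cA^\bc_S$.

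Concretely, I would argue by induction on $t$ that $\cA^\bc_S \subseteq \widehat\cA_{t,S}^\bc$. The base case holds since $\widehat\cA_{1,S}^\bc = \cA$. For the inductive step, if $t \notin S$ the set is frozen and the inclusion is preserved trivially; if $t \in S$, the elimination step removes an action $a$ only when $c_j(a,y_t) > 0$ for some $j \in [J]$. But any $a \in \cA^\bc_S$ satisfies $c_j(a,y_t) \le 0$ for every $t \in S$ and every $j \in [J]$ by the definition of $\cA^\bc_S$, so no benchmark action is ever eliminated on an active round. Hence $\cA^\bc_S$ remains inside $\widehat\cA_{t,S}^\bc$ at every round.

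The only subtlety --- and it is mild rather than a genuine obstacle --- is to verify that freezing the set on non-active rounds causes no harm. This is precisely where the definition of the benchmark must be aligned with the subsequence: $\cA^\bc_S$ constrains actions only on rounds $t \in S$, which are exactly the rounds at which subsequence $S$'s instantiation performs eliminations. Because eliminations are never triggered by rounds outside $S$, and benchmark actions never trigger eliminations on rounds inside $S$, the inclusion $\cA^\bc_S \subseteq \widehat\cA_{t,S}^\bc$ is maintained throughout; combining this with the trivial second inclusion completes the proof.
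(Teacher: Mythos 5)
Your proof is correct and follows essentially the same route as the paper, which justifies the proposition in one line by noting that each per-subsequence instantiation of Algorithm \ref{alg:elimination-realization} never eliminates actions in $\cA^\bc_S$ (the argument of Proposition \ref{prop:benchmark-realization}) and that $U_t$ is the union of the active candidate sets. Your explicit induction, including the observation that eliminations are triggered by constraint violations at rounds of $S$ regardless of which action was played, just spells out the details the paper leaves implicit.
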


% Recall that in the single-sequence case, the cumulative constraint violation is bounded by $|\cA|$ because each action is eliminated upon violating the constraints, and hence can violate the constraints at most once. 
Fix any subsequence $S \in \cS$. Since the agent chooses actions from the union sets, an action's selection can be sourced from the candidate action set of any subsequence in $\cS$. As a result, each action can contribute at most $|\cS|$ times to the cumulative constraint violation over $S$ before it is fully eliminated.
Consequently, the cumulative constraint violation of Algorithm \ref{alg:elimination-realization-subsequence} over any subsequence can be upper bounded by $|\cA| |\cS|$.
\begin{theorem} \label{thm:CCV-realization-subsequence}
    Let $\cS$ be a collection of subsequences. Fix an agent in $\cN$ with a utility function $u: \cA \times \cY \to [0,1]$ and $J$ constraint functions $\{c_j: \cA \times \cY \to [-1,1]\}_{j \in [J]}$. Suppose the agent runs Algorithm \ref{alg:elimination-realization-subsequence} to compete with the benchmark class $\cA^\bc_S$ over every subsequence $S \in \cS$, then the cumulative constraint violation of the agent over any subsequence $S \in \cS$ is bounded by:
    \begin{align*}
        \CCV(S) \le |\cA| |\cS|.
    \end{align*}
\end{theorem}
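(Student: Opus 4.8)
The plan is to mirror the proof of Theorem \ref{thm:CCV-realization}: fix an arbitrary constraint index $j \in [J]$, bound $\sum_{t \in S} c_j(a_t, y_t)$, and conclude by maximizing over $j$. First I would partition the rounds in $S$ by the action played,
\[
    \sum_{t \in S} c_j(a_t, y_t) = \sum_{a \in \cA} \sum_{t \in S :\, a_t = a} c_j(a, y_t).
\]
Since $c_j$ takes values in $[-1,1]$, the rounds on which $c_j(a, y_t) \le 0$ contribute nonpositively and may be discarded, while each round on which $a$ is played and $c_j(a, y_t) > 0$ contributes at most $1$. Thus it suffices to show that, for every fixed action $a$, the number of rounds $t \in S$ on which $a_t = a$ and $c_j(a, y_t) > 0$ is at most $|\cS|$; summing over $a \in \cA$ then yields $|\cA||\cS|$.

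The heart of the argument --- and the step I expect to be the main obstacle --- is controlling this per-action count, since an action eliminated from $S$'s own candidate set $\widehat\cA_{t,S}^\bc$ may still be selected at a later round because it survives in the candidate set of some \emph{other} active subsequence. Let $t_1 < \dots < t_N$ enumerate the rounds $t \in S$ with $a_t = a$ and $c_j(a, y_t) > 0$. At each such round $t_k$, the action $a$ is played, so $a \in U_{t_k} = \cup_{S' \in \cS_{t_k}} \widehat\cA_{t_k,S'}^\bc$, and hence there is a witnessing subsequence $S_k \in \cS_{t_k}$ with $a \in \widehat\cA_{t_k, S_k}^\bc$. Because $c_j(a, y_{t_k}) > 0$ and $S_k$ is active at round $t_k$, the one-step elimination of Algorithm \ref{alg:elimination-realization} (invoked by Algorithm \ref{alg:elimination-realization-subsequence} for every active subsequence) removes $a$ from $\widehat\cA_{t_k + 1, S_k}^\bc$; since eliminations are permanent, $a \notin \widehat\cA_{t', S_k}^\bc$ for all $t' > t_k$.

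I would then argue the witnesses $S_1, \dots, S_N$ are distinct: if $S_k = S_l$ for some $k < l$, then $a$ has been eliminated from $S_k$ after round $t_k$ and cannot reappear, contradicting $a \in \widehat\cA_{t_l, S_l}^\bc = \widehat\cA_{t_l, S_k}^\bc$ at the strictly later round $t_l > t_k$. As the $S_k$ are distinct elements of $\cS$, we obtain $N \le |\cS|$. Assembling the pieces,
\[
    \sum_{t \in S} c_j(a_t, y_t) \le \sum_{a \in \cA} \sum_{\substack{t \in S :\, a_t = a \\ c_j(a,y_t) > 0}} 1 \le \sum_{a \in \cA} |\cS| = |\cA||\cS|,
\]
and taking the maximum over $j \in [J]$ gives $\CCV(S) \le |\cA||\cS|$. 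The one point to handle carefully in the writeup is the bookkeeping that the elimination on $S_k$ fires precisely because $S_k$ is \emph{active} at the violating round, so that the elimination step is actually executed for $S_k$ --- this is what simultaneously licenses the permanence of the elimination and the distinctness of the witnesses.
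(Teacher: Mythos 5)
Your proof is correct and takes essentially the same approach as the paper's: partition the violation sum over $S$ by action, and charge each strictly positive violation to a witnessing active subsequence whose candidate set still contained the action, using the permanence of elimination to conclude that each action contributes at most $|\cS|$ positive terms. If anything, your direct argument that the witnesses $S_1,\dots,S_N$ are distinct is slightly cleaner than the paper's intermediate step, which over-counts each term across all witnessing subsequences (multiplying possibly negative terms by counts $\ge 1$) before discarding the non-positive ones.
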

\begin{proof}
    Fix any $j \in [J]$ and $S \in \cS$. We will bound the cumulative violation against the $j$-th constraint over $S$, i.e., $\sum_{t \in S} c_j(a_t, y_t)$. The final result follows by taking the maximum over all $j \in [J]$.

    Similarly to the proof of Theorem \ref{thm:CCV-realization}, we first partition the sum based on the action played. For each action $a \in \mathcal{A}$, let $R_{S,a} = \{t: t \in S, a_t=a\}$. This gives:
    \begin{align*}
        \sum_{t \in S} c_j(a_t,y_t) &= \sum_{a \in \cA: R_{S,a} \neq \emptyset} \sum_{t \in R_{S,a}} c_j(a,y_t).
    \end{align*}

    For an action $a$ to be played in round $t$, it must be a candidate action for at least one active subsequence $S'$. As a result, the above equation can be upper bounded by:
    \begin{align*}
        \sum_{a \in \cA: R_{S,a} \neq \emptyset} \sum_{t \in R_{S,a}} \sum_{S' \in \cS} \1[t \in S', a \in \widehat\cA_{t,S'}^\bc] c_j(a,y_t) = \sum_{a \in \cA: R_{S,a} \neq \emptyset} \sum_{S' \in \cS} \sum_{t \in R_{S,a} \cap S': a \in \widehat\cA_{t,S'}^\bc} c_j(a,y_t).
    \end{align*}  

    For notational convenience, we define $R_{S,a,S'} = \{t \in R_{S,a} \cap S' : a \in \widehat\cA_{t,S'}^\bc \}$. For any $t \in R_{S,a,S'} \setminus \{\max R_{S,a,S'}\}$, action $a$ is played and is not eliminated from the candidate action set corresponding to $S'$, which implies that the constraint is satisfied, i.e., $c_j(a,y_t) \le 0$. 
    Therefore, only the final term, $c_j(a,y_{\max R_{S,a,S'}})$, can contribute positively to the sum for each action. The sum is thus bounded by:
    \begin{align*}
        \MoveEqLeft \sum_{a \in \cA: R_{S,a} \neq \emptyset} \sum_{S' \in \cS} \left( \sum_{t \in R_{S,a,S'} \setminus \{\max R_{S,a,S'}\}} c_j(a,y_t) + c_j(a,y_{\max R_{S,a,S'}}) \right) \\
        &\le \sum_{a \in \cA: R_{S,a} \neq \emptyset} \sum_{S' \in \cS} c_j(a,y_{\max R_{S,a,S'}}) \\
        &\le \sum_{a \in \cA: R_{S,a} \neq \emptyset} \sum_{S' \in \cS} 1 \\
        &\le |\cA| |\cS|.
    \end{align*}
\end{proof}

Handling the expectation-based benchmarks $\{\cA_S^{\E[\bc]}\}_{S \in \cS}$, in contrast, requires a more sophisticated approach than simply running parallel copies of Algorithm \ref{alg:elimination-expectation}. The challenge arises because each instantiation of the algorithm has only a local view of an action's performance. An action could appear feasible from the perspective of a subsequence $S$, but within a smaller, overlapping portion of another subsequence $S'$, that same action could be highly problematic. Low (i.e. very negative) constraint violation over the difference $S \setminus S'$ can mask the high constraint violation on the intersection $S \cap S'$, preventing $S$ from eliminating the action. This action would then remain in the union set $U_t$ and continue to harm the performance of subsequence $S'$, even if $S'$ has already eliminated it.

Our solution has the same high-level structure as Algorithm \ref{alg:elimination-realization-subsequence}. We maintain a candidate action set $\widehat\cA_S^{\E[\bc]}$ for each subsequence and gradually eliminate actions from it; the agent selects an action from the union set, $U_t = \cup_{S \in \cS_t} \widehat\cA_S^{\E[\bc]}$, in each round. The key difference lies in a more sophisticated elimination rule.

We will monitor each action's constraint violations more granularly by tracking their performance over the intersections of pairs of subsequence. However, since these intersections overlap, simply summing constraint violations across them would lead to double-counting. To resolve this, we will attribute the action $a_t$ played in each round $t \in [T]$ to a single, uniquely responsible subsequence. This responsible subsequence can be selected in any arbitrary way. To be concrete, we let the collection of subsequences be indexed as $\cS = \{S_1, S_2, \ldots, S_I\}$ where $I = |\cS|$. 
% We credit the subsequence $S_{r_t}$ where the index $r_t$ is defined as:
We credit the subsequence $S_{r_t}$ with the smallest index that is active and for which $a_t$ is a candidate action:
\begin{equation*}
    r_t = \min\left\{ i \in [I]: t \in S_i, a_t \in \widehat\cA_{t,S_i}^{\E[\bc]} \right\}.
\end{equation*}
% This rule deterministically assigns responsibility to the active subsequence with the smallest index whose candidate action set contains the chosen action $a_t$.

% For each action $a \in \cA$, constraint $j \in [J]$, and subsequence $S \in \mathcal{S}$, we let $v^{a,j}_{[1,t] \cap S}$ denote the total violation of the $j$-th constraint when action $a$ is played, accumulated over the subsequence $S$ up to round $t$, i.e.,
% \begin{align*}
%     v^{a,j}_{[1,t] \cap S} \coloneqq \sum_{s=1}^t \1[a_s=a, s \in S] c_j(a,y_s)
% \end{align*}

% We further decompose $v^{a,j}_{[1,t] \cap S}$ based on the responsible subsequences...

For each action $a \in \cA$, constraint $j \in [J]$, and pair of subsequences $(S,S') \in \cS^2$, we define the attributed constraint violation $w^{a,j,S'}_{[1,t] \cap S}$ that tracks the cumulative violation of action $a$ against the $j$-th constraint during the rounds of $S$ up to round $t$ for which subsequence $S'$ is responsible:
\begin{align*}
    w^{a,j,S'}_{[1,t] \cap S} \coloneqq \sum_{s=1}^t \1[a_s=a, s \in S, S_{r_s}=S'] c_j(a,y_s).
\end{align*}

Our new elimination rule is based on monitoring these terms against subsequence-specific thresholds, $\{\tau_{S'}\}_{S' \in \cS}$. An action $a_t$ is eliminated from the candidate action set of its responsible subsequence, $S_{r_t}$, if its attributed constraint violations exceed a subsequence-specific threshold $\tau_{S_{r_t}}$. Specifically, we eliminate $a_t$ from $\widehat\cA_{t,S_{r_t}}^{\E[\bc]}$ if for any constraint $j \in [J]$ and any subsequence $S \in \cS$, the term $w^{a,j,S_{r_t}}_{[1,t] \cap S}$ surpasses $\tau_{S_{r_t}}$. This entire procedure is formally detailed in Algorithm \ref{alg:elimination-expectation-subsequence}.

\begin{algorithm}[H]
    \KwIn{Threshold parameters $\{\tau_S\}_{S \in \cS}$}
    % \KwOut{} 
    % \vspace{.5em}

    Initialize $\widehat\cA^{\E[\bc]}_{1,S} = \cA$ for each $S \in \cS$\;

    For every $a \in \cA$, $j \in [J]$, and $S \in \cS$, set $w^{a,j,S}_\emptyset = 0$\;

    \For{$t=1$ \KwTo $T$}{
        Receive the prediction $p_t$\;

        Aggregate candidate action sets from active subsequences, $U_{t} = \cup_{S \in \cS_t} \widehat\cA_{t,S}^{\E[\bc]}$\;

        Play the $U_t$-constrained best response to $p_t$, $a_t = \CBR^{u}_{U_t}(p_t)$\;

        Determine the responsible subsequence with index $r_t = \min\{i \in [I]: t \in S_i, a_t \in \widehat\cA_{t,S_i}^{\E[\bc]}\}$\;

        Observe the outcome $y_t$, obtain utility $u(a_t, y_t)$ and constraint increments $c_j(a_t, y_t)$, $j \in [J]$\;

        Update $w^{a,j,S'}_{[1,t] \cap S} = w^{a,j,S'}_{[1,t-1] \cap S} + \1[a_t=a, t \in S, S_{r_t}=S'] c_j(a,y_t)$ for every $a \in \cA$, $j \in [J]$, and $S,S' \in \cS$\;

        \If{$\exists j \in [J], S \in \cS, w^{a_t,j,S_{r_t}}_{[1,t] \cap S} > \tau_S$}{
            $\widehat\cA^{\E[\bc]}_{t+1,S_{r_t}} = \widehat\cA^{\E[\bc]}_{t,S_{r_t}} \setminus \{a_t\}$\;
        } 
        \Else{
            $\widehat\cA^{\E[\bc]}_{t+1,S_{r_t}} = \widehat\cA^{\E[\bc]}_{t,S_{r_t}}$\;
        }
        Let $\widehat\cA^{\E[\bc]}_{t+1,S} = \widehat\cA^{\E[\bc]}_{t,S}$ for any non-responsible subsequence $S \in \cS \setminus \{S_{r_t}\}$\;
    }
    
    \caption{Elimination-Based Algorithm for Benchmark Classes $\{\cA^{\E[\bc]}_{S}\}_{S \in \cS}$}
    \label{alg:elimination-expectation-subsequence}
\end{algorithm}

Algorithm \ref{alg:elimination-expectation-subsequence} never eliminates any benchmark action in $\cA^{\E[\bc]}_S$ from the candidate action set $\widehat\cA^{\E[\bc]}_{t,S}$ across the $T$ rounds for any subsequence $S \in \cS$ with high probability over the randomness of the outcomes $y_1,\ldots,y_T$.
\begin{proposition} \label{prop:benchmark-expectation-subsequence}
    Fix an agent in $\cN$ with a utility function $u: \cA \times \cY \to [0,1]$ and $J$ constraint functions $\{c_j: \cA \times \cY \to [-1,1]\}_{j \in [J]}$. Suppose the agent runs Algorithm \ref{alg:elimination-expectation-subsequence} with threshold parameters $\{\tau_S\}_{S \in \cS}$ to compete with the benchmark class $\cA^{\E[\bc]}_S$ over every subsequence $S \in \cS$, then with probability at least $1 - 2|\cA| |\cS| J \sum_{S \in \cS} |S| e^{-\frac{\tau_S^2}{8|S|}}$:
    \begin{align*}
        \cA_{S}^{\E[\bc]} \subseteq \widehat\cA_{t,S}^{\E[\bc]} \subseteq U_t \text{ for every } t \in [T] \text{ and } S \in \cS_t.
    \end{align*}
\end{proposition}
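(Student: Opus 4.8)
The plan is to mirror the proof of Proposition \ref{prop:benchmark-expectation}, but to track constraint violations at the granularity of (responsible subsequence, tracking subsequence) pairs and to set up a separate martingale for each such pair. The inclusion $\widehat\cA_{t,S}^{\E[\bc]} \subseteq U_t$ for $S \in \cS_t$ is immediate from the definition $U_t = \cup_{S\in\cS_t}\widehat\cA_{t,S}^{\E[\bc]}$, so the entire content is to show that no benchmark action $a\in\cA_S^{\E[\bc]}$ is ever eliminated from $\widehat\cA_{\cdot,S}^{\E[\bc]}$. By the elimination rule of Algorithm \ref{alg:elimination-expectation-subsequence}, such an $a$ can be removed from $\widehat\cA_{\cdot,S}^{\E[\bc]}$ only at a round $t$ where $S$ is the responsible subsequence ($S_{r_t}=S$), $a_t=a$, and $w^{a,j,S}_{[1,t]\cap S'} > \tau_{S'}$ for some constraint $j\in[J]$ and some tracking subsequence $S'\in\cS$. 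It therefore suffices to show that, with the claimed probability, $w^{a,j,S}_{[1,t]\cap S'}\le \tau_{S'}$ for all $a\in\cA_S^{\E[\bc]}$, $j\in[J]$, $S'\in\cS$, and $t\in[T]$, and then take a union bound over protected subsequences $S$.

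Fix $S$, $a\in\cA_S^{\E[\bc]}$, $j$, and $S'$. The central observation is that every round $s$ contributing to $w^{a,j,S}_{[1,t]\cap S'}=\sum_{s=1}^t \1[a_s=a,\,s\in S',\,S_{r_s}=S]\,c_j(a,y_s)$ satisfies $S_{r_s}=S$, which by the definition of $r_s$ forces $s\in S$; hence $s\in S\cap S'$ and, since $a\in\cA_S^{\E[\bc]}$, we have $\E_{y_s\sim Y_s}[c_j(a,y_s)]\le 0$ on exactly these rounds. Consequently $\Pr[w^{a,j,S}_{[1,t]\cap S'}>\tau_{S'}]$ is at most the probability that the centered sum $\sum_{s=1}^t \1[a_s=a,\,s\in S',\,S_{r_s}=S]\,(c_j(a,y_s)-\E_{y_s\sim Y_s}[c_j(a,y_s)])$ exceeds $\tau_{S'}$. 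Using the same filtration $\cF_{t}=\sigma(\{(x_s,p_s,y_s)\}_{s\le t},x_{t+1},p_{t+1})$ as before, I would verify that the indicator $\1[a_s=a,\,s\in S',\,S_{r_s}=S]$ is $\cF_{s-1}$-measurable: $a_s$ and the candidate sets $\widehat\cA_{s,S_i}^{\E[\bc]}$ are determined by the history through round $s-1$ together with $x_s,p_s$, and $s\in S'$ depends only on $x_s$, so $r_s$ (and thus the event $S_{r_s}=S$) is $\cF_{s-1}$-measurable. This makes the centered summands a bounded martingale difference sequence.

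Applying the Azuma-Hoeffding inequality (Lemma \ref{lem:azuma}) then gives the tail bound. The crucial bookkeeping is that the nonzero summands all have $s\in S'$, so there are at most $|S'|$ of them, each bounded by $2$ in absolute value; this yields $\Pr[w^{a,j,S}_{[1,t]\cap S'}>\tau_{S'}]\le 2e^{-\tau_{S'}^2/(8|S'|)}$ for every fixed $t$. Since $w^{a,j,S}_{[1,t]\cap S'}$ changes only on rounds in $S'$, it suffices to union bound over $t\in S'$, contributing a factor of $|S'|$. Taking the union bound over the protected subsequence $S\in\cS$ (a factor $|\cS|$), over $a\in\cA_S^{\E[\bc]}$ (at most $|\cA|$), over $j\in[J]$ (a factor $J$), and over the tracking subsequence $S'\in\cS$ together with $t\in S'$ (contributing $\sum_{S'\in\cS}|S'|$), the total failure probability is at most $2|\cA||\cS|J\sum_{S'\in\cS}|S'|e^{-\tau_{S'}^2/(8|S'|)}$, which after renaming $S'\to S$ is exactly the claimed bound.

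The step I expect to be most delicate is the measurability and bookkeeping surrounding the responsible index $r_t$. One must check that conditioning on the event $S_{r_s}=S$ does not break the martingale property --- this is why it is important that $r_s$ is fully determined before $y_s$ is revealed --- and one must be careful to attribute the threshold $\tau_{S'}$ and the count $|S'|$ to the tracking subsequence $S'$ (the subscript of $w$), while the responsible/protected subsequence $S$ contributes only the leading $|\cS|$ factor. The implication $S_{r_s}=S\Rightarrow s\in S$, which converts the responsibility constraint into the non-positivity of the conditional expectation, is the one place where the definition of $r_t$ is used essentially, and getting this link right is what makes the non-positive drift available for precisely the rounds that $w$ sums over.
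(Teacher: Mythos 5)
Your proof is correct and follows essentially the same route as the paper's: reduce the claim to showing the attributed violations $w$ never cross their thresholds, center each sum into a martingale difference sequence adapted to $\cF_{t-1}$ (checking that $a_s$, $\1[s\in S']$, and $\1[S_{r_s}=S]$ are all $\cF_{s-1}$-measurable), apply Azuma--Hoeffding with at most $|S'|$ nonzero increments each bounded by $2$, and union bound over actions, constraints, subsequence pairs, and rounds to obtain the identical failure probability $2|\cA||\cS|J\sum_{S\in\cS}|S|e^{-\tau_S^2/(8|S|)}$. The one point of divergence is which subsequence supplies the non-positive drift: you obtain $\E_{y_s\sim Y_s}[c_j(a,y_s)]\le 0$ from $a\in\cA_S^{\E[\bc]}$ with $S$ the \emph{responsible} subsequence, via the implication $S_{r_s}=S\Rightarrow s\in S$, whereas the paper centers using benchmark membership for the \emph{tracking} subsequence (the subscript of $w$); since every contributing round lies in the intersection of the two, either choice makes the drift non-positive, but your indexing is the one that lines up directly with the elimination rule (which removes $a$ from the responsible subsequence's candidate set), so your identification of that implication as the essential step is well placed.
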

\begin{proof}
    Fix any $j \in [J]$, $(S,S') \in \cS^2$, and $a \in \cA^{\E[\bc]}_{S}$, we have that for every $t \in [T]$:
    \begin{align*}
        \Pr\left[ w^{a,j,S'}_{[1,t] \cap S} > \tau_S \right] &= \Pr\left[ \sum_{s=1}^t \1[a_s=a, s \in S, S_{r_s}=S'] c_j(a,y_s) > \tau_S \right] \\
        &\le \Pr\left[ \sum_{s=1}^t \1[a_s=a, s \in S, S_{r_s}=S'] (c_j(a,y_s)-\E_{y_s \sim Y_s}[c_j(a,y_s)]) > \tau_S \right].
    \end{align*}

    Similarly to the proof of Proposition \ref{prop:benchmark-expectation}, we define the filtration $\cF_t = \sigma\left(\{(x_s,p_s,y_s)\}_{s=1}^t, x_{t+1}, p_{t+1}\right)$ and first show that $\{\1[a_s=a, s \in S, S_{r_s}=S'] (c_j(a,y_s)-\E_{y_s \sim Y_s}[c_j(a,y_s)])\}_{s=1}^T$ is a sequence of martingale differences.
    
    % We will again use the fact that the forecaster produces the prediction $p_t$ in two steps: After round $t-1$, the forecaster produces an implicit prediction function $\tilde p_t: \cX \to \cY$. Then the forecaster observes the feature $x_t$ and predicts $\tilde p_t(x_t)$.

    % As a result, the action played by the agent in round $t$ can be written as $a_t = \CBR^u_{U_t}(\tilde p_t(x_t))$. $U_t$ is the union of some candidate action set maintained by the agent based on the record of the actions up to round $t-1$, so $U_t \in \cF_{t-1}$. $\tilde p_t$ is produced by the forecaster using the information up to round $t-1$, so $\tilde p_t \in \cF_{t-1}$. We define $g_{t-1}(\cdot) = \CBR^u_{U_t}(\tilde p_t(\cdot))$ for convenience. Then $a_t = g_{t-1}(x_t)$ and $g_{t-1} \in \cF_{t-1}$.
    We have that $a_t = \CBR^u_{U_t}(p_t)$. $U_t$ is the union of some candidate action set maintained by the agent based on the record of the actions up to round $t-1$, so $U_t \in \cF_{t-1}$. $p_t \in \cF_{t-1}$ by definition of the filtration. So $a_t \in \cF_{t-1}$ and hence $\1[a_t=a] \in \cF_{t-1}$.

    We have that $\1[t \in S] = \1[h_{S_i}(t,x_t) = 1]$, where $x_t \in \cF_{t-1}$. As a result, $\1[t \in S] \in \cF_{t-1}$.

    % The responsible subsequence at round $t$ has index $r_t = \min\{i \in [I]: t \in S_i, a_t \in \widehat\cA_{t,S_i}^{\E[\bc]}\} = \min\{i \in [I]: h_{S_i}(t,x_t)=1, g_{t-1}(x_t) \in \widehat\cA_{t,S_i}^{\E[\bc]}\}$ where $g_{t-1} \in \cF_{t-1}$ and $\widehat\cA_{t,S_i}^{\E[\bc]} \in \cF_{t-1}$.
    The responsible subsequence at round $t$ has index $r_t = \min\{i \in [I]: t \in S_i, a_t \in \widehat\cA_{t,S_i}^{\E[\bc]}\} = \min\{i \in [I]: h_{S_i}(t,x_t)=1, a_t \in \widehat\cA_{t,S_i}^{\E[\bc]}\}$ where $x_t \in \cF_{t-1}$, $a_t \in \cF_{t-1}$, and $\widehat\cA_{t,S_i}^{\E[\bc]} \in \cF_{t-1}$. Therefore, $\1[S_{r_t} = S'] \in \cF_{t-1}$.

    Consequently, for any $t \in [T]$: 
    % \begin{align*}
    %    \MoveEqLeft \E\left[ \1[a_t=a, t \in S, S_{r_t}=S'] (c_j(a,y_t)-\E_{y_t \sim Y_t}[c_j(a,y_t)]) \mid \cF_{t-1} \right] \\
    %    &= \E\left[ \1[g_{t-1}(x_t)=a, t \in S, S_{\min\{i \in [I]: h_{S_i}(t,x_t)=1, g_{t-1}(x_t) \in \widehat\cA_{t,S_i}^{\E[\bc]}\}}=S'] (c_j(a,y_t)-\E_{y_t \sim Y_t}[c_j(a,y_t)]) \mid \cF_{t-1} \right] \\
    %    &= 0
    % \end{align*}
    \begin{align*}
       \MoveEqLeft \E\left[ \1[a_t=a, t \in S, S_{r_t}=S'] (c_j(a,y_t)-\E_{y_t \sim Y_t}[c_j(a,y_t)]) \mid \cF_{t-1} \right] \\
       &= \1[a_t=a, t \in S, S_{r_t}=S'] \E\left[ c_j(a,y_t)-\E_{y_t \sim Y_t}[c_j(a,y_t)] \mid \cF_{t-1} \right] \\
       &= 0.
    \end{align*}
    This implies that $\{\1[a_s=a, s \in S, S_{r_s}=S'] (c_j(a,y_s)-\E_{y_s \sim Y_s}[c_j(a,y_s)])\}_{s=1}^t$ is a sequence of martingale differences. 
    The subsequence of these terms corresponding to rounds $s \in S$, i.e., $\{\1[a_s=a, s \in S, S_{r_s}=S'] (c_j(a,y_s)-\E_{y_s \sim Y_s}[c_j(a,y_s)])\}_{s \in S: s \le t}$, is also a martingale difference sequence, because the selection rule is predictable with respect to the filtration $\cF_{s-1}$.
    By Azuma-Hoeffding inequality (Lemma \ref{lem:azuma}), we derive that for any $j \in [J]$, $(S,S') \in \cS^2$, $a \in \cA^{\E[\bc]}_{S}$, and $t \in S$:
    \begin{align*}
        \Pr\left[ w^{a,j,S'}_{[1,t] \cap S} > \tau_S \right] &\le \Pr\left[ \sum_{s=1}^t \1[a_s=a, s \in S, S_{r_s}=S'] (c_j(a,y_s)-\E_{y_s \sim Y_s}[c_j(a,y_s)]) > \tau_S \right] \\
        &= \Pr\left[ \sum_{s \in S: s \le t} \1[a_s=a, s \in S, S_{r_s}=S'] (c_j(a,y_s)-\E_{y_s \sim Y_s}[c_j(a,y_s)]) > \tau_S \right] \\
        &\le 2e^{-\frac{\tau^2_S}{8|S|}}.
    \end{align*}

    Using the union bound over every $j \in [J]$, $(S,S') \in \cS^2$, $a \in \cA^{\E[\bc]}_{1:T}$, and $t \in S$, we derive that:
    \begin{align*}
        \Pr\left[ \exists j \in [J], (S,S') \in \cS^2, a \in \cA^{\E[\bc]}_{S}, t \in S, \text{ such that } w^{a,j,S'}_{[1,t] \cap S} > \tau_S \right] &\le \sum_{j \in [J]} \sum_{(S,S') \in \cS^2} \sum_{a \in \cA^{\E[\bc]}_{S}} \sum_{t \in S} 2e^{-\frac{\tau_S^2}{8|S|}} \\
        &\le 2|\cA| |\cS| J \sum_{S \in \cS} |S| e^{-\frac{\tau_S^2}{8|S|}}.
    \end{align*}

    By the elimination rule of Algorithm \ref{alg:elimination-expectation-subsequence}, this implies that the agent never eliminates any action in $\cA^{\E[\bc]}_{S}$ across the $T$ rounds for any subsequence $S \in \cS$ with probability at least $1 - 2|\cA| |\cS| J \sum_{S \in \cS} |S| e^{-\frac{\tau_S^2}{8|S|}}$.
\end{proof}

% Proposition \ref{prop:benchmark-expectation-subsequence} is stated with respect to a single agent. Using the union bound over every agent in $\cN$, we derive that no agents in $\cN$ eliminates any benchmark action in $\cA^{\E[\bc]}_{S}$ across the $T$ rounds for any subsequence $S \in \cS$ with probability at least $1-2|\cA||\cN||\cS|^2JTe^{-\frac{\tau^2}{8T}}$. Therefore, we will set $\tau = 4\sqrt{T\ln\frac{|\cA||\cN||\cS|^2JT}{\delta}}$ where $\delta \in (0,1)$ is a (presumably small) constant. This leads to the following corollary regarding every agent, which holds with probability at least $1-\delta$.
Proposition \ref{prop:benchmark-expectation-subsequence} provides a guarantee for a single agent. To extend this to all agents in $\cN$, we apply a union bound. By setting the thresholds $\tau_S = 4\sqrt{|S|\ln\frac{|\cA||\cN||\cS|^2J|S|}{\delta}}$ for a desired failure probability $\delta \in (0,1)$, we arrive at the following corollary, which holds with probability at least $1-\delta$.
\begin{corollary} \label{cor:benchmark-expectation-subsequence}
    Let $\cS$ be a collection of subsequences. Let $\cN$ be a set of agents, where each agent is equipped with a utility function $u: \cA \times \cY \to [0,1]$ and $J$ constraint functions $\{c_j: \cA \times \cY \to [-1,1]\}_{j \in [J]}$. Each agent runs Algorithm \ref{alg:elimination-expectation-subsequence} to compete with the benchmark class $\cA^{\E[\bc]}_S$ over every subsequence $S \in \cS$, where the threshold parameter for each subsequence $S$ is set to $4\sqrt{|S|\ln\frac{|\mathcal{A}||\mathcal{N}||\mathcal{S}|^2J|S|}{\delta}}$. Then with probability at least $1-\delta$, for every agent:
    \begin{align*}
        \cA_{S}^{\E[\bc]} \subseteq \widehat\cA_{t,S}^{\E[\bc]} \subseteq U_t \text{ for every } t \in [T] \text{ and } S \in \cS_t.
    \end{align*}
\end{corollary}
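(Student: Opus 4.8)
The plan is to derive this corollary directly from the single-agent guarantee in Proposition \ref{prop:benchmark-expectation-subsequence} by a union bound over the agent set $\cN$, followed by substituting the prescribed threshold values and checking that the accumulated failure probability collapses to $\delta$. The structure mirrors the passage from Proposition \ref{prop:benchmark-expectation} to Corollary \ref{cor:benchmark-expectation} in the non-subsequence setting, so no new conceptual ingredient is required; the work is entirely in bookkeeping the many union-bound factors introduced by having both multiple agents and multiple (pairs of) subsequences.

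First I would invoke Proposition \ref{prop:benchmark-expectation-subsequence} for a single fixed agent running Algorithm \ref{alg:elimination-expectation-subsequence} with thresholds $\{\tau_S\}_{S\in\cS}$, which guarantees $\cA_S^{\E[\bc]} \subseteq \widehat\cA_{t,S}^{\E[\bc]} \subseteq U_t$ for all $t\in[T]$, $S\in\cS_t$ except on a bad event of probability at most $2|\cA||\cS|J\sum_{S\in\cS}|S|e^{-\tau_S^2/(8|S|)}$. Taking a union bound over all $|\cN|$ agents inflates this to at most $2|\cA||\cN||\cS|J\sum_{S\in\cS}|S|e^{-\tau_S^2/(8|S|)}$, and it suffices to force this quantity below $\delta$.

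The key computation is to plug in $\tau_S = 4\sqrt{|S|\ln\tfrac{|\cA||\cN||\cS|^2 J|S|}{\delta}}$, which gives $\tfrac{\tau_S^2}{8|S|} = 2\ln\tfrac{|\cA||\cN||\cS|^2 J|S|}{\delta}$ and hence $e^{-\tau_S^2/(8|S|)} = \bigl(\tfrac{\delta}{|\cA||\cN||\cS|^2 J|S|}\bigr)^2$. Multiplying by $|S|$ and summing, the per-$S$ term becomes $\tfrac{\delta^2}{(|\cA||\cN||\cS|^2 J)^2 |S|}$, so that $\sum_{S\in\cS}|S|e^{-\tau_S^2/(8|S|)} = \tfrac{\delta^2}{(|\cA||\cN||\cS|^2 J)^2}\sum_{S\in\cS}\tfrac{1}{|S|} \le \tfrac{\delta^2|\cS|}{(|\cA||\cN||\cS|^2 J)^2}$, where I use $\sum_{S}\tfrac{1}{|S|}\le|\cS|$. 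Reinserting the prefactor $2|\cA||\cN||\cS|J$ then yields a total failure probability of at most $\tfrac{2\delta^2}{|\cA||\cN||\cS|^2 J}$, which is bounded by $\delta$ since $\delta\le 1$ and every factor in the denominator is at least $1$.

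There is no genuine obstacle here; the only point that requires a little care is choosing the exponent in the threshold (the factor of $2$ inside $\tfrac{\tau_S^2}{8|S|}$, obtained from the constant $4$ in $\tau_S$) large enough that the quadratic decay $\delta^2$ overwhelms all the polynomial union-bound factors $|\cA|$, $|\cN|$, $|\cS|^2$, $J$, and $|S|$ simultaneously, and in particular that the $\sum_S 1/|S| \le |\cS|$ slack is absorbed. Once the threshold is calibrated so that each $|S|$-dependence cancels inside the logarithm, the final bound follows mechanically, establishing the claimed containment $\cA_S^{\E[\bc]} \subseteq \widehat\cA_{t,S}^{\E[\bc]} \subseteq U_t$ for all $t\in[T]$ and $S\in\cS_t$, for every agent, with probability at least $1-\delta$.
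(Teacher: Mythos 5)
Your proposal is correct and follows exactly the route the paper intends (the paper states the corollary without writing out the proof, citing only "union bound over agents plus the choice of thresholds"): invoke Proposition \ref{prop:benchmark-expectation-subsequence} per agent, union-bound over $\cN$, and verify that the threshold choice drives the total failure probability below $\delta$. Your arithmetic checks out — $\tfrac{\tau_S^2}{8|S|} = 2\ln\tfrac{|\cA||\cN||\cS|^2J|S|}{\delta}$ collapses the sum to $\tfrac{2\delta^2}{|\cA||\cN||\cS|^2J} \le \delta$ — so nothing further is needed.
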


We now bound the cumulative constraint violation of Algorithm \ref{alg:elimination-expectation-subsequence}. The reasoning is a granular extension of the argument for Theorem \ref{thm:CCV-expectation}. In Algorithm \ref{alg:elimination-expectation-subsequence}, an action $a$ is eliminated from the candidate action set of the responsible subsequence $S'$ only when one of its attributed constraint violations, $w_{[1,t] \cap S}^{a,j,S'}$, surpasses the threshold $\tau_{S'}$. This implies that each attributed constraint violation term, $w_{[1,t] \cap S}^{a,j,S'}$, is bounded by at most $\tau_{S'}+1$. Summing this bound over all possible responsible subsequences $S' \in \cS$ and all actions $a \in \cA$ gives the final bound on $\CCV(S)$.
\begin{theorem} \label{thm:CCV-expectation-subsequence}
    Let $\cS$ be a collection of subsequences. Let $\cN$ be a set of agents, where each agent is equipped with a utility function $u: \cA \times \cY \to [0,1]$ and $J$ constraint functions $\{c_j: \cA \times \cY \to [-1,1]\}_{j \in [J]}$. Each agent runs Algorithm \ref{alg:elimination-expectation-subsequence} to compete with the benchmark class $\cA^{\E[\bc]}_S$ over every subsequence $S \in \cS$, where the threshold parameter for each subsequence $S$ is set to $4\sqrt{|S|\ln\frac{|\mathcal{A}||\mathcal{N}||\mathcal{S}|^2J|S|}{\delta}}$. Then with probability at least $1-\delta$, the cumulative constraint violation of any agent over any subsequence $S \in \cS$ is bounded by:
    \begin{align*}
        \CCV(S) \le \sum_{S \in \cS} 4 |\cA| \sqrt{|S|\ln\frac{|\mathcal{A}||\mathcal{N}||\mathcal{S}|^2J|S|}{\delta}} + |\cA| |\cS|. 
        % \le 4 |\cA||\cS| \sqrt{T\ln\frac{|\cA||\cN||\cS|^2JT}{\delta}} + |\cA| |\cS|
    \end{align*}
\end{theorem}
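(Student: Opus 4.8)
The plan is to follow the template of Theorem~\ref{thm:CCV-expectation}, refining its ``decompose by action played'' step into a decomposition by action played \emph{and} by uniquely-attributed responsible subsequence --- which is exactly what the index $r_t$ is designed to support. First I would invoke Corollary~\ref{cor:benchmark-expectation-subsequence} to condition on the probability-$(1-\delta)$ event that no benchmark action of any subsequence is ever eliminated; since every benchmark class is assumed non-empty, this keeps every $U_t$ non-empty, so $a_t$ and $r_t$ are well-defined throughout, and all remaining bookkeeping is deterministic on this event.

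Fix $j \in [J]$ and a target subsequence $S \in \cS$; it suffices to bound $\sum_{t \in S} c_j(a_t,y_t)$ and then maximize over $j$. Because each round $t$ is credited to \emph{exactly one} responsible subsequence $S_{r_t}$, I can split the sum with no double counting:
\[
\sum_{t \in S} c_j(a_t,y_t) \;=\; \sum_{a \in \cA} \sum_{S' \in \cS} \sum_{\substack{t \in S:\, a_t=a,\ S_{r_t}=S'}} c_j(a,y_t) \;=\; \sum_{a \in \cA} \sum_{S' \in \cS} w^{a,j,S'}_{[1,T]\cap S}.
\]
This single-attribution step is the crux that tames arbitrary intersections among subsequences: a naive sum of per-subsequence violations would overcount on overlaps, whereas here each realized violation is charged to one $(a,S')$ pair only.

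It then remains to prove the per-term bound $w^{a,j,S'}_{[1,T]\cap S}\le \tau_{S'}+1$. The quantity $w^{a,j,S'}_{[\cdot]\cap S}$ increases only on rounds where $a_t=a$ and $S_{r_t}=S'$, and on every such round the elimination test for the pair $(a,S')$ is run against threshold $\tau_{S'}$; the moment any attributed violation $w^{a,j',S'}_{[1,t]\cap S''}$ (over any $j'$, any window $S''$) exceeds $\tau_{S'}$, the action $a$ is deleted from $\widehat\cA^{\E[\bc]}_{\cdot,S'}$ and can never again be responsible for $S'$, since membership in $\widehat\cA^{\E[\bc]}_{t,S'}$ is required for $S_{r_t}=S'$. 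Hence $w^{a,j,S'}_{[\cdot]\cap S}$ freezes after that round: either $a$ is never eliminated from $S'$ and all its partial values stay $\le \tau_{S'}$, or it is eliminated, in which case every partial value was $\le \tau_{S'}$ up to the round before the eliminating one and that round adds at most $c_j(a,y_t)\le 1$. Either way the per-term bound holds.

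Summing over the $|\cA|$ actions and $|\cS|$ subsequences and substituting $\tau_{S'}=4\sqrt{|S'|\ln\frac{|\cA||\cN||\cS|^2 J|S'|}{\delta}}$ gives
\[
\sum_{t\in S} c_j(a_t,y_t)\;\le\;\sum_{a\in\cA}\sum_{S'\in\cS}(\tau_{S'}+1)\;=\;\sum_{S'\in\cS}4|\cA|\sqrt{|S'|\ln\tfrac{|\cA||\cN||\cS|^2 J|S'|}{\delta}}+|\cA||\cS|,
\]
and maximizing over $j$ yields the stated bound, uniformly in $S$. I expect the freezing argument to be the main obstacle: one must check that attribution to $S'$ becomes impossible the instant $a$ leaves $\widehat\cA^{\E[\bc]}_{\cdot,S'}$, and that because each round feeds a single $S'$ it is the \emph{sum} $\sum_{S'}\tau_{S'}$ (sensitive to the individual lengths $|S'|$) that appears, rather than $|\cS|$ copies of a common threshold.
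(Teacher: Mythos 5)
Your proposal is correct and follows essentially the same route as the paper: condition on the event of Corollary~\ref{cor:benchmark-expectation-subsequence}, decompose $\sum_{t\in S}c_j(a_t,y_t)$ over pairs $(a,S')$ via the unique responsible-subsequence attribution, bound each term $w^{a,j,S'}_{[1,T]\cap S}$ by $\tau_{S'}+1$ using the elimination rule, and sum. Your ``freezing'' argument is just a slightly more explicit version of the paper's observation that only the final round of $R_{S,a,S'}$ can push the attributed violation above the threshold, so there is nothing substantive to add.
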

\begin{proof}
    By Corollary \ref{cor:benchmark-expectation-subsequence}, no agents in $\cN$ eliminates any benchmark action in $\cA_{S}^{\E[\bc]}$ across the $T$ rounds for any subsequence $S \in \cS$ with probability at least $1-\delta$. We will condition on this event, which ensures that the Algorithm \ref{alg:elimination-expectation-subsequence} is well-behaved and avoids the risk of eliminating all actions in $\cA$.    

    Fix any $j \in [J]$ and $S \in \cS$. We will bound the cumulative violation against the $j$-th constraint over $S$, i.e., $\sum_{t \in S} c_j(a_t, y_t)$. The final result follows by taking the maximum over all $j \in [J]$.
    
    We will partition the sum based on the action played and the responsible subsequence. For each action $a \in \cA$ and subsequence $S' \in \cS$, we define $R_{S,a,S'} = \{t \in [T] : a_t=a, t \in S, S_{r_t}=S'\}$. Then we have that:
    \begin{align*}
        \sum_{t \in S} c_j(a_t, y_t) &= \sum_{t=1}^T \1[t \in S] c_j(a_t, y_t) \\
        &= \sum_{a \in \cA} \sum_{S' \in \cS} \sum_{t=1}^T \1[a_t=a, t \in S, S_{r_t}=S'] c_j(a, y_t) \\
        &= \sum_{a \in \cA} \sum_{S' \in \cS} \sum_{t=1}^{\max R_{S,a,S'}} \1[a_t=a, t \in S, S_{r_t}=S'] c_j(a, y_t) \\
        &= \sum_{a \in \cA} \sum_{S' \in \cS} \left( \sum_{t=1}^{\max R_{S,a,S'}-1} \1[a_t=a, t \in S, S_{r_t}=S'] c_j(a, y_t) + c_j(a, y_{\max R_{S,a,S'}}) \right) \\
        &= \sum_{a \in \cA} \sum_{S' \in \cS} (w_{[1,\max R_{S,a,S'}-1] \cap S}^{a,j,S'} + c_j(a, y_{\max R_{S,a,S'}})) \\
        &\le \sum_{a \in \cA} \sum_{S' \in \cS} (\tau_{S'} + 1) \\
        &= \sum_{S \in \cS} 4 |\cA| \sqrt{|S|\ln\frac{|\mathcal{A}||\mathcal{N}||\mathcal{S}|^2J|S|}{\delta}} + |\cA||\cS|.
    \end{align*}
\end{proof}

Next, we focus on the regret against the benchmark class $\cA_S^\bc$ over every subsequence $S \in \cS$. Similarly to Section \ref{sec:regret}, we will obtain the same regret bounds for the benchmark class $\cA_S^{\E[\bc]}$ and relegate the results to Appendix \ref{app:subsequence}.

To obtain the regret bounds, we will need the predictions to be decision calibrated, conditional on every subsequence in $\cS$.

\begin{definition}[$(\cN,\cS,\alpha_2)$-Decision Calibration] \label{def:decision-calibration-realization-subsequence}
    Let $\cS$ be a collection of subsequences. Let $\cN$ be a set of agents, where each agent is equipped with a utility function $u: \cA \times \cY \to [0,1]$ and $J$ constraint functions $\{c_j: \cA \times \cY \to [-1,1]\}_{j \in [J]}$. Each agent will run Algorithm \ref{alg:elimination-realization-subsequence} to compete with the benchmark class $\cA^\bc_S$ over every subsequence $S \in \cS$. Each agent will maintain a candidate action set $\{U_t\}_{t \in [T]}$. Let $\alpha_2: \bR \to \bR$. We say that a sequence of predictions $p_1,\ldots,p_T$ is $(\cN,\cS,\alpha_2)$-decision calibrated with respect to a sequence of outcomes $y_1,\ldots,y_T$ if for every $S \in \cS$, $a\in \cA$, and $(u,\bc) \in \cN$:
    \[
        \left\| \sum_{t=1}^T \1[t \in S, \CBR^u_{U_t}(p_t)=a] (p_t - y_t) \right\|_\infty \leq \alpha_2(T^{u,\bc,S}(a)).
    \]
    where $T^{u,\bc,S}(a) = \sum_{t=1}^T \1[t \in S, \CBR^u_{U_t}(p_t)=a]$.
\end{definition}

Similarly to Theorems \ref{thm:swap-regret-realization} in the single-sequence case, decision calibration leads to the constrained swap regret bound.
\begin{theorem} \label{thm:swap-regret-realization-subsequence}
    Let $\cS$ be a collection of subsequences. Let $\cN$ be a set of agents, where each agent is equipped with a utility function $u: \cA \times \cY \to [0,1]$ and $J$ constraint functions $\{c_j: \cA \times \cY \to [-1,1]\}_{j \in [J]}$. The utility functions are linear and $L$-Lipschitz in the second argument. Each agent runs Algorithm \ref{alg:elimination-realization-subsequence} to compete with the benchmark class $\cA^\bc_S$ over every subsequence $S \in \cS$. If the sequence of predictions $p_1,\ldots,p_T$ is $(\cN,\cS,\alpha_2)$-decision calibrated, then the $\cA^\bc_{S}$-constrained swap regret of any agent over any subsequence $S \in \cS$ is bounded by:
    \begin{align*}
        \Reg_\swap(u,\cA_{S}^\bc,S) \le 2 L |\cA| \alpha_2(|S|/|\cA|).
    \end{align*}
\end{theorem}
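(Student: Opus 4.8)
The plan is to mirror the proof of Theorem \ref{thm:swap-regret-realization}, restricting every sum to the rounds $t \in S$ and replacing the single feasible set $\widehat\cA_t^\bc$ by the union set $U_t$ maintained by Algorithm \ref{alg:elimination-realization-subsequence}. Fix a subsequence $S \in \cS$, an agent $(u,\bc) \in \cN$, and an arbitrary modification rule $\phi : \cA \to \cA_S^\bc$. I would first write the telescoping decomposition of the regret on $S$:
\begin{align*}
    \sum_{t \in S} \left( u(\phi(a_t),y_t) - u(a_t,y_t) \right)
    &= \sum_{t \in S} \left( u(\phi(a_t),y_t) - u(\phi(a_t),p_t) \right) \\
    &\quad + \sum_{t \in S} \left( u(\phi(a_t),p_t) - u(a_t,p_t) \right) + \sum_{t \in S} \left( u(a_t,p_t) - u(a_t,y_t) \right),
\end{align*}
so that it suffices to control each of the three parts separately and then maximize over $\phi$.

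The middle (optimality) term is handled first, and this is the only place where the union-set structure must be checked. For every $t \in S$ we have $S \in \cS_t$, and since $\phi(a_t) \in \cA_S^\bc$, Proposition \ref{prop:benchmark-realization-subsequence} gives $\phi(a_t) \in \cA_S^\bc \subseteq \widehat\cA_{t,S}^\bc \subseteq U_t$. Because the agent plays $a_t = \CBR^u_{U_t}(p_t) = \argmax_{b \in U_t} u(b,p_t)$ and $\phi(a_t)$ lies in $U_t$, we obtain $u(\phi(a_t),p_t) \le u(a_t,p_t)$, so this term is at most $0$. The point worth emphasizing is that optimizing over the \emph{larger} union set $U_t$ rather than over $\widehat\cA_{t,S}^\bc$ alone only helps here, since a best response over a superset is at least as good; all we need is that the benchmark for $S$ survives inside $U_t$ on exactly the rounds of $S$, which is precisely what Proposition \ref{prop:benchmark-realization-subsequence} guarantees.

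For the two remaining terms I would prove subsequence analogues of Lemmas \ref{lem:decision-calibration-realization} and \ref{lem:decision-calibration-realization-swap}. In each case, insert $\1[t \in S] = \sum_{a \in \cA} \1[t \in S, \CBR^u_{U_t}(p_t)=a]$, use linearity of $u$ to pull the indicator-weighted sums into the second argument, apply the triangle inequality and $L$-Lipschitzness to reduce the quantity to $\sum_{a \in \cA} L \left\| \sum_{t=1}^T \1[t \in S, \CBR^u_{U_t}(p_t)=a](p_t - y_t)\right\|_\infty$, and then invoke $(\cN,\cS,\alpha_2)$-decision calibration to bound this by $L \sum_{a \in \cA} \alpha_2(T^{u,\bc,S}(a))$. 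Since $\sum_{a \in \cA} T^{u,\bc,S}(a) = |S|$, concavity of $\alpha_2$ gives $L |\cA| \alpha_2(|S|/|\cA|)$ for each of the two terms. Adding these to the non-positive middle term yields the claimed $2L|\cA|\alpha_2(|S|/|\cA|)$.

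I do not expect a genuine obstacle: the argument is a direct lift of the single-subsequence proof, with the index set $S$ and the set $U_t$ substituted throughout. The two points requiring care are (i) the containment $\cA_S^\bc \subseteq U_t$ for $t \in S$ from Proposition \ref{prop:benchmark-realization-subsequence}, which is what makes the optimality term vanish despite the agent best-responding over a union of many subsequences' feasible sets, and (ii) the implicit use that $\alpha_2$ is concave (the same standing assumption made for $\alpha_1$), needed for the Jensen step that converts $\sum_a \alpha_2(T^{u,\bc,S}(a))$ into $|\cA|\,\alpha_2(|S|/|\cA|)$.
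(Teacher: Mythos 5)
Your proposal is correct and follows essentially the same route as the paper's own proof: the identical three-term decomposition restricted to $t \in S$, the middle term killed via $\cA_S^\bc \subseteq \widehat\cA_{t,S}^\bc \subseteq U_t$ from Proposition \ref{prop:benchmark-realization-subsequence}, and the two bias terms bounded by subsequence analogues of Lemmas \ref{lem:decision-calibration-realization} and \ref{lem:decision-calibration-realization-swap} (which are exactly Lemmas \ref{lem:decision-calibration-realization-subsequence} and \ref{lem:decision-calibration-realization-subsequence-swap} in the appendix), finishing with concavity of $\alpha_2$. Your two flagged points of care are precisely the ones the paper relies on.
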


We will again instantiate \textsc{Unbiased-Prediction} to make decision calibrated predictions.

\begin{theorem} \label{thm:unbiased-algorithm-realization-subsequence}
    Let $\cS$ be a collection of subsequences. Let $\cN$ be a set of agents, where each agent is equipped with a utility function $u: \cA \times \cY \to [0,1]$ and $J$ constraint functions $\{c_j: \cA \times \cY \to [-1,1]\}_{j \in [J]}$. Each agent will run Algorithm \ref{alg:elimination-realization-subsequence} to compete with the benchmark class $\cA^\bc_S$ over every subsequence $S \in \cS$. Each agent will maintain a candidate action set $\{U_t\}_{t \in [T]}$. 
    There is an instantiation of \textsc{Unbiased-Prediction} \citep{noarov2023highdimensional}
    ---which we call \textsc{Decision-Calibration-Subsequence}---
    producing predictions $p_1,...,p_T \in \cY$ satisfying, for any sequence of outcomes $y_1,...,y_T \in \cY$, the following holds  
    with probability at least $1-\delta$ for any $(u,\bc) \in \cN$, $a \in \cA$, and $S \in \cS$:
    \[
        \left\| \sum_{t=1}^T \1[t \in S, \CBR^u_{U_{t}}(p_t) = a] (p_t - y_t) \right\|_\infty \leq O\left( \ln(d|\cA||\cN||\cS|T) + \sqrt{\ln(d|\cA||\cN||\cS|T) \cdot T^{u,\bc,S}(a)} + \sqrt{\ln\frac{d|\cA||\cN||\cS|}{\delta} \cdot |S|}\right).
    \]
    
\end{theorem}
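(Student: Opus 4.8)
The plan is to prove this exactly as in the single-subsequence case (Theorem~\ref{thm:unbiased-algorithm-realization}), by instantiating \textsc{Unbiased-Prediction} of \citet{noarov2023highdimensional} on a suitably enlarged collection of conditioning events and then reading off its guarantee verbatim. Recall that \textsc{Unbiased-Prediction} takes as input a family of ``test'' functions, each of which may depend on the round's context, the history, \emph{and} the realized prediction, and guarantees that the prediction error, summed over the rounds on which each test fires, is small in every coordinate. So the entire content of the theorem reduces to (i) choosing the right family of tests, (ii) checking that each test is admissible in the sense required by the algorithm, and (iii) bookkeeping the cardinality of the family and the support of each test.

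First I would define, for every agent $(u,\bc) \in \cN$, every action $a \in \cA$, and every subsequence $S \in \cS$, the test function that fires at round $t$ exactly when $\1[t \in S,\ \CBR^u_{U_t}(p_t) = a] = 1$. The crucial admissibility check is that each such test is a deterministic function of information available when the prediction is committed. Subsequence membership $t \in S$ is decided by $h_S(t,x_t)$, a function of $(t,x_t)$ alone; the union set $U_t = \cup_{S' \in \cS_t}\widehat\cA_{t,S'}^\bc$ used by Algorithm~\ref{alg:elimination-realization-subsequence} is fully determined by $x_t$ and the history through round $t-1$; and the constrained best response $\CBR^u_{U_t}(p_t)$ is then a function of $U_t$ and the realized prediction $p_t$. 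Thus each test depends only on the context $x_t$, the past, and $p_t$, which is precisely the (self-referential) form that \textsc{Unbiased-Prediction} is designed to accommodate via its minimax/fixed-point construction. There are $|\cN|\cdot|\cA|\cdot|\cS|$ such tests, and I would additionally union bound over the $d$ coordinates to pass from coordinatewise to $\ell_\infty$ control.

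Invoking the guarantee of \textsc{Unbiased-Prediction} (Appendix~\ref{app:unbiased-prediction}) on this family then yields the stated bound term by term. The number of tests times $d$ times $T$ enters only logarithmically, producing the $\ln(d|\cA||\cN||\cS|T)$ factor in the first two terms, and the deterministic bias contribution for a fixed test scales with the square root of the number of rounds on which it fires, namely $T^{u,\bc,S}(a)$, giving the middle term. The key structural point responsible for the sharpened $|S|$ (rather than $T$) in the last term is that the test $\1[t \in S,\ \CBR^u_{U_t}(p_t)=a]$ has support contained in $S$, so the sampling-induced, high-probability concentration term --- arising from the gap between the sampled predictions $p_t$ and their conditional means under $\pi_t$, controlled by an Azuma--Hoeffding/Freedman argument --- involves a sum with at most $|S|$ nonzero increments and hence is $O\!\left(\sqrt{\ln(d|\cA||\cN||\cS|/\delta)\cdot|S|}\right)$. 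A final union bound over all tests and coordinates delivers the claim simultaneously for all $(u,\bc,a,S)$ with probability $1-\delta$.

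The main obstacle is not any new calculation but rather confirming that the self-referential, subsequence-indexed tests genuinely fit the admissibility hypotheses of \textsc{Unbiased-Prediction}: one must verify that conditioning on an event that itself depends on the prediction being issued is handled by the algorithm's construction, and that $U_t$ and the responsible best response are measurable with respect to the pre-outcome information so that no ``peeking'' at $y_t$ occurs. Once that measurability is pinned down, the remainder is careful accounting --- in particular keeping the deterministic bias term (first two terms, depending on $T^{u,\bc,S}(a)$) cleanly separated from the sampling-concentration term (third term, depending on $|S|$), since it is exactly this separation that lets the eventual regret bound of Theorem~\ref{thm:swap-regret-realization-subsequence} scale as $\tilde O(\sqrt{|S|})$ on each subsequence rather than $\tilde O(\sqrt{T})$.
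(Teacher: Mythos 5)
Your proposal is correct and follows essentially the same route as the paper: instantiate \textsc{Unbiased-Prediction} with the $|\cN||\cA||\cS|$ events $\1[t \in S]\cdot\1[\CBR^u_{U_t}(p_t)=a]$, verify that each event is a function only of the history, $x_t$, and the realized prediction $p_t$ (so it is admissible), and then combine the algorithm's in-expectation bias bound with an Azuma--Hoeffding concentration step whose martingale difference sequence has at most $|S|$ nonzero increments, followed by a union bound over coordinates and events. This matches the paper's Corollary on the multi-subsequence instantiation, including the key point that the sampling-concentration term scales as $\sqrt{|S|}$ rather than $\sqrt{T}$ precisely because the event's support lies in $S$.
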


Substituting the above bounds into Theorems \ref{thm:swap-regret-realization-subsequence}, we arrive at the following corollary bounding the $\cA^\bc_{S}$-constrained swap regret over every subsequence $S \in \cS$.

\begin{corollary} \label{cor:swap-regret-realization-subsequence}
    Let $\cN$ be a set of agents, where each agent is equipped with a utility function $u: \cA \times \cY \to [0,1]$ and $J$ constraint functions $\{c_j: \cA \times \cY \to [-1,1]\}_{j \in [J]}$. The utility functions are linear and $L$-Lipschitz in the second argument. Each agent will run Algorithm \ref{alg:elimination-realization-subsequence} to compete with the benchmark class $\cA^\bc_S$ over every subsequence $S \in \cS$.
    The sequence of predictions $p_1,\ldots,p_T$ outputted by \textsc{Decision-Calibration-Subsequence} ensures that with probability at least $1-\delta$, the $\cA^\bc_{S}$-constrained swap regret of any agent over any subsequence $S \in \cS$ is bounded by:
    \begin{align*}
        \Reg_\swap(u,\cA_{S}^\bc,S) \le O\left( L |\cA| \sqrt{|S| \ln\frac{d|\cA||\cN||\cS|T}{\delta}} \right).
    \end{align*}
\end{corollary}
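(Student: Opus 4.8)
The plan is to combine the two preceding results: the decision-calibration guarantee of Theorem \ref{thm:unbiased-algorithm-realization-subsequence}, which controls the conditional bias of the predictions produced by \textsc{Decision-Calibration-Subsequence}, with the regret-from-calibration bound of Theorem \ref{thm:swap-regret-realization-subsequence}, which converts any such bias control into a swap-regret bound. Concretely, fix a subsequence $S \in \cS$. Theorem \ref{thm:unbiased-algorithm-realization-subsequence} guarantees, on a single high-probability ($1-\delta$) event simultaneously over all agents, actions, and subsequences, that the predictions satisfy the $(\cN,\cS,\alpha_2)$-decision calibration condition with
$$\alpha_2(n) = O\!\left( \ln(d|\cA||\cN||\cS|T) + \sqrt{\ln(d|\cA||\cN||\cS|T)\cdot n} + \sqrt{\ln\tfrac{d|\cA||\cN||\cS|}{\delta}\cdot |S|}\right).$$
First I would check that this $\alpha_2$ meets the concavity requirement (Assumption 2): it is a sum of a constant, a concave $\sqrt{\cdot}$ term, and a term independent of $n$, so it is concave in $n$, which is exactly what the Jensen step inside Theorem \ref{thm:swap-regret-realization-subsequence} needs.

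Next I would invoke Theorem \ref{thm:swap-regret-realization-subsequence}, which yields $\Reg_\swap(u,\cA_S^\bc,S) \le 2L|\cA|\,\alpha_2(|S|/|\cA|)$. Substituting $n = |S|/|\cA|$ and multiplying by $2L|\cA|$ produces three terms: an $O(L|\cA|\ln(\cdot))$ term from the constant, an $O(L\sqrt{|\cA||S|\ln(\cdot)})$ term from the middle square root (using $|\cA|\cdot\sqrt{|S|/|\cA|} = \sqrt{|\cA||S|}$), and an $O(L|\cA|\sqrt{|S|\ln(\cdot)})$ term from the last square root. The third term dominates the other two for $|S| \ge 1$ (since $|\cA| \ge \sqrt{|\cA|}$), and collapsing the two logarithmic factors into $\ln\frac{d|\cA||\cN||\cS|T}{\delta}$, which upper bounds both $\ln(d|\cA||\cN||\cS|T)$ and $\ln\frac{d|\cA||\cN||\cS|}{\delta}$, delivers the claimed bound $O(L|\cA|\sqrt{|S|\ln\frac{d|\cA||\cN||\cS|T}{\delta}})$.

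The one point requiring care — and the closest thing to an obstacle — is that decision calibration is defined with a single function $\alpha_2$, yet the last term $\sqrt{\ln(\cdot)\,|S|}$ of the bias bound depends on the subsequence. The resolution is that the argument is carried out separately for each fixed $S$: the proof of Theorem \ref{thm:swap-regret-realization-subsequence} only ever uses the calibration condition restricted to the single subsequence $S$ whose regret it bounds, so it suffices to feed it the subsequence-specific $\alpha_2$ above. This is precisely why the final rate scales with $\sqrt{|S|}$ rather than $\sqrt{T}$: the concentration underlying the bias bound for $S$ is taken over the $|S|$ rounds of that subsequence alone, not over the whole horizon. Since Theorem \ref{thm:unbiased-algorithm-realization-subsequence} already furnishes all of these bias bounds on one common $1-\delta$ event, gathering the per-subsequence conclusions yields the stated guarantee for every $S \in \cS$ simultaneously.
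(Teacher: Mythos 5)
Your proposal is correct and follows essentially the same route as the paper, which likewise obtains the corollary by substituting the bias bound of Theorem \ref{thm:unbiased-algorithm-realization-subsequence} (as the function $\alpha_2$) into Theorem \ref{thm:swap-regret-realization-subsequence} and simplifying. Your extra care about the subsequence-dependence of the $\sqrt{\ln(\cdot)\cdot|S|}$ term, resolved by applying the regret theorem separately for each fixed $S$ on the common $1-\delta$ event, is a valid and welcome clarification of a point the paper leaves implicit.
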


Since constrained swap regret is stronger than constrained external regret, we can also bound the $\cA^\bc_{S}$-constrained external regret over every subsequence $S \in \cS$.
\begin{corollary} \label{cor:external-regret-realization-subsequence}
    Let $\cN$ be a set of agents, where each agent is equipped with a utility function $u: \cA \times \cY \to [0,1]$ and $J$ constraint functions $\{c_j: \cA \times \cY \to [-1,1]\}_{j \in [J]}$. The utility functions are linear and $L$-Lipschitz in the second argument. Each agent will run Algorithm \ref{alg:elimination-realization-subsequence} to compete with the benchmark class $\cA^\bc_S$ over every subsequence $S \in \cS$.
    The sequence of predictions $p_1,\ldots,p_T$ outputted by \textsc{Decision-Calibration-Subsequence} ensures that with probability at least $1-\delta$, the $\cA^\bc_{S}$-constrained external regret of any agent over any subsequence $S \in \cS$ is bounded by:
    \begin{align*}
        \Reg_\ext(u,\cA_{S}^\bc,S) \le O\left( L |\cA| \sqrt{|S| \ln\frac{d|\cA||\cN||\cS|T}{\delta}} \right).
    \end{align*}
\end{corollary}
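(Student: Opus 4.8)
The plan is to obtain this corollary immediately from the swap-regret bound of Corollary \ref{cor:swap-regret-realization-subsequence}, using the elementary fact that constrained external regret is a special case of constrained swap regret in which the action modification rule is required to be constant.

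First I would fix an agent $(u,\bc) \in \cN$ and a subsequence $S \in \cS$, and compare the two regret notions for the benchmark class $\cB = \cA_S^\bc$. For any fixed benchmark action $a^\star \in \cA_S^\bc$, the constant map $\phi_{a^\star}: \cA \to \cA_S^\bc$ defined by $\phi_{a^\star}(a) = a^\star$ for all $a \in \cA$ is a legitimate action modification rule, and it satisfies $\sum_{t \in S}\bigl(u(\phi_{a^\star}(a_t),y_t) - u(a_t,y_t)\bigr) = \sum_{t \in S}\bigl(u(a^\star,y_t) - u(a_t,y_t)\bigr)$. Taking the maximum over $a^\star \in \cA_S^\bc$ on the left recovers exactly $\Reg_\ext(u,\cA_S^\bc,S)$, while the right-hand side is a maximum over a subfamily (the constant rules) of the modification rules $\phi: \cA \to \cA_S^\bc$ appearing in the definition of $\Reg_\swap(u,\cA_S^\bc,S)$.

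It follows that $\Reg_\ext(u,\cA_S^\bc,S) \le \Reg_\swap(u,\cA_S^\bc,S)$. Crucially, this domination holds deterministically, path-by-path, for every realized action/outcome sequence, so it introduces no additional failure probability. I would then invoke Corollary \ref{cor:swap-regret-realization-subsequence}, which guarantees that with probability at least $1-\delta$ the predictions produced by \textsc{Decision-Calibration-Subsequence} yield $\Reg_\swap(u,\cA_S^\bc,S) \le O\bigl(L|\cA|\sqrt{|S|\ln(d|\cA||\cN||\cS|T/\delta)}\bigr)$ simultaneously for every agent and every subsequence. Chaining this with the deterministic inequality gives the claimed external-regret bound on the same $1-\delta$ event.

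There is essentially no obstacle to overcome: all of the substantive work---the decision-calibration guarantee of Theorem \ref{thm:unbiased-algorithm-realization-subsequence}, the regret decomposition underlying Theorem \ref{thm:swap-regret-realization-subsequence}, and the union bound over agents and subsequences---has already been expended in establishing the swap-regret corollary. The only content here is the one-line reduction from external to swap regret, and since swap regret dominates external regret uniformly, external regret inherits the identical bound for free.
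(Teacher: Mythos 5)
Your proposal is correct and matches the paper's approach exactly: the paper derives this corollary from Corollary \ref{cor:swap-regret-realization-subsequence} by noting that constrained external regret is the special case of constrained swap regret in which the modification rule is constant, which is precisely your pointwise domination argument. No further comment is needed.
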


\bibliographystyle{ACM-Reference-Format}
\bibliography{references}

%%% -*-BibTeX-*-
%%% Do NOT edit. File created by BibTeX with style
%%% ACM-Reference-Format-Journals [18-Jan-2012].

\begin{thebibliography}{41}

%%% ====================================================================
%%% NOTE TO THE USER: you can override these defaults by providing
%%% customized versions of any of these macros before the \bibliography
%%% command.  Each of them MUST provide its own final punctuation,
%%% except for \shownote{} and \showURL{}.  The latter two
%%% do not use final punctuation, in order to avoid confusing it with
%%% the Web address.
%%%
%%% To suppress output of a particular field, define its macro to expand
%%% to an empty string, or better, \unskip, like this:
%%%
%%% \newcommand{\showURL}[1]{\unskip}   % LaTeX syntax
%%%
%%% \def \showURL #1{\unskip}           % plain TeX syntax
%%%
%%% ====================================================================

\ifx \showCODEN    \undefined \def \showCODEN     #1{\unskip}     \fi
\ifx \showISBNx    \undefined \def \showISBNx     #1{\unskip}     \fi
\ifx \showISBNxiii \undefined \def \showISBNxiii  #1{\unskip}     \fi
\ifx \showISSN     \undefined \def \showISSN      #1{\unskip}     \fi
\ifx \showLCCN     \undefined \def \showLCCN      #1{\unskip}     \fi
\ifx \shownote     \undefined \def \shownote      #1{#1}          \fi
\ifx \showarticletitle \undefined \def \showarticletitle #1{#1}   \fi
\ifx \showURL      \undefined \def \showURL       {\relax}        \fi
% The following commands are used for tagged output and should be
% invisible to TeX
\providecommand\bibfield[2]{#2}
\providecommand\bibinfo[2]{#2}
\providecommand\natexlab[1]{#1}
\providecommand\showeprint[2][]{arXiv:#2}

\bibitem[Anderson et~al\mbox{.}(2022)]%
        {anderson2022lazy}
\bibfield{author}{\bibinfo{person}{Daron Anderson}, \bibinfo{person}{George Iosifidis}, {and} \bibinfo{person}{Douglas~J Leith}.} \bibinfo{year}{2022}\natexlab{}.
\newblock \showarticletitle{Lazy lagrangians with predictions for online learning}.
\newblock \bibinfo{journal}{\emph{arXiv preprint arXiv:2201.02890}} (\bibinfo{year}{2022}).
\newblock


\bibitem[Arora et~al\mbox{.}(2012)]%
        {arora2012multiplicative}
\bibfield{author}{\bibinfo{person}{Sanjeev Arora}, \bibinfo{person}{Elad Hazan}, {and} \bibinfo{person}{Satyen Kale}.} \bibinfo{year}{2012}\natexlab{}.
\newblock \showarticletitle{The multiplicative weights update method: a meta-algorithm and applications}.
\newblock \bibinfo{journal}{\emph{Theory of Computing}} \bibinfo{volume}{8}, \bibinfo{number}{1} (\bibinfo{year}{2012}), \bibinfo{pages}{121--164}.
\newblock


\bibitem[Arunachaleswaran et~al\mbox{.}(2024)]%
        {arunachaleswaran2024pareto}
\bibfield{author}{\bibinfo{person}{Eshwar~Ram Arunachaleswaran}, \bibinfo{person}{Natalie Collina}, {and} \bibinfo{person}{Jon Schneider}.} \bibinfo{year}{2024}\natexlab{}.
\newblock \showarticletitle{Pareto-optimal algorithms for learning in games}. In \bibinfo{booktitle}{\emph{Proceedings of the 25th ACM Conference on Economics and Computation}}. \bibinfo{pages}{490--510}.
\newblock


\bibitem[Blum and Mansour(2007)]%
        {blum2007external}
\bibfield{author}{\bibinfo{person}{Avrim Blum} {and} \bibinfo{person}{Yishay Mansour}.} \bibinfo{year}{2007}\natexlab{}.
\newblock \showarticletitle{From external to internal regret.}
\newblock \bibinfo{journal}{\emph{Journal of Machine Learning Research}} \bibinfo{volume}{8}, \bibinfo{number}{6} (\bibinfo{year}{2007}).
\newblock


\bibitem[Braverman et~al\mbox{.}(2018)]%
        {braverman2018selling}
\bibfield{author}{\bibinfo{person}{Mark Braverman}, \bibinfo{person}{Jieming Mao}, \bibinfo{person}{Jon Schneider}, {and} \bibinfo{person}{Matt Weinberg}.} \bibinfo{year}{2018}\natexlab{}.
\newblock \showarticletitle{Selling to a no-regret buyer}. In \bibinfo{booktitle}{\emph{Proceedings of the 2018 ACM Conference on Economics and Computation}}. \bibinfo{pages}{523--538}.
\newblock


\bibitem[Castiglioni et~al\mbox{.}(2022)]%
        {Castiglioni22}
\bibfield{author}{\bibinfo{person}{Matteo Castiglioni}, \bibinfo{person}{Andrea Celli}, \bibinfo{person}{Alberto Marchesi}, \bibinfo{person}{Giulia Romano}, {and} \bibinfo{person}{Nicola Gatti}.} \bibinfo{year}{2022}\natexlab{}.
\newblock \showarticletitle{A Unifying Framework for Online Optimization with Long-Term Constraints}. In \bibinfo{booktitle}{\emph{Advances in Neural Information Processing Systems 35: Annual Conference on Neural Information Processing Systems 2022, NeurIPS 2022, New Orleans, LA, USA, November 28 - December 9, 2022}}, \bibfield{editor}{\bibinfo{person}{Sanmi Koyejo}, \bibinfo{person}{S.~Mohamed}, \bibinfo{person}{A.~Agarwal}, \bibinfo{person}{Danielle Belgrave}, \bibinfo{person}{K.~Cho}, {and} \bibinfo{person}{A.~Oh}} (Eds.).
\newblock


\bibitem[Cesa-Bianchi et~al\mbox{.}(1997)]%
        {cesa}
\bibfield{author}{\bibinfo{person}{Nicol\`{o} Cesa-Bianchi}, \bibinfo{person}{Yoav Freund}, \bibinfo{person}{David Haussler}, \bibinfo{person}{David~P. Helmbold}, \bibinfo{person}{Robert~E. Schapire}, {and} \bibinfo{person}{Manfred~K. Warmuth}.} \bibinfo{year}{1997}\natexlab{}.
\newblock \showarticletitle{How to Use Expert Advice}.
\newblock \bibinfo{journal}{\emph{J. ACM}} \bibinfo{volume}{44}, \bibinfo{number}{3} (\bibinfo{date}{May} \bibinfo{year}{1997}), \bibinfo{pages}{427–485}.
\newblock


\bibitem[Dagan et~al\mbox{.}(2024)]%
        {dagan2024improved}
\bibfield{author}{\bibinfo{person}{Yuval Dagan}, \bibinfo{person}{Constantinos Daskalakis}, \bibinfo{person}{Maxwell Fishelson}, \bibinfo{person}{Noah Golowich}, \bibinfo{person}{Robert Kleinberg}, {and} \bibinfo{person}{Princewill Okoroafor}.} \bibinfo{year}{2024}\natexlab{}.
\newblock \showarticletitle{Improved bounds for calibration via stronger sign preservation games}.
\newblock \bibinfo{journal}{\emph{arXiv preprint arXiv:2406.13668}} (\bibinfo{year}{2024}).
\newblock


\bibitem[Deng et~al\mbox{.}(2019)]%
        {deng2019strategizing}
\bibfield{author}{\bibinfo{person}{Yuan Deng}, \bibinfo{person}{Jon Schneider}, {and} \bibinfo{person}{Balasubramanian Sivan}.} \bibinfo{year}{2019}\natexlab{}.
\newblock \showarticletitle{Strategizing against no-regret learners}.
\newblock \bibinfo{journal}{\emph{Advances in neural information processing systems}}  \bibinfo{volume}{32} (\bibinfo{year}{2019}).
\newblock


\bibitem[Dwork et~al\mbox{.}(2025)]%
        {dwork2024fairness}
\bibfield{author}{\bibinfo{person}{Cynthia Dwork}, \bibinfo{person}{Chris Hays}, \bibinfo{person}{Nicole Immorlica}, \bibinfo{person}{Juan~C. Perdomo}, {and} \bibinfo{person}{Pranay Tankala}.} \bibinfo{year}{2025}\natexlab{}.
\newblock \showarticletitle{From Fairness to Infinity: Outcome-Indistinguishable (Omni)Prediction in Evolving Graphs}. In \bibinfo{booktitle}{\emph{The Thirty Eighth Annual Conference on Learning Theory, 30-4 July 2025, Lyon, France}} \emph{(\bibinfo{series}{Proceedings of Machine Learning Research}, Vol.~\bibinfo{volume}{291})}, \bibfield{editor}{\bibinfo{person}{Nika Haghtalab} {and} \bibinfo{person}{Ankur Moitra}} (Eds.). \bibinfo{publisher}{{PMLR}}, \bibinfo{pages}{1564--1637}.
\newblock


\bibitem[Garg et~al\mbox{.}(2024)]%
        {garg2024oracle}
\bibfield{author}{\bibinfo{person}{Sumegha Garg}, \bibinfo{person}{Christopher Jung}, \bibinfo{person}{Omer Reingold}, {and} \bibinfo{person}{Aaron Roth}.} \bibinfo{year}{2024}\natexlab{}.
\newblock \showarticletitle{Oracle efficient online multicalibration and omniprediction}. In \bibinfo{booktitle}{\emph{Proceedings of the 2024 Annual ACM-SIAM Symposium on Discrete Algorithms (SODA)}}. SIAM, \bibinfo{pages}{2725--2792}.
\newblock


\bibitem[Globus-Harris et~al\mbox{.}(2023)]%
        {globus2023multicalibrated}
\bibfield{author}{\bibinfo{person}{Ira Globus-Harris}, \bibinfo{person}{Varun Gupta}, \bibinfo{person}{Christopher Jung}, \bibinfo{person}{Michael Kearns}, \bibinfo{person}{Jamie Morgenstern}, {and} \bibinfo{person}{Aaron Roth}.} \bibinfo{year}{2023}\natexlab{}.
\newblock \showarticletitle{Multicalibrated regression for downstream fairness}. In \bibinfo{booktitle}{\emph{Proceedings of the 2023 AAAI/ACM Conference on AI, Ethics, and Society}}. \bibinfo{pages}{259--286}.
\newblock


\bibitem[Globus{-}Harris et~al\mbox{.}(2023)]%
        {globus2023multicalibration}
\bibfield{author}{\bibinfo{person}{Ira Globus{-}Harris}, \bibinfo{person}{Declan Harrison}, \bibinfo{person}{Michael Kearns}, \bibinfo{person}{Aaron Roth}, {and} \bibinfo{person}{Jessica Sorrell}.} \bibinfo{year}{2023}\natexlab{}.
\newblock \showarticletitle{Multicalibration as Boosting for Regression}. In \bibinfo{booktitle}{\emph{International Conference on Machine Learning, {ICML} 2023, 23-29 July 2023, Honolulu, Hawaii, {USA}}} \emph{(\bibinfo{series}{Proceedings of Machine Learning Research}, Vol.~\bibinfo{volume}{202})}, \bibfield{editor}{\bibinfo{person}{Andreas Krause}, \bibinfo{person}{Emma Brunskill}, \bibinfo{person}{Kyunghyun Cho}, \bibinfo{person}{Barbara Engelhardt}, \bibinfo{person}{Sivan Sabato}, {and} \bibinfo{person}{Jonathan Scarlett}} (Eds.). \bibinfo{publisher}{{PMLR}}, \bibinfo{pages}{11459--11492}.
\newblock


\bibitem[Gopalan et~al\mbox{.}(2023a)]%
        {gopalan2022loss}
\bibfield{author}{\bibinfo{person}{Parikshit Gopalan}, \bibinfo{person}{Lunjia Hu}, \bibinfo{person}{Michael~P. Kim}, \bibinfo{person}{Omer Reingold}, {and} \bibinfo{person}{Udi Wieder}.} \bibinfo{year}{2023}\natexlab{a}.
\newblock \showarticletitle{Loss Minimization Through the Lens Of Outcome Indistinguishability}. In \bibinfo{booktitle}{\emph{14th Innovations in Theoretical Computer Science Conference, {ITCS} 2023, January 10-13, 2023, MIT, Cambridge, Massachusetts, {USA}}} \emph{(\bibinfo{series}{LIPIcs}, Vol.~\bibinfo{volume}{251})}, \bibfield{editor}{\bibinfo{person}{Yael~Tauman Kalai}} (Ed.). \bibinfo{publisher}{Schloss Dagstuhl - Leibniz-Zentrum f{\"{u}}r Informatik}, \bibinfo{pages}{60:1--60:20}.
\newblock


\bibitem[Gopalan et~al\mbox{.}(2022)]%
        {gopalan2021omnipredictors}
\bibfield{author}{\bibinfo{person}{Parikshit Gopalan}, \bibinfo{person}{Adam~Tauman Kalai}, \bibinfo{person}{Omer Reingold}, \bibinfo{person}{Vatsal Sharan}, {and} \bibinfo{person}{Udi Wieder}.} \bibinfo{year}{2022}\natexlab{}.
\newblock \showarticletitle{Omnipredictors}. In \bibinfo{booktitle}{\emph{13th Innovations in Theoretical Computer Science Conference, {ITCS} 2022, January 31 - February 3, 2022, Berkeley, CA, {USA}}} \emph{(\bibinfo{series}{LIPIcs}, Vol.~\bibinfo{volume}{215})}, \bibfield{editor}{\bibinfo{person}{Mark Braverman}} (Ed.). \bibinfo{publisher}{Schloss Dagstuhl - Leibniz-Zentrum f{\"{u}}r Informatik}, \bibinfo{pages}{79:1--79:21}.
\newblock


\bibitem[Gopalan et~al\mbox{.}(2023b)]%
        {gopalan2023swap}
\bibfield{author}{\bibinfo{person}{Parikshit Gopalan}, \bibinfo{person}{Michael~P. Kim}, {and} \bibinfo{person}{Omer Reingold}.} \bibinfo{year}{2023}\natexlab{b}.
\newblock \showarticletitle{Swap Agnostic Learning, or Characterizing Omniprediction via Multicalibration}. In \bibinfo{booktitle}{\emph{Advances in Neural Information Processing Systems 36: Annual Conference on Neural Information Processing Systems 2023, NeurIPS 2023, New Orleans, LA, USA, December 10 - 16, 2023}}, \bibfield{editor}{\bibinfo{person}{Alice Oh}, \bibinfo{person}{Tristan Naumann}, \bibinfo{person}{Amir Globerson}, \bibinfo{person}{Kate Saenko}, \bibinfo{person}{Moritz Hardt}, {and} \bibinfo{person}{Sergey Levine}} (Eds.).
\newblock


\bibitem[Gopalan et~al\mbox{.}(2024)]%
        {gopalan2024omnipredictors}
\bibfield{author}{\bibinfo{person}{Parikshit Gopalan}, \bibinfo{person}{Princewill Okoroafor}, \bibinfo{person}{Prasad Raghavendra}, \bibinfo{person}{Abhishek Sherry}, {and} \bibinfo{person}{Mihir Singhal}.} \bibinfo{year}{2024}\natexlab{}.
\newblock \showarticletitle{Omnipredictors for regression and the approximate rank of convex functions}. In \bibinfo{booktitle}{\emph{The Thirty Seventh Annual Conference on Learning Theory, June 30 - July 3, 2023, Edmonton, Canada}} \emph{(\bibinfo{series}{Proceedings of Machine Learning Research}, Vol.~\bibinfo{volume}{247})}, \bibfield{editor}{\bibinfo{person}{Shipra Agrawal} {and} \bibinfo{person}{Aaron Roth}} (Eds.). \bibinfo{publisher}{{PMLR}}, \bibinfo{pages}{2027--2070}.
\newblock


\bibitem[Guo et~al\mbox{.}(2022)]%
        {guo2022online}
\bibfield{author}{\bibinfo{person}{Hengquan Guo}, \bibinfo{person}{Xin Liu}, \bibinfo{person}{Honghao Wei}, {and} \bibinfo{person}{Lei Ying}.} \bibinfo{year}{2022}\natexlab{}.
\newblock \showarticletitle{Online Convex Optimization with Hard Constraints: Towards the Best of Two Worlds and Beyond}. In \bibinfo{booktitle}{\emph{Advances in Neural Information Processing Systems 35: Annual Conference on Neural Information Processing Systems 2022, NeurIPS 2022, New Orleans, LA, USA, November 28 - December 9, 2022}}, \bibfield{editor}{\bibinfo{person}{Sanmi Koyejo}, \bibinfo{person}{S.~Mohamed}, \bibinfo{person}{A.~Agarwal}, \bibinfo{person}{Danielle Belgrave}, \bibinfo{person}{K.~Cho}, {and} \bibinfo{person}{A.~Oh}} (Eds.).
\newblock


\bibitem[Hannan(1957)]%
        {hannan}
\bibfield{author}{\bibinfo{person}{James Hannan}.} \bibinfo{year}{1957}\natexlab{}.
\newblock \showarticletitle{Approximation to Bayes risk in repeated play}.
\newblock \bibinfo{journal}{\emph{Contributions to the Theory of Games}} \bibinfo{volume}{3}, \bibinfo{number}{2} (\bibinfo{year}{1957}), \bibinfo{pages}{97--139}.
\newblock


\bibitem[H{\'{e}}bert{-}Johnson et~al\mbox{.}(2018)]%
        {hebert2018multicalibration}
\bibfield{author}{\bibinfo{person}{{\'{U}}rsula H{\'{e}}bert{-}Johnson}, \bibinfo{person}{Michael~P. Kim}, \bibinfo{person}{Omer Reingold}, {and} \bibinfo{person}{Guy~N. Rothblum}.} \bibinfo{year}{2018}\natexlab{}.
\newblock \showarticletitle{Multicalibration: Calibration for the (Computationally-Identifiable) Masses}. In \bibinfo{booktitle}{\emph{Proceedings of the 35th International Conference on Machine Learning, {ICML} 2018, Stockholmsm{\"{a}}ssan, Stockholm, Sweden, July 10-15, 2018}} \emph{(\bibinfo{series}{Proceedings of Machine Learning Research}, Vol.~\bibinfo{volume}{80})}, \bibfield{editor}{\bibinfo{person}{Jennifer~G. Dy} {and} \bibinfo{person}{Andreas Krause}} (Eds.). \bibinfo{publisher}{{PMLR}}, \bibinfo{pages}{1944--1953}.
\newblock


\bibitem[Hu et~al\mbox{.}(2023)]%
        {hu2023omnipredictors}
\bibfield{author}{\bibinfo{person}{Lunjia Hu}, \bibinfo{person}{Inbal Rachel~Livni Navon}, \bibinfo{person}{Omer Reingold}, {and} \bibinfo{person}{Chutong Yang}.} \bibinfo{year}{2023}\natexlab{}.
\newblock \showarticletitle{Omnipredictors for Constrained Optimization}. In \bibinfo{booktitle}{\emph{International Conference on Machine Learning, {ICML} 2023, 23-29 July 2023, Honolulu, Hawaii, {USA}}} \emph{(\bibinfo{series}{Proceedings of Machine Learning Research}, Vol.~\bibinfo{volume}{202})}, \bibfield{editor}{\bibinfo{person}{Andreas Krause}, \bibinfo{person}{Emma Brunskill}, \bibinfo{person}{Kyunghyun Cho}, \bibinfo{person}{Barbara Engelhardt}, \bibinfo{person}{Sivan Sabato}, {and} \bibinfo{person}{Jonathan Scarlett}} (Eds.). \bibinfo{publisher}{{PMLR}}, \bibinfo{pages}{13497--13527}.
\newblock


\bibitem[Hu et~al\mbox{.}(2025)]%
        {hu2025omnipredicting}
\bibfield{author}{\bibinfo{person}{Lunjia Hu}, \bibinfo{person}{Kevin Tian}, {and} \bibinfo{person}{Chutong Yang}.} \bibinfo{year}{2025}\natexlab{}.
\newblock \showarticletitle{Omnipredicting Single-Index Models with Multi-index Models}. In \bibinfo{booktitle}{\emph{Proceedings of the 57th Annual {ACM} Symposium on Theory of Computing, {STOC} 2025, Prague, Czechia, June 23-27, 2025}}, \bibfield{editor}{\bibinfo{person}{Michal Kouck{\'{y}}} {and} \bibinfo{person}{Nikhil Bansal}} (Eds.). \bibinfo{publisher}{{ACM}}, \bibinfo{pages}{1762--1773}.
\newblock


\bibitem[Hu and Wu(2024)]%
        {hu2024predict}
\bibfield{author}{\bibinfo{person}{Lunjia Hu} {and} \bibinfo{person}{Yifan Wu}.} \bibinfo{year}{2024}\natexlab{}.
\newblock \showarticletitle{Predict to Minimize Swap Regret for All Payoff-Bounded Tasks}. In \bibinfo{booktitle}{\emph{65th {IEEE} Annual Symposium on Foundations of Computer Science, {FOCS} 2024, Chicago, IL, USA, October 27-30, 2024}}. \bibinfo{publisher}{{IEEE}}, \bibinfo{pages}{244--263}.
\newblock


\bibitem[Kleinberg et~al\mbox{.}(2023)]%
        {kleinberg2023u}
\bibfield{author}{\bibinfo{person}{Bobby Kleinberg}, \bibinfo{person}{Renato~Paes Leme}, \bibinfo{person}{Jon Schneider}, {and} \bibinfo{person}{Yifeng Teng}.} \bibinfo{year}{2023}\natexlab{}.
\newblock \showarticletitle{U-Calibration: Forecasting for an Unknown Agent}. In \bibinfo{booktitle}{\emph{The Thirty Sixth Annual Conference on Learning Theory, {COLT} 2023, 12-15 July 2023, Bangalore, India}} \emph{(\bibinfo{series}{Proceedings of Machine Learning Research}, Vol.~\bibinfo{volume}{195})}, \bibfield{editor}{\bibinfo{person}{Gergely Neu} {and} \bibinfo{person}{Lorenzo Rosasco}} (Eds.). \bibinfo{publisher}{{PMLR}}, \bibinfo{pages}{5143--5145}.
\newblock


\bibitem[Lekeufack and Jordan(2024)]%
        {lekeufack2024optimistic}
\bibfield{author}{\bibinfo{person}{Jordan Lekeufack} {and} \bibinfo{person}{Michael~I Jordan}.} \bibinfo{year}{2024}\natexlab{}.
\newblock \showarticletitle{An optimistic algorithm for online convex optimization with adversarial constraints}.
\newblock \bibinfo{journal}{\emph{arXiv preprint arXiv:2412.08060}} (\bibinfo{year}{2024}).
\newblock


\bibitem[Littlestone and Warmuth(1994)]%
        {LittlestoneW94}
\bibfield{author}{\bibinfo{person}{Nick Littlestone} {and} \bibinfo{person}{Manfred~K. Warmuth}.} \bibinfo{year}{1994}\natexlab{}.
\newblock \showarticletitle{The Weighted Majority Algorithm}.
\newblock \bibinfo{journal}{\emph{Inf. Comput.}} \bibinfo{volume}{108}, \bibinfo{number}{2} (\bibinfo{year}{1994}), \bibinfo{pages}{212--261}.
\newblock


\bibitem[Lu et~al\mbox{.}(2025)]%
        {lu2025sample}
\bibfield{author}{\bibinfo{person}{Jiuyao Lu}, \bibinfo{person}{Aaron Roth}, {and} \bibinfo{person}{Mirah Shi}.} \bibinfo{year}{2025}\natexlab{}.
\newblock \showarticletitle{Sample Efficient Omniprediction and Downstream Swap Regret for Non-Linear Losses}. In \bibinfo{booktitle}{\emph{The Thirty Eighth Annual Conference on Learning Theory, 30-4 July 2025, Lyon, France}} \emph{(\bibinfo{series}{Proceedings of Machine Learning Research}, Vol.~\bibinfo{volume}{291})}, \bibfield{editor}{\bibinfo{person}{Nika Haghtalab} {and} \bibinfo{person}{Ankur Moitra}} (Eds.). \bibinfo{publisher}{{PMLR}}, \bibinfo{pages}{3829--3878}.
\newblock


\bibitem[Mannor et~al\mbox{.}(2009)]%
        {MannorTY09}
\bibfield{author}{\bibinfo{person}{Shie Mannor}, \bibinfo{person}{John~N. Tsitsiklis}, {and} \bibinfo{person}{Jia~Yuan Yu}.} \bibinfo{year}{2009}\natexlab{}.
\newblock \showarticletitle{Online Learning with Sample Path Constraints}.
\newblock \bibinfo{journal}{\emph{J. Mach. Learn. Res.}}  \bibinfo{volume}{10} (\bibinfo{year}{2009}), \bibinfo{pages}{569--590}.
\newblock


\bibitem[Neely and Yu(2017)]%
        {neelyYu2017}
\bibfield{author}{\bibinfo{person}{Michael~J Neely} {and} \bibinfo{person}{Hao Yu}.} \bibinfo{year}{2017}\natexlab{}.
\newblock \showarticletitle{Online convex optimization with time-varying constraints}.
\newblock \bibinfo{journal}{\emph{arXiv preprint arXiv:1702.04783}} (\bibinfo{year}{2017}).
\newblock


\bibitem[Noarov et~al\mbox{.}(2023)]%
        {noarov2023highdimensional}
\bibfield{author}{\bibinfo{person}{Georgy Noarov}, \bibinfo{person}{Ramya Ramalingam}, \bibinfo{person}{Aaron Roth}, {and} \bibinfo{person}{Stephan Xie}.} \bibinfo{year}{2023}\natexlab{}.
\newblock \showarticletitle{High-Dimensional Prediction for Sequential Decision Making}. In \bibinfo{booktitle}{\emph{International Conference on Machine Learning, {ICML} 2025, 13-19 July 2025, Vancouver, Canada}}.
\newblock


\bibitem[Okoroafor et~al\mbox{.}(2025)]%
        {okoroafor2025near}
\bibfield{author}{\bibinfo{person}{Princewill Okoroafor}, \bibinfo{person}{Robert Kleinberg}, {and} \bibinfo{person}{Michael~P Kim}.} \bibinfo{year}{2025}\natexlab{}.
\newblock \showarticletitle{Near-optimal algorithms for omniprediction}.
\newblock \bibinfo{journal}{\emph{arXiv preprint arXiv:2501.17205}} (\bibinfo{year}{2025}).
\newblock


\bibitem[Qiao and Valiant(2021)]%
        {qiao2021stronger}
\bibfield{author}{\bibinfo{person}{Mingda Qiao} {and} \bibinfo{person}{Gregory Valiant}.} \bibinfo{year}{2021}\natexlab{}.
\newblock \showarticletitle{Stronger calibration lower bounds via sidestepping}. In \bibinfo{booktitle}{\emph{{STOC} '21: 53rd Annual {ACM} {SIGACT} Symposium on Theory of Computing, Virtual Event, Italy, June 21-25, 2021}}, \bibfield{editor}{\bibinfo{person}{Samir Khuller} {and} \bibinfo{person}{Virginia~Vassilevska Williams}} (Eds.). \bibinfo{publisher}{{ACM}}, \bibinfo{pages}{456--466}.
\newblock


\bibitem[Qiu et~al\mbox{.}(2023)]%
        {qiu2023gradient}
\bibfield{author}{\bibinfo{person}{Shuang Qiu}, \bibinfo{person}{Xiaohan Wei}, {and} \bibinfo{person}{Mladen Kolar}.} \bibinfo{year}{2023}\natexlab{}.
\newblock \showarticletitle{Gradient-Variation Bound for Online Convex Optimization with Constraints}. In \bibinfo{booktitle}{\emph{Thirty-Seventh {AAAI} Conference on Artificial Intelligence, {AAAI} 2023, Thirty-Fifth Conference on Innovative Applications of Artificial Intelligence, {IAAI} 2023, Thirteenth Symposium on Educational Advances in Artificial Intelligence, {EAAI} 2023, Washington, DC, USA, February 7-14, 2023}}, \bibfield{editor}{\bibinfo{person}{Brian Williams}, \bibinfo{person}{Yiling Chen}, {and} \bibinfo{person}{Jennifer Neville}} (Eds.). \bibinfo{publisher}{{AAAI} Press}, \bibinfo{pages}{9534--9542}.
\newblock


\bibitem[Roth and Shi(2024)]%
        {roth2024forecasting}
\bibfield{author}{\bibinfo{person}{Aaron Roth} {and} \bibinfo{person}{Mirah Shi}.} \bibinfo{year}{2024}\natexlab{}.
\newblock \showarticletitle{Forecasting for Swap Regret for All Downstream Agents}. In \bibinfo{booktitle}{\emph{Proceedings of the 25th {ACM} Conference on Economics and Computation, {EC} 2024, New Haven, CT, USA, July 8-11, 2024}}, \bibfield{editor}{\bibinfo{person}{Dirk Bergemann}, \bibinfo{person}{Robert Kleinberg}, {and} \bibinfo{person}{Daniela Sab{\'{a}}n}} (Eds.). \bibinfo{publisher}{{ACM}}, \bibinfo{pages}{466--488}.
\newblock


\bibitem[Rubinstein and Zhao(2024)]%
        {rubinstein2024strategizing}
\bibfield{author}{\bibinfo{person}{Aviad Rubinstein} {and} \bibinfo{person}{Junyao Zhao}.} \bibinfo{year}{2024}\natexlab{}.
\newblock \showarticletitle{Strategizing against no-regret learners in first-price auctions}. In \bibinfo{booktitle}{\emph{Proceedings of the 25th ACM Conference on Economics and Computation}}. \bibinfo{pages}{894--921}.
\newblock


\bibitem[Sinha and Vaze(2024)]%
        {sinha2024optimal}
\bibfield{author}{\bibinfo{person}{Abhishek Sinha} {and} \bibinfo{person}{Rahul Vaze}.} \bibinfo{year}{2024}\natexlab{}.
\newblock \showarticletitle{Optimal Algorithms for Online Convex Optimization with Adversarial Constraints}. In \bibinfo{booktitle}{\emph{Advances in Neural Information Processing Systems 38: Annual Conference on Neural Information Processing Systems 2024, NeurIPS 2024, Vancouver, BC, Canada, December 10 - 15, 2024}}, \bibfield{editor}{\bibinfo{person}{Amir Globersons}, \bibinfo{person}{Lester Mackey}, \bibinfo{person}{Danielle Belgrave}, \bibinfo{person}{Angela Fan}, \bibinfo{person}{Ulrich Paquet}, \bibinfo{person}{Jakub~M. Tomczak}, {and} \bibinfo{person}{Cheng Zhang}} (Eds.).
\newblock


\bibitem[Sun et~al\mbox{.}(2017)]%
        {Sun17safety}
\bibfield{author}{\bibinfo{person}{Wen Sun}, \bibinfo{person}{Debadeepta Dey}, {and} \bibinfo{person}{Ashish Kapoor}.} \bibinfo{year}{2017}\natexlab{}.
\newblock \showarticletitle{Safety-Aware Algorithms for Adversarial Contextual Bandit}. In \bibinfo{booktitle}{\emph{Proceedings of the 34th International Conference on Machine Learning}} \emph{(\bibinfo{series}{Proceedings of Machine Learning Research}, Vol.~\bibinfo{volume}{70})}, \bibfield{editor}{\bibinfo{person}{Doina Precup} {and} \bibinfo{person}{Yee~Whye Teh}} (Eds.). \bibinfo{publisher}{PMLR}, \bibinfo{pages}{3280--3288}.
\newblock


\bibitem[Wei et~al\mbox{.}(2020)]%
        {Wei20}
\bibfield{author}{\bibinfo{person}{Xiaohan Wei}, \bibinfo{person}{Hao Yu}, {and} \bibinfo{person}{Michael~J. Neely}.} \bibinfo{year}{2020}\natexlab{}.
\newblock \showarticletitle{Online Primal-Dual Mirror Descent under Stochastic Constraints}.
\newblock \bibinfo{journal}{\emph{Proc. ACM Meas. Anal. Comput. Syst.}} \bibinfo{volume}{4}, \bibinfo{number}{2}, Article \bibinfo{articleno}{39} (\bibinfo{date}{jun} \bibinfo{year}{2020}), \bibinfo{numpages}{36}~pages.
\newblock


\bibitem[Yi et~al\mbox{.}(2023)]%
        {yi2023distributed}
\bibfield{author}{\bibinfo{person}{Xinlei Yi}, \bibinfo{person}{Xiuxian Li}, \bibinfo{person}{Tao Yang}, \bibinfo{person}{Lihua Xie}, \bibinfo{person}{Yiguang Hong}, \bibinfo{person}{Tianyou Chai}, {and} \bibinfo{person}{Karl~H Johansson}.} \bibinfo{year}{2023}\natexlab{}.
\newblock \showarticletitle{Distributed online convex optimization with adversarial constraints: Reduced cumulative constraint violation bounds under slater's condition}.
\newblock \bibinfo{journal}{\emph{arXiv preprint arXiv:2306.00149}} (\bibinfo{year}{2023}).
\newblock


\bibitem[Yu and Neely(2020)]%
        {YuNeely2020}
\bibfield{author}{\bibinfo{person}{Hao Yu} {and} \bibinfo{person}{Michael~J. Neely}.} \bibinfo{year}{2020}\natexlab{}.
\newblock \showarticletitle{A Low Complexity Algorithm with O($\sqrt{T}$) Regret and O(1) Constraint Violations for Online Convex Optimization with Long Term Constraints}.
\newblock \bibinfo{journal}{\emph{Journal of Machine Learning Research}} \bibinfo{volume}{21}, \bibinfo{number}{1} (\bibinfo{year}{2020}), \bibinfo{pages}{1--24}.
\newblock


\bibitem[Zhao et~al\mbox{.}(2021)]%
        {zhao2021calibrating}
\bibfield{author}{\bibinfo{person}{Shengjia Zhao}, \bibinfo{person}{Michael~P. Kim}, \bibinfo{person}{Roshni Sahoo}, \bibinfo{person}{Tengyu Ma}, {and} \bibinfo{person}{Stefano Ermon}.} \bibinfo{year}{2021}\natexlab{}.
\newblock \showarticletitle{Calibrating Predictions to Decisions: {A} Novel Approach to Multi-Class Calibration}. In \bibinfo{booktitle}{\emph{Advances in Neural Information Processing Systems 34: Annual Conference on Neural Information Processing Systems 2021, NeurIPS 2021, December 6-14, 2021, virtual}}, \bibfield{editor}{\bibinfo{person}{Marc'Aurelio Ranzato}, \bibinfo{person}{Alina Beygelzimer}, \bibinfo{person}{Yann~N. Dauphin}, \bibinfo{person}{Percy Liang}, {and} \bibinfo{person}{Jennifer~Wortman Vaughan}} (Eds.). \bibinfo{pages}{22313--22324}.
\newblock


\end{thebibliography}

\newpage
\appendix

\section{Omitted Results in Section \ref{sec:regret}} \label{app:regret}

In this section, we show how to produce predictions that lead to $\tilde O(|\cA|\sqrt{T})$ regret against the benchmark class $\cA^{\E[\bc]}_{1:T}$.
By Corollary \ref{cor:benchmark-expectation}, with probability at least $1-\delta$ over the randomness of the outcomes $y_1,\ldots,y_T$, $\cA_{1:T}^{\E[\bc]} \subseteq \widehat\cA_t^{\E[\bc]}$ for every agent and every $t \in [T]$. We will condition on this event throughout this section. Therefore, all the guarantees we give in this section hold with probability at least $1-\delta$.

Similarly to Definition \ref{def:decision-calibration-realization}, we will need the following notion of conditional unbiasedness.

\begin{definition}[$(\cN,\beta_1)$-Decision Calibration] \label{def:decision-calibration-expectation}
    Let $\cN$ be a set of agents, where each agent is equipped with a utility function $u: \cA \times \cY \to [0,1]$ and $J$ constraint functions $\{c_j: \cA \times \cY \to [-1,1]\}_{j \in [J]}$. Each agent runs Algorithm \ref{alg:elimination-expectation} with threshold parameter $4\sqrt{T\ln\frac{|\cA||\cN|JT}{\delta}}$ to maintain a candidate action set $\widehat\cA_t^{\E[\bc]}$ and $\widehat\cA_t^{\E[\bc]}$-constrained best respond to the prediction $p_t$ in every round $t \in [T]$. Let $\beta_1: \bR \to \bR$. We say that a sequence of predictions $p_1,\ldots,p_T$ is $(\cN,\beta_1)$-decision calibrated with respect to a sequence of outcomes $y_1,\ldots,y_T$ if for every $a\in \cA$ and $(u,\bc) \in \cN$:
    \[
        \left\| \sum_{t=1}^T \1[\CBR^u_{\widehat\cA_{t}^{\E[\bc]}}(p_t)=a] (p_t - y_t) \right\|_\infty \leq \beta_1(T^{u,\E[\bc]}(a))
    \]
    where $T^{u,\E[\bc]}(a) = \sum_{t=1}^T \1[\CBR^u_{\widehat\cA_t^{\E[\bc]}}(p_t)=a]$.
\end{definition}

Both Definitions \ref{def:decision-calibration-realization} and \ref{def:decision-calibration-expectation} require the predictions to be unbiased conditional on the actions chosen by agents. They differ only in agents' decision rules. In Definition \ref{def:decision-calibration-realization}, agents $\widehat\cA_t^\bc$-constrained best respond to the predictions to compete with the benchmark class $\cA_{1:T}^\bc$, whereas in Definition \ref{def:decision-calibration-expectation}, agents $\widehat\cA_t^{\E[\bc]}$-constrained best respond to the predictions to compete with the benchmark class $\cA_{1:T}^{\E[\bc]}$.
For simplicity, we refer to both conditions as decision calibration. 

Similarly to Theorem \ref{thm:swap-regret-realization}, decision calibration implies no $\cA^{\E[\bc]}_{1:T}$-constrained swap regret.
\begin{theorem} \label{thm:swap-regret-expectation}
    Let $\cN$ be a set of agents, where each agent is equipped with a utility function $u: \cA \times \cY \to [0,1]$ and $J$ constraint functions $\{c_j: \cA \times \cY \to [-1,1]\}_{j \in [J]}$. The utility functions are linear and $L$-Lipschitz in the second argument. Each agent runs Algorithm \ref{alg:elimination-expectation} with threshold parameter $4\sqrt{T\ln\frac{|\cA||\cN|JT}{\delta}}$ to compete with the benchmark class $\cA^{\E[\bc]}_{1:T}$.
    If the sequence of predictions $p_1,\ldots,p_T$ is $(\cN,\beta_1)$-decision calibrated, then with probability at least $1-\delta$, the $\cA^{\E[\bc]}_{1:T}$-constrained swap regret of every agent is bounded by:
    \begin{align*}
        \Reg_\swap(u,\cA_{1:T}^{\E[\bc]},1:T) \le 2 L |\cA| \beta_1(T/|\cA|)
    \end{align*}
\end{theorem}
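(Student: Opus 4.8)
The plan is to mirror the proof of Theorem \ref{thm:swap-regret-realization} almost verbatim, substituting the expectation-based objects for their realization counterparts: the benchmark $\cA_{1:T}^{\E[\bc]}$ in place of $\cA_{1:T}^{\bc}$, the candidate sets $\widehat\cA_t^{\E[\bc]}$ in place of $\widehat\cA_t^{\bc}$, and the decision-calibration notion of Definition \ref{def:decision-calibration-expectation} (with rate $\beta_1$) in place of Definition \ref{def:decision-calibration-realization}. The one genuinely new ingredient is that the feasibility-containment is now probabilistic. First I would invoke Corollary \ref{cor:benchmark-expectation} to condition on the event that $\cA_{1:T}^{\E[\bc]} \subseteq \widehat\cA_t^{\E[\bc]}$ for every $t \in [T]$, which holds with probability at least $1-\delta$; this conditioning is precisely what produces the $1-\delta$ qualifier in the statement, and every subsequent claim is made on this event.

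Next I would fix any agent $(u,\bc) \in \cN$ and any modification rule $\phi: \cA \to \cA_{1:T}^{\E[\bc]}$, and write the swap regret as the same three telescoping pieces used in Section \ref{sec:regret}:
\[
    \sum_{t=1}^T \bigl( u(\phi(a_t),y_t) - u(\phi(a_t),p_t) \bigr) + \sum_{t=1}^T \bigl( u(\phi(a_t),p_t) - u(a_t,p_t) \bigr) + \sum_{t=1}^T \bigl( u(a_t,p_t) - u(a_t,y_t) \bigr).
\]
The middle sum is nonpositive on the conditioned event: since $\phi(a_t) \in \cA_{1:T}^{\E[\bc]} \subseteq \widehat\cA_t^{\E[\bc]}$ and $a_t = \CBR^u_{\widehat\cA_t^{\E[\bc]}}(p_t)$ maximizes predicted utility over $\widehat\cA_t^{\E[\bc]}$, we have $u(\phi(a_t),p_t) \le u(a_t,p_t)$. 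This is the exact analogue of the step that used Proposition \ref{prop:benchmark-realization} in the realization case, now routed through Corollary \ref{cor:benchmark-expectation} instead.

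To bound the remaining two pieces, I would restate and reprove the analogues of Lemmas \ref{lem:decision-calibration-realization} and \ref{lem:decision-calibration-realization-swap}, replacing $\widehat\cA_t^{\bc}$ by $\widehat\cA_t^{\E[\bc]}$, $\alpha_1$ by $\beta_1$, and $T^{u,\bc}(a)$ by $T^{u,\E[\bc]}(a)$. Their proofs transfer with no change in structure: using linearity of $u$ I pull the action-indicator sums inside the utility argument, apply $L$-Lipschitzness in $\|\cdot\|_\infty$ to bound each action's contribution by $L\,\beta_1(T^{u,\E[\bc]}(a))$ via $(\cN,\beta_1)$-decision calibration, and then invoke concavity of $\beta_1$ together with $\sum_{a} T^{u,\E[\bc]}(a) = T$ to conclude that each piece is at most $L|\cA|\beta_1(T/|\cA|)$. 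Adding the two bounded pieces yields the claimed $2L|\cA|\beta_1(T/|\cA|)$. The only real obstacle is bookkeeping the high-probability conditioning correctly; once the event of Corollary \ref{cor:benchmark-expectation} is fixed, the remainder of the argument is deterministic and identical in form to Theorem \ref{thm:swap-regret-realization}, so I do not anticipate any technical difficulty beyond faithfully transcribing the two calibration lemmas to the $\E[\bc]$ setting.
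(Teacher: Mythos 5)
Your proposal is correct and follows essentially the same route as the paper's own proof: condition on the event of Corollary \ref{cor:benchmark-expectation}, use the same three-term decomposition with the middle term nonpositive, and bound the outer two terms via the $\beta_1$-analogues of Lemmas \ref{lem:decision-calibration-realization} and \ref{lem:decision-calibration-realization-swap} (which the paper states as Lemmas \ref{lem:decision-calibration-expectation} and \ref{lem:decision-calibration-expectation-swap}). No gaps.
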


The proof is similar to the proof of Theorem \ref{thm:swap-regret-realization}. For any action modification rule $\phi: \cA \to \cA_{1:T}^{\E[\bc]}$, we can write:
\begin{align*}
    \MoveEqLeft \sum_{t=1}^T \left( u(\phi(a_t),y_t) - u(a_t,y_t) \right) \\
    &= \sum_{t=1}^T \left( u(\phi(a_t),y_t) - u(\phi(a_t),p_t) \right) + \sum_{t=1}^T \left( u(\phi(a_t),p_t) - u(a_t,p_t) \right) + \sum_{t=1}^T \left( u(a_t,p_t) - u(a_t,y_t) \right)
\end{align*}

Conditioning on the event in Corollary \ref{cor:benchmark-expectation} and using the fact that $\phi(a_t) \in \cA_{1:T}^{\E[\bc]}$, we have that $\phi(a_t) \in \widehat\cA_t^{\E[\bc]}$ for every $t \in [T]$. Since agents $\widehat\cA_t^{\E[\bc]}$-constrained best response to the prediction $p_t$ in round $t$, $a_t = \argmax_{b \in \widehat\cA_t^{\E[\bc]}} u(b,p_t)$. This implies that:
\begin{align*}
    u(\phi(a_t),p_t) \le u(a_t,p_t).
\end{align*}
So, the second part in the decomposition is bounded by 0.

Thus, it suffices to bound the other two parts using decision calibration.

\begin{lemma} \label{lem:decision-calibration-expectation}
    If the sequence of predictions $p_1,\ldots,p_T$ is $(\cN,\beta_1)$-decision calibrated, then for any $(u,\bc) \in \cN$:
    \begin{align*}
        \left| \sum_{t=1}^T (u(a_t,p_t) - u(a_t,y_t)) \right| \le L |\cA| \beta_1(T/|\cA|)
    \end{align*}
\end{lemma}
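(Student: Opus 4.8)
The plan is to mirror the proof of Lemma~\ref{lem:decision-calibration-realization} line-for-line, substituting the expectation-benchmark objects for their realization counterparts: the candidate set $\widehat\cA_t^{\E[\bc]}$ in place of $\widehat\cA_t^\bc$, the calibration bound $\beta_1$ in place of $\alpha_1$, and the count $T^{u,\E[\bc]}(a) = \sum_{t=1}^T \1[\CBR^u_{\widehat\cA_t^{\E[\bc]}}(p_t)=a]$ in place of $T^{u,\bc}(a)$. The structure of the argument is insensitive to which elimination rule produced the candidate sets; all that matters is that agents play the constrained best response, which is true for Algorithm~\ref{alg:elimination-expectation} just as for Algorithm~\ref{alg:elimination-realization}.

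Concretely, I would first partition the telescoped difference over the action played, writing
\[
\left| \sum_{t=1}^T (u(a_t,p_t) - u(a_t,y_t)) \right| = \left| \sum_{a \in \cA}\sum_{t=1}^T \1[a_t=a]\,(u(a,p_t) - u(a,y_t)) \right|,
\]
then invoke the linearity of $u$ in its second argument to pull the summation over $t$ inside, turning each inner term into $u(a, \sum_t \1[a_t=a] p_t) - u(a, \sum_t \1[a_t=a] y_t)$. Applying the triangle inequality over $a$ and then the $L$-Lipschitzness of $u$ in the $\ell_\infty$ norm bounds this by $L \sum_{a \in \cA} \| \sum_t \1[a_t=a](p_t - y_t) \|_\infty$. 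At this stage I would invoke $(\cN,\beta_1)$-decision calibration (Definition~\ref{def:decision-calibration-expectation}) to bound each summand by $\beta_1(T^{u,\E[\bc]}(a))$, and finally apply Jensen's inequality using the concavity of $\beta_1$ together with the identity $\sum_{a \in \cA} T^{u,\E[\bc]}(a) = T$ to obtain the claimed bound $L|\cA|\beta_1(T/|\cA|)$.

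The one step deserving explicit care is the application of decision calibration: its defining inequality is stated with the conditioning event $\1[\CBR^u_{\widehat\cA_t^{\E[\bc]}}(p_t)=a]$, whereas the partition above uses $\1[a_t=a]$. These coincide exactly because, under Algorithm~\ref{alg:elimination-expectation}, the agent plays $a_t = \CBR^u_{\widehat\cA_t^{\E[\bc]}}(p_t)$ in every round, so the two indicators are identical termwise and the substitution is valid. I do not anticipate a genuine obstacle here — the argument is a deterministic consequence of the calibration hypothesis and Lipschitz linearity, and the only assumption I am leaning on beyond those already in play is the concavity of $\beta_1$ (the analogue of the concavity assumption made for $\alpha_1$), which holds for the sublinear bounds we actually produce.
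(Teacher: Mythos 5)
Your proposal is correct and follows essentially the same route as the paper's proof: partition by action, use linearity, the triangle inequality, $L$-Lipschitzness, the $(\cN,\beta_1)$-decision calibration bound, and concavity of $\beta_1$ with $\sum_{a\in\cA} T^{u,\E[\bc]}(a)=T$. Your explicit note that $\1[a_t=a]$ coincides termwise with $\1[\CBR^u_{\widehat\cA_t^{\E[\bc]}}(p_t)=a]$ under Algorithm~\ref{alg:elimination-expectation} is a point the paper leaves implicit, and it is handled correctly.
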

\begin{proof}
    The proof is similar to the proof of Lemma \ref{lem:decision-calibration-realization}. Using the linearity of $u$, we can write:
    \begin{align*}
        \left| \sum_{t=1}^T (u(a_t,p_t) - u(a_t,y_t)) \right| &= \left| \sum_{a \in \cA}\sum_{t=1}^T \1[a_t=a] (u(a,p_t) - u(a,y_t)) \right| \\
        &= \left| \sum_{a \in \cA} \left( u\left(a, \sum_{t=1}^T \1[a_t=a] p_t \right) - u\left(a, \sum_{t=1}^T \1[a_t=a] y_t\right) \right)\right| \\
        &\le \sum_{a \in \cA} \left| u\left(a, \sum_{t=1}^T \1[a_t=a] p_t \right) - u\left(a, \sum_{t=1}^T \1[a_t=a] y_t\right) \right| \\
        &\le \sum_{a \in \cA} L \left\| \sum_{t=1}^T \1[a_t=a] (p_t-y_t) \right\|_\infty \\
        &\le L \sum_{a \in \cA} \beta_1(T^{u,\E[\bc]}(a))
    \end{align*}
    where the first inequality follows from the triangle inequality, the second inequality follows from $L$-Lipschitzness of $u$, and the third inequality follows from $(\cN,\beta_1)$-decision calibration. By concavity of $\beta_1$ and the fact that $\sum_{a \in \cA} T^{u,\E[\bc]}(a) = \sum_{a \in \cA} \sum_{t=1}^T \1[a_t=a] = T$, this expression is at most:
    \begin{align*}
        L |\cA| \beta_1(T/|\cA|)
    \end{align*}
\end{proof}

\begin{lemma} \label{lem:decision-calibration-expectation-swap}
    If the sequence of predictions $p_1,\ldots,p_T$ is $(\cN,\beta_1)$-decision calibrated, then for any $(u,\bc) \in \cN$ and $\phi: \cA \to \cA_{1:T}^{\E[\bc]}$:
    \begin{align*}
        \left| \sum_{t=1}^T (u(\phi(a_t),p_t) - u(\phi(a_t),y_t)) \right| \le L |\cA| \beta_1(T/|\cA|).
    \end{align*}
\end{lemma}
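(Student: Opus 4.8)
The plan is to reproduce the argument of Lemma \ref{lem:decision-calibration-realization-swap} essentially verbatim, replacing the feasible-set sequence $\widehat\cA_t^\bc$ by $\widehat\cA_t^{\E[\bc]}$, the calibration profile $\alpha_1$ by $\beta_1$, and the realized benchmark by $\cA_{1:T}^{\E[\bc]}$. The structure of the quantity to be bounded is identical --- we are controlling the gap between the utility of the \emph{swapped-in} actions $\phi(a_t)$ evaluated under the predictions $p_t$ versus the outcomes $y_t$ --- and the only calibration condition invoked is $(\cN,\beta_1)$-decision calibration (Definition \ref{def:decision-calibration-expectation}), so no new probabilistic argument is needed.

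First I would partition the sum over $t$ according to the agent's \emph{chosen} action, writing $\sum_{t=1}^T(u(\phi(a_t),p_t)-u(\phi(a_t),y_t)) = \sum_{a\in\cA}\sum_{t=1}^T \1[a_t=a]\,(u(\phi(a),p_t)-u(\phi(a),y_t))$. The point is that once we condition on $a_t=a$, the swapped-in action is the \emph{fixed} action $\phi(a)$, so linearity of $u$ in its second argument lets me pull the indicator-weighted sum inside: the inner expression becomes $u(\phi(a),\sum_t \1[a_t=a]p_t)-u(\phi(a),\sum_t \1[a_t=a]y_t)$. Applying the triangle inequality over $a\in\cA$ and then $L$-Lipschitzness of $u(\phi(a),\cdot)$ in the $\ell_\infty$ norm bounds the total by $L\sum_{a\in\cA}\bigl\|\sum_{t=1}^T \1[a_t=a](p_t-y_t)\bigr\|_\infty$.

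Next I would invoke decision calibration. Under Algorithm \ref{alg:elimination-expectation} the chosen action satisfies $a_t=\CBR^u_{\widehat\cA_t^{\E[\bc]}}(p_t)$, so $\1[a_t=a]=\1[\CBR^u_{\widehat\cA_t^{\E[\bc]}}(p_t)=a]$ and the calibration bound of Definition \ref{def:decision-calibration-expectation} gives $\bigl\|\sum_{t=1}^T \1[a_t=a](p_t-y_t)\bigr\|_\infty \le \beta_1(T^{u,\E[\bc]}(a))$. Summing yields $L\sum_{a\in\cA}\beta_1(T^{u,\E[\bc]}(a))$, and since $\sum_{a\in\cA}T^{u,\E[\bc]}(a)=\sum_{a\in\cA}\sum_{t=1}^T\1[a_t=a]=T$, concavity of $\beta_1$ (Jensen's inequality across the $|\cA|$ terms) finishes with $L|\cA|\beta_1(T/|\cA|)$.

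There is essentially no genuine obstacle here; the lemma is the expectation-benchmark twin of Lemma \ref{lem:decision-calibration-realization-swap}. The one conceptual point worth flagging is that the calibration condition is keyed to the agent's \emph{played} action $a$, not to the counterfactual $\phi(a)$, so the argument must partition by $a_t=a$ and exploit that $\phi(a)$ is constant on each such block; this is precisely what allows the same bound $\beta_1(T^{u,\E[\bc]}(a))$ to govern the swapped-in utilities. Note also that we are operating on the high-probability event of Corollary \ref{cor:benchmark-expectation} which is already conditioned on throughout this appendix section, so the deterministic calibration guarantee suffices and no additional failure probability is incurred in this lemma.
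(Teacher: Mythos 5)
Your proposal is correct and matches the paper's proof essentially step for step: partition by the played action $a_t=a$, use linearity of $u$ to aggregate, apply the triangle inequality and $L$-Lipschitzness, invoke $(\cN,\beta_1)$-decision calibration (which is keyed to the played action, so it covers the fixed swapped-in action $\phi(a)$ on each block), and finish with concavity of $\beta_1$ via $\sum_{a}T^{u,\E[\bc]}(a)=T$. Your closing remark is also accurate: this lemma uses only the deterministic calibration condition, and the event of Corollary \ref{cor:benchmark-expectation} is needed only for the middle comparison term in the regret decomposition, not here.
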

\begin{proof}
    Using the linearity of $u$, we can write:
    \begin{align*}
        \left| \sum_{t=1}^T (u(\phi(a_t),p_t) - u(\phi(a_t),y_t)) \right| &= \left| \sum_{a \in \cA}\sum_{t=1}^T \1[a_t=a] (u(\phi(a),p_t) - u(\phi(a),y_t)) \right| \\
        &= \left| \sum_{a \in \cA} \left( u\left(\phi(a), \sum_{t=1}^T \1[a_t=a] p_t \right) - u\left(\phi(a), \sum_{t=1}^T \1[a_t=a] y_t\right) \right)\right| \\
        &\le \sum_{a \in \cA} \left| u\left(\phi(a), \sum_{t=1}^T \1[a_t=a] p_t \right) - u\left(\phi(a), \sum_{t=1}^T \1[a_t=a] y_t\right) \right| \\
        &\le \sum_{a \in \cA} L \left\| \sum_{t=1}^T \1[a_t=a] (p_t-y_t) \right\|_\infty \\
        &\le L \sum_{a \in \cA} \beta_1(T^{u,\E[\bc]}(a)) \\
        &\le L |\cA| \beta_1(T/|\cA|)
    \end{align*}
    where the first inequality follows from the triangle inequality, the second inequality follows from $L$-Lipschitzness of $u$, the third inequality follows from $(\cN,\beta_1)$-decision calibration, and the fourth inequality follows from concavity of $\beta_1$.
\end{proof}

We can now complete the proof of Theorem \ref{thm:swap-regret-expectation}.
\begin{proof}
    Fix any $(u,\bc) \in \cN$ and $\phi: \cA \to \cA^{\E[\bc]}_{1:T}$. Applying Lemmas \ref{lem:decision-calibration-expectation} and \ref{lem:decision-calibration-expectation-swap} to the decomposition of the regret against $\phi$, we have that:
    \begin{align*}
        \MoveEqLeft \sum_{t=1}^T \left( u(\phi(a_t),y_t) - u(a_t,y_t) \right) \\
        &= \sum_{t=1}^T \left( u(\phi(a_t),y_t) - u(\phi(a_t),p_t) \right) + \sum_{t=1}^T \left( u(\phi(a_t),p_t) - u(a_t,p_t) \right) + \sum_{t=1}^T \left( u(a_t,p_t) - u(a_t,y_t) \right) \\
        &\le 2 L |\cA| \beta_1(T/|\cA|)
    \end{align*}
\end{proof}

We will instantiate \textsc{Unbiased-Prediction} to make decision calibrated predictions as defined in Definition \ref{def:decision-calibration-expectation}. We will refer to this instantiation as \textsc{Decision-Calibration-Expectation}. It has the following guarantees.

\begin{theorem} \label{thm:unbiased-algorithm-expectation}
    Let $\cN$ be a set of agents, where each agent is equipped with a utility function $u: \cA \times \cY \to [0,1]$ and $J$ constraint functions $\{c_j: \cA \times \cY \to [-1,1]\}_{j \in [J]}$. Each agent will run Algorithm \ref{alg:elimination-expectation} with threshold parameter $4\sqrt{T\ln\frac{|\cA||\cN|JT}{\delta}}$ to compete with the benchmark class $\cA^{\E[\bc]}_{1:T}$.
    There is an instantiation of \textsc{Unbiased-Prediction} \citep{noarov2023highdimensional}
    ---which we call \textsc{Decision-Calibration-Expectation}---
    producing predictions $p_1,...,p_T \in \Delta \cY$ satisfying, with probability at least $1-\delta$, for any $(u,\bc) \in \cN$ and $a \in \cA$:
    \[
        \left\| \sum_{t=1}^T \1[\CBR^u_{\widehat\cA_t^{\E[\bc]}}(p_t) = a] (p_t - y_t) \right\|_\infty \leq O\left( \ln(d|\cA||\cN|T) + \sqrt{\ln(d|\cA||\cN|T) \cdot T^{u,\E[\bc]}(a)} + \sqrt{\ln\frac{d|\cA||\cN|}{\delta} \cdot T}\right)
    \]
\end{theorem}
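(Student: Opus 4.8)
The plan is to observe that this theorem is the expectation-benchmark analog of Theorem \ref{thm:unbiased-algorithm-realization}, and that its proof is obtained by exactly the same instantiation of \textsc{Unbiased-Prediction} \citep{noarov2023highdimensional}, with the candidate set $\widehat\cA_t^\bc$ replaced throughout by $\widehat\cA_t^{\E[\bc]}$. Recall that \textsc{Unbiased-Prediction} takes as input a collection of \emph{events}---each a $\{0,1\}$-valued function of the current prediction together with the observable history---and outputs predictions that are, with high probability, simultaneously $\ell_\infty$-unbiased conditional on each event, with a per-event bias bound that depends only logarithmically on the number of events and scales with the square root of the number of rounds on which that event fires. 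First I would define, for each agent $(u,\bc)\in\cN$ and each action $a\in\cA$, the event that fires at round $t$ exactly when $\CBR^u_{\widehat\cA_t^{\E[\bc]}}(p_t)=a$; this yields a family of $|\cN|\,|\cA|$ events, one per (agent, action) pair, and by construction the number of rounds on which the event for $(u,\bc,a)$ fires is precisely $T^{u,\E[\bc]}(a)$.

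The key step is to verify that each such event is an \emph{admissible} input to \textsc{Unbiased-Prediction}, i.e.\ that whether it fires at round $t$ is determined by the prediction $p_t$ and the history available \emph{before} the outcome $y_t$ is revealed. This reduces to checking that the candidate set $\widehat\cA_t^{\E[\bc]}$ is measurable with respect to that history. This holds because, by the definition of Algorithm \ref{alg:elimination-expectation}, $\widehat\cA_t^{\E[\bc]}$ is a deterministic function of the accumulated violations $\{v^{a,j}_{1:t-1}\}_{a,j}$ together with the fixed threshold, and those violations depend only on the past actions and past outcomes $y_1,\ldots,y_{t-1}$. Given $\widehat\cA_t^{\E[\bc]}$, the best response $\CBR^u_{\widehat\cA_t^{\E[\bc]}}(p_t)$ is a deterministic function of $p_t$, so the event is of precisely the prediction-dependent (self-referential) form that \textsc{Unbiased-Prediction} is designed to handle---exactly as in the realization case.

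With admissibility established, I would invoke the guarantee of \textsc{Unbiased-Prediction} with this family of events and with the $d$ coordinates of $\cY=[0,1]^d$. The total count of events times coordinates is $O(d|\cA||\cN|)$, and a union bound over all $(u,\bc,a)$ contributes the $|\cA||\cN|$ factors inside the logarithms while keeping the dependence on these quantities only logarithmic. Substituting the firing count $T^{u,\E[\bc]}(a)$ into the inherited per-event bound then produces the three stated terms: the additive $\ln(d|\cA||\cN|T)$ term, the firing-count term $\sqrt{\ln(d|\cA||\cN|T)\cdot T^{u,\E[\bc]}(a)}$, and the high-probability martingale term $\sqrt{\ln(d|\cA||\cN|/\delta)\cdot T}$.

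The main obstacle is conceptual rather than computational: it is confirming that the self-referential event $\1[\CBR^u_{\widehat\cA_t^{\E[\bc]}}(p_t)=a]$---which depends on the very prediction being produced---is legitimately covered by \textsc{Unbiased-Prediction}, and that the elimination rule of Algorithm \ref{alg:elimination-expectation}, which (unlike the realization case) triggers on a \emph{thresholded running sum of signed} violations rather than on a single positive violation, does not disturb the measurability requirement. Once one notes that this threshold test is evaluated on violation sums accumulated strictly before round $t$, there is no genuine difference from the realization setting of Theorem \ref{thm:unbiased-algorithm-realization}, and the remaining bookkeeping is routine.
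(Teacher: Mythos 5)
Your proposal is correct and matches the paper's own (very terse) proof in Appendix \ref{app:unbiased-prediction}: the paper likewise instantiates \textsc{Unbiased-Prediction} with the event family $\cE = \{ \1[\CBR^u_{\widehat\cA^{\E[\bc]}_{t}}(p_t) = a] \}_{(u,\bc) \in \cN, a\in\cA}$ of size $|\cA||\cN|$, notes that $\widehat\cA_t^{\E[\bc]}$ depends only on the first $t-1$ rounds so the events are legitimate functions of $\pi_T^{<t}$ and $p_t$, and reads off the three terms (including the $\sqrt{\ln(d|\cA||\cN|/\delta)\cdot T}$ term from the Azuma-Hoeffding conversion of the expected bias bound to a realized one in Corollary \ref{cor:unbiased-prediction-algorithm}). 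Your additional emphasis on verifying admissibility of the self-referential, threshold-based elimination rule is exactly the point the paper makes in its closing remark of that appendix.
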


Substituting the above bounds into Theorem \ref{thm:swap-regret-expectation}, we arrive at the following corollary bounding the $\cA^{\E[\bc]}_{1:T}$-constrained swap regret.

\begin{corollary}\label{cor:swap-regret-expectation}
    Let $\cN$ be a set of agents, where each agent is equipped with a utility function $u: \cA \times \cY \to [0,1]$ and $J$ constraint functions $\{c_j: \cA \times \cY \to [-1,1]\}_{j \in [J]}$. The utility functions are linear and $L$-Lipschitz in the second argument. Each agent runs Algorithm \ref{alg:elimination-expectation} with threshold parameter $4\sqrt{T\ln\frac{|\cA||\cN|JT}{\delta}}$ to compete with the benchmark class $\cA^{\E[\bc]}_{1:T}$.
    The sequence of predictions $p_1,\ldots,p_T$ outputted by \textsc{Decision-Calibration-Expectation}  ensures that with probability at least $1-\delta$, the $\cA^{\E[\bc]}_{1:T}$-constrained swap regret of every agent is bounded by:
    \begin{align*}
        \Reg_\swap(u,\cA_{1:T}^{\E[\bc]},1:T) \le O\left( L |\cA| \sqrt{T \ln\frac{d|\cA||\cN|T}{\delta}} \right)
    \end{align*}
\end{corollary}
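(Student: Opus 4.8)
The plan is to derive this corollary by directly combining the regret-to-calibration reduction of Theorem~\ref{thm:swap-regret-expectation} with the explicit decision-calibration guarantee of Theorem~\ref{thm:unbiased-algorithm-expectation}, exactly mirroring how Corollary~\ref{cor:swap-regret-realization} is obtained from Theorems~\ref{thm:swap-regret-realization} and~\ref{thm:unbiased-algorithm-realization} in the realized-constraint setting. First I would read off the concrete error profile $\beta_1$ supplied by \textsc{Decision-Calibration-Expectation}: Theorem~\ref{thm:unbiased-algorithm-expectation} certifies that, with probability at least $1-\delta$, the predictions are $(\cN,\beta_1)$-decision calibrated (in the sense of Definition~\ref{def:decision-calibration-expectation}) with
\[
    \beta_1(n) = O\!\left( \ln(d|\cA||\cN|T) + \sqrt{\ln(d|\cA||\cN|T)\cdot n} + \sqrt{\ln\tfrac{d|\cA||\cN|}{\delta}\cdot T}\right).
\]
Before invoking Theorem~\ref{thm:swap-regret-expectation}, I would check that this $\beta_1$ satisfies the concavity assumption required there: the first and third summands are constant in $n$ and the middle summand is a positive multiple of $\sqrt{n}$, so $\beta_1$ is concave, as needed.

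Next I would plug $n = T/|\cA|$ into $\beta_1$ and substitute into the bound $\Reg_\swap(u,\cA_{1:T}^{\E[\bc]},1:T) \le 2L|\cA|\beta_1(T/|\cA|)$ from Theorem~\ref{thm:swap-regret-expectation}. The three resulting terms are $O(L|\cA|\ln(d|\cA||\cN|T))$, then $O(L\sqrt{|\cA|\,T\ln(d|\cA||\cN|T)})$ using the simplification $|\cA|\sqrt{T/|\cA|} = \sqrt{|\cA|\,T}$, and finally $O(L|\cA|\sqrt{T\ln\tfrac{d|\cA||\cN|}{\delta}})$. The last term dominates the first two (since $\sqrt{|\cA|}\le|\cA|$ and $\sqrt{T\ln}\gg\ln$), and folding the $\ln T$ and $\ln\tfrac1\delta$ factors into a single logarithm collapses everything to $O(L|\cA|\sqrt{T\ln\tfrac{d|\cA||\cN|T}{\delta}})$, which is the claimed bound.

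The only genuinely non-mechanical point is bookkeeping of the failure probabilities. Theorem~\ref{thm:swap-regret-expectation} already conditions on the good event of Corollary~\ref{cor:benchmark-expectation} --- that no benchmark action in $\cA_{1:T}^{\E[\bc]}$ is ever eliminated, so that the swap targets $\phi(a_t)$ remain in $\widehat\cA_t^{\E[\bc]}$ and the middle term of the regret decomposition stays nonpositive --- and Theorem~\ref{thm:unbiased-algorithm-expectation} separately controls the calibration error with its own failure probability. I would run each of these at confidence $1-\delta/2$ (which only changes the constants inside the logarithms) and take a union bound, so that both events hold simultaneously with probability at least $1-\delta$, yielding the stated high-probability regret bound. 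No new estimates are needed beyond these two imported theorems, so I expect the write-up to be short.
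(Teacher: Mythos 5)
Your proposal is correct and follows exactly the paper's route: the paper obtains this corollary by substituting the calibration error bound of Theorem~\ref{thm:unbiased-algorithm-expectation} (as the function $\beta_1$) into the reduction of Theorem~\ref{thm:swap-regret-expectation}, just as you describe. Your explicit verification of concavity and your union-bound bookkeeping of the two failure events are slightly more careful than the paper's one-line derivation, but they are the same argument.
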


Since constrained swap regret is stronger than constrained external regret, we can also bound the $\cA^{\E[\bc]}_{1:T}$-constrained external regret.
\begin{corollary}\label{cor:external-regret-expectation}
    Let $\cN$ be a set of agents, where each agent is equipped with a utility function $u: \cA \times \cY \to [0,1]$ and $J$ constraint functions $\{c_j: \cA \times \cY \to [-1,1]\}_{j \in [J]}$. The utility functions are linear and $L$-Lipschitz in the second argument. Each agent runs Algorithm \ref{alg:elimination-expectation} with threshold parameter $4\sqrt{T\ln\frac{|\cA||\cN|JT}{\delta}}$ to compete with the benchmark class $\cA^{\E[\bc]}_{1:T}$.
    The sequence of predictions $p_1,\ldots,p_T$ outputted by \textsc{Decision-Calibration-Expectation}  ensures that with probability at least $1-\delta$, the $\cA^{\E[\bc]}_{1:T}$-constrained external regret of every agent is bounded by:
    \begin{align*}
        \Reg_\ext(u,\cA_{1:T}^{\E[\bc]},1:T) \le O\left( L |\cA| \sqrt{T \ln\frac{d|\cA||\cN|T}{\delta}} \right)
    \end{align*}
\end{corollary}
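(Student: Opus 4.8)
The plan is to derive this external regret bound directly from the swap regret bound already established in Corollary \ref{cor:swap-regret-expectation}, exploiting the fact that constrained external regret is a special case of constrained swap regret. No new algorithmic or probabilistic analysis is needed; the argument is a deterministic domination that holds on exactly the same sample paths.

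First I would observe that for any fixed action $a \in \cA^{\E[\bc]}_{1:T}$, the constant action modification rule $\phi_a: \cA \to \cA^{\E[\bc]}_{1:T}$ defined by $\phi_a(b) = a$ for all $b \in \cA$ is a valid element of the set $\{\phi: \cA \to \cA^{\E[\bc]}_{1:T}\}$ over which the swap regret maximizes (its single output value $a$ lies in the benchmark class by assumption). Plugging this constant rule into the swap regret expression recovers exactly the external regret term associated with the fixed benchmark action $a$:
\[
    \sum_{t=1}^T \left( u(\phi_a(a_t),y_t) - u(a_t,y_t) \right) = \sum_{t=1}^T \left( u(a,y_t) - u(a_t,y_t) \right).
\]
Taking the maximum over $a \in \cA^{\E[\bc]}_{1:T}$ on the left-hand side, and noting that this is dominated by the maximum over all modification rules $\phi: \cA \to \cA^{\E[\bc]}_{1:T}$, yields the pointwise domination
\[
    \Reg_\ext(u,\cA_{1:T}^{\E[\bc]},1:T) \le \Reg_\swap(u,\cA_{1:T}^{\E[\bc]},1:T).
\]
The claimed bound then follows immediately by invoking Corollary \ref{cor:swap-regret-expectation}, which holds on the same high-probability event (probability at least $1-\delta$) and supplies the matching $O\left(L|\cA|\sqrt{T\ln\frac{d|\cA||\cN|T}{\delta}}\right)$ bound on the swap regret.

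Since the swap regret bound is itself inherited from the decision calibration guarantee of \textsc{Decision-Calibration-Expectation}, the conditioning event here is precisely the one already used in Corollary \ref{cor:swap-regret-expectation}, so the union-bound budget is not affected. Consequently there is no substantive obstacle in this corollary --- the entire content is the observation that the external regret maximization ranges over the same benchmark class $\cA^{\E[\bc]}_{1:T}$ as the swap regret maximization, and that restricting $\phi$ to constant functions can only decrease the objective. The only points worth stating explicitly are that the constant rules genuinely map into the benchmark class and that the two regret notions are measured against the identical class, both of which are immediate from the definitions.
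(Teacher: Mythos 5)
Your proposal is correct and matches the paper's own reasoning exactly: the paper derives this corollary from Corollary \ref{cor:swap-regret-expectation} by noting that constrained external regret is the special case of constrained swap regret where the modification rule is constant, which is precisely your domination argument. Nothing further is needed.
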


\section{Proof of Theorem \ref{thm:swap-regret-realization-subsequence}}
Similarly to the proof of Theorem \ref{thm:swap-regret-realization}, for any subsequence $S \in \cS$ and action modification rule $\phi: \cA \to \cA_{S}^\bc$, we can decompose the regret against $\phi$ into three parts as:
\begin{align*}
    \MoveEqLeft \sum_{t \in S} \left( u(\phi(a_t),y_t) - u(a_t,y_t) \right) \\
    &= \sum_{t \in S} \left( u(\phi(a_t),y_t) - u(\phi(a_t),p_t) \right) + \sum_{t \in S} \left( u(\phi(a_t),p_t) - u(a_t,p_t) \right) + \sum_{t \in S} \left( u(a_t,p_t) - u(a_t,y_t) \right)
\end{align*}

By Proposition \ref{prop:benchmark-realization-subsequence} and the fact that $\phi(a_t) \in \cA_S^\bc$,  $\phi(a_t) \in U_t$ for every $t \in S$. Since agents $U_t$-constrained best response to the prediction $p_t$ in round $t$, $a_t = \argmax_{b \in U_t} u(b,p_t)$. This implies that:
\begin{align*}
    u(\phi(a_t),p_t) \le u(a_t,p_t).
\end{align*}
So, the second part in the decomposition is upper bounded by 0.

Thus, it suffices to bound the other two parts using decision calibration.

\begin{lemma} \label{lem:decision-calibration-realization-subsequence}
    If the sequence of predictions $p_1,\ldots,p_T$ is $(\cN,\cS,\alpha_2)$-decision calibrated, then for any $(u,\bc) \in \cN$ and $S \in \cS$:
    \begin{align*}
        \left| \sum_{t \in S} (u(a_t,p_t) - u(a_t,y_t)) \right| \le L |\cA| \alpha_2(|S|/|\cA|).
    \end{align*}
\end{lemma}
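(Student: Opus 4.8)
The plan is to mirror the proof of Lemma \ref{lem:decision-calibration-realization} verbatim, making only two substitutions: the full horizon $[T]$ is replaced by the subsequence $S$, and the feasible-set best response $\CBR^u_{\widehat\cA_t^\bc}$ is replaced by the union-set best response $\CBR^u_{U_t}$ that defines the agent's action in Algorithm \ref{alg:elimination-realization-subsequence}. The point is that the notion of $(\cN,\cS,\alpha_2)$-decision calibration from Definition \ref{def:decision-calibration-realization-subsequence} was tailored precisely to control the conditional bias of exactly these action-and-subsequence-indexed subsequences.

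First I would partition the sum over $t \in S$ according to the action actually played. Since $a_t = \CBR^u_{U_t}(p_t)$, the event $\{t \in S,\, a_t = a\}$ is exactly $\{t \in S,\, \CBR^u_{U_t}(p_t) = a\}$, which lets me rewrite
\[
    \sum_{t \in S}\bigl(u(a_t,p_t)-u(a_t,y_t)\bigr) = \sum_{a \in \cA}\sum_{t=1}^T \1[t \in S,\, \CBR^u_{U_t}(p_t)=a]\bigl(u(a,p_t)-u(a,y_t)\bigr).
\]
Then I would apply linearity of $u$ in its second argument to pull the inner sum inside each $u(a,\cdot)$, take absolute values using the triangle inequality over $a \in \cA$, and invoke the $L$-Lipschitz property to bound each term by $L\bigl\| \sum_{t=1}^T \1[t \in S,\, \CBR^u_{U_t}(p_t)=a](p_t - y_t)\bigr\|_\infty$. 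At this point the $(\cN,\cS,\alpha_2)$-decision calibration hypothesis applies term by term, bounding each summand by $\alpha_2(T^{u,\bc,S}(a))$, and yielding $L\sum_{a\in\cA}\alpha_2(T^{u,\bc,S}(a))$.

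To finish, I would use concavity of $\alpha_2$ (Jensen's inequality) together with the counting identity $\sum_{a \in \cA} T^{u,\bc,S}(a) = |S|$, which holds because exactly one action is played on every round, so the indicators $\1[t \in S,\, \CBR^u_{U_t}(p_t)=a]$ partition the $|S|$ rounds of $S$. This gives the claimed bound $L|\cA|\,\alpha_2(|S|/|\cA|)$. I do not anticipate any genuine obstacle here: the argument is a routine adaptation, and the only two points requiring care are (i) verifying that the counting identity evaluates to $|S|$ rather than $T$, and (ii) ensuring the indicator carries the subsequence-membership factor $\1[t\in S]$ so that the \emph{subsequence} decision-calibration bound is invoked rather than the marginal one from Definition \ref{def:decision-calibration-realization}.
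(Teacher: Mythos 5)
Your proposal is correct and follows essentially the same route as the paper's proof: partition the sum over $t \in S$ by the played action, use linearity and $L$-Lipschitzness of $u$ to reduce to the conditional bias terms, invoke $(\cN,\cS,\alpha_2)$-decision calibration, and finish with Jensen's inequality using $\sum_{a\in\cA} T^{u,\bc,S}(a)=|S|$. The two points of care you flag (the counting identity summing to $|S|$ and carrying the $\1[t\in S]$ factor into the calibration condition) are exactly the places where the paper's argument differs from the single-sequence Lemma \ref{lem:decision-calibration-realization}, and you handle both correctly.
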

\begin{proof}
    Using the linearity of $u$, we can write:
    \begin{align*}
        \left| \sum_{t \in S} (u(a_t,p_t) - u(a_t,y_t)) \right| &= \left| \sum_{a \in \cA}\sum_{t=1}^T \1[t \in S,a_t=a] (u(a,p_t) - u(a,y_t)) \right| \\
        &= \left| \sum_{a \in \cA} \left( u\left(a, \sum_{t=1}^T \1[t \in S,a_t=a] p_t \right) - u\left(a, \sum_{t=1}^T \1[t \in S,a_t=a] y_t\right) \right)\right| \\
        &\le \sum_{a \in \cA} \left| u\left(a, \sum_{t=1}^T \1[t \in S,a_t=a] p_t \right) - u\left(a, \sum_{t=1}^T \1[t \in S,a_t=a] y_t\right) \right| \\
        &\le \sum_{a \in \cA} L \left\| \sum_{t=1}^T \1[t \in S,a_t=a] (p_t-y_t) \right\|_\infty \\
        &\le L \sum_{a \in \cA} \alpha_2(T^{u,\bc,S}(a)).
    \end{align*}
    where the first inequality follows from the triangle inequality, the second inequality follows from $L$-Lipschitzness of $u$, and the third inequality follows from $(\cN,\cS,\alpha_2)$-decision calibration. By concavity of $\alpha_2$ and the fact that $\sum_{a \in \cA} T^{u,\bc,S}(a) = \sum_{a \in \cA} \sum_{t=1}^T \1[t \in S, a_t=a] = |S|$, this expression is at most:
    \begin{align*}
        L |\cA| \alpha_2(|S|/|\cA|).
    \end{align*}
\end{proof}

\begin{lemma} \label{lem:decision-calibration-realization-subsequence-swap}
    If the sequence of predictions $p_1,\ldots,p_T$ is $(\cN,\cS,\alpha_2)$-decision calibrated, then for any $(u,\bc) \in \cN$ and $S \in \cS$:
    \begin{align*}
        \left| \sum_{t \in S} (u(\phi(a_t),p_t) - u(\phi(a_t),y_t)) \right| \le L |\cA| \alpha_2(|S|/|\cA|).
    \end{align*}
\end{lemma}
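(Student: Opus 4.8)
The plan is to mimic the proof of Lemma \ref{lem:decision-calibration-realization-subsequence} verbatim, with the single change that the swapped-in action $\phi(a)$ takes the role of the fixed action inside each utility evaluation, while the \emph{grouping} of rounds is still performed according to the agent's realized action $a_t$, not according to $\phi(a_t)$. The crucial observation is that since $a_t = \CBR^u_{U_t}(p_t)$, the indicator $\1[t \in S, a_t = a]$ coincides with $\1[t \in S, \CBR^u_{U_t}(p_t) = a]$, which is exactly the conditioning event appearing in the definition of $(\cN,\cS,\alpha_2)$-decision calibration. Organizing the sum by the chosen action $a$ (rather than by $\phi(a)$) is precisely what makes the decision calibration bound directly applicable.

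First I would partition the sum over $t \in S$ according to the realized action, writing
\[
\sum_{t \in S} (u(\phi(a_t),p_t) - u(\phi(a_t),y_t)) = \sum_{a \in \cA} \sum_{t=1}^T \1[t \in S, a_t = a] (u(\phi(a),p_t) - u(\phi(a),y_t)).
\]
Because $\phi(a)$ is a single fixed action within the inner sum, linearity of $u$ in its second argument lets me pull the weighted sums of $p_t$ and $y_t$ inside, rewriting each summand over $a$ as
\[
u\!\left(\phi(a), \sum_{t=1}^T \1[t \in S, a_t = a]\, p_t\right) - u\!\left(\phi(a), \sum_{t=1}^T \1[t \in S, a_t = a]\, y_t\right).
\]

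Next I would apply the triangle inequality across the sum over $a \in \cA$, then invoke $L$-Lipschitzness of $u$ in the $\ell_\infty$ norm to bound each term by $L \, \| \sum_{t} \1[t \in S, a_t = a](p_t - y_t) \|_\infty$, and then apply $(\cN,\cS,\alpha_2)$-decision calibration to bound this in turn by $L\,\alpha_2(T^{u,\bc,S}(a))$. Finally, using concavity of $\alpha_2$ together with the identity $\sum_{a \in \cA} T^{u,\bc,S}(a) = |S|$, Jensen's inequality yields the claimed bound $L |\cA| \alpha_2(|S|/|\cA|)$.

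I do not expect any genuine obstacle here: the argument is structurally identical to Lemmas \ref{lem:decision-calibration-realization-swap} and \ref{lem:decision-calibration-realization-subsequence}. The only point meriting explicit care—and worth flagging in the write-up—is that the decision calibration guarantee is indexed by the realized best-response action, so the sum must be grouped by $a_t = a$ and not by the value $\phi(a_t)$; the Lipschitz step then harmlessly substitutes $\phi(a)$ for $a$ inside the (already fixed) utility argument, leaving the controlled quantity $\| \sum_{t} \1[t \in S, a_t = a](p_t - y_t) \|_\infty$ untouched.
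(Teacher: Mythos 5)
Your proposal is correct and follows essentially the same route as the paper's proof: partition by the realized action $a_t=a$ (so that the indicator matches the conditioning event $\1[t\in S,\,\CBR^u_{U_t}(p_t)=a]$ in the decision-calibration definition), use linearity to aggregate, then apply the triangle inequality, $L$-Lipschitzness, the calibration bound, and concavity of $\alpha_2$ with $\sum_{a\in\cA}T^{u,\bc,S}(a)=|S|$. The point you flag---grouping by $a_t$ rather than by $\phi(a_t)$ while substituting $\phi(a)$ only inside the utility's first argument---is exactly the step the paper's argument relies on.
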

\begin{proof}
    Using the linearity of $u$, we can write:
    \begin{align*}
        \left| \sum_{t \in S} (u(\phi(a_t),p_t) - u(\phi(a_t),y_t)) \right| &= \left| \sum_{a \in \cA}\sum_{t=1}^T \1[t \in S,a_t=a] (u(\phi(a),p_t) - u(\phi(a),y_t)) \right| \\
        &= \left| \sum_{a \in \cA} \left( u\left(\phi(a), \sum_{t=1}^T \1[t \in S,a_t=a] p_t \right) - u\left(\phi(a), \sum_{t=1}^T \1[t \in S,a_t=a] y_t\right) \right)\right| \\
        &\le \sum_{a \in \cA} \left| u\left(\phi(a), \sum_{t=1}^T \1[t \in S,a_t=a] p_t \right) - u\left(\phi(a), \sum_{t=1}^T \1[t \in S,a_t=a] y_t\right) \right| \\
        &\le \sum_{a \in \cA} L \left\| \sum_{t=1}^T \1[t \in S,a_t=a] (p_t-y_t) \right\|_\infty \\
        &\le L \sum_{a \in \cA} \alpha_2(T^{u,\bc,S}(a)) \\
        &\le L |\cA| \alpha_2(|S|/|\cA|)
    \end{align*}
    where the first inequality follows from the triangle inequality, the second inequality follows from $L$-Lipschitzness of $u$, the third inequality follows from $(\cN,\cS,\alpha_2)$-decision calibration, and the fourth inequality follows from concavity of $\alpha_2$.
\end{proof}

We can now complete the proof of Theorem \ref{thm:swap-regret-realization-subsequence}.
\begin{proof}
    Fix any $(u,\bc) \in \cN$, $S \in \cS$, and $\phi: \cA \to \cA^\bc_{S}$. Applying Lemmas \ref{lem:decision-calibration-realization-subsequence} and \ref{lem:decision-calibration-realization-subsequence-swap} to the decomposition of the regret against $\phi$, we have that:
    \begin{align*}
        \sum_{t \in S} \left( u(a,y_t) - u(a_t,y_t) \right) &= \sum_{t \in S} \left( u(a,y_t) - u(a,p_t) \right) + \sum_{t \in S} \left( u(a,p_t) - u(a_t,p_t) \right) + \sum_{t \in S} \left( u(a_t,p_t) - u(a_t,y_t) \right) \\
        &\le 2 L |\cA| \alpha_2(|S|/|\cA|).
\end{align*}
\end{proof}

\section{Omitted Results in Section \ref{sec:subsequence}} \label{app:subsequence}

In this section, we derive regret bounds for the benchmark classes $\{\cA^{\E[\bc]}_{S}\}_{S \in \cS}$.
By Corollary \ref{cor:benchmark-expectation-subsequence}, with probability at least $1-\delta$ over the randomness of the outcomes $y_1,\ldots,y_T$, $\cA_{S}^{\E[\bc]} \subseteq \widehat\cA_{t,S}^{\E[\bc]} \subseteq U_t$ for every agent, every $t \in [T]$, and every $S \in \cS_t$. We will condition on this event throughout this section. Therefore, all the guarantees we give in this section hold with probability at least $1-\delta$.

Similarly to Definition \ref{def:decision-calibration-realization-subsequence}, we will need the following notion of conditional unbiasedness.

\begin{definition}[$(\cN,\cS,\beta_2)$-Decision Calibration] \label{def:decision-calibration-expectation-subsequence}
    Let $\cS$ be a collection of subsequences. Let $\cN$ be a set of agents, where each agent is equipped with a utility function $u: \cA \times \cY \to [0,1]$ and $J$ constraint functions $\{c_j: \cA \times \cY \to [-1,1]\}_{j \in [J]}$. Each agent will run Algorithm \ref{alg:elimination-expectation-subsequence} to compete with the benchmark class $\cA^{\E[\bc]}_S$ over every subsequence $S \in \cS$, where the threshold parameter for each subsequence $S$ is set to $4\sqrt{|S|\ln\frac{|\mathcal{A}||\mathcal{N}||\mathcal{S}|^2J|S|}{\delta}}$. Each agent will maintain a candidate action set $\{U_t\}_{t \in [T]}$. Let $\beta_2: \bR \to \bR$. We say that a sequence of predictions $p_1,\ldots,p_T$ is $(\cN,\cS,\beta_2)$-decision calibrated with respect to a sequence of outcomes $y_1,\ldots,y_T$ if for every $S \in \cS$, $a\in \cA$, and $(u,\bc) \in \cN$:
    \[
        \left\| \sum_{t=1}^T \1[t \in S, \CBR^u_{U_t}(p_t)=a] (p_t - y_t) \right\|_\infty \leq \beta_2(T^{u,\E[\bc],S}(a)),
    \]
    where $T^{u,\E[\bc],S}(a) = \sum_{t=1}^T \1[t \in S, \CBR^u_{U_t}(p_t)=a]$.
\end{definition}

Similarly to Theorems \ref{thm:swap-regret-expectation} in the single-sequence case, decision calibration leads to the constrained swap regret bound.
\begin{theorem} \label{thm:swap-regret-expectation-subsequence}
    Let $\cS$ be a collection of subsequences. Let $\cN$ be a set of agents, where each agent is equipped with a utility function $u: \cA \times \cY \to [0,1]$ and $J$ constraint functions $\{c_j: \cA \times \cY \to [-1,1]\}_{j \in [J]}$. The utility functions are linear and $L$-Lipschitz in the second argument. Each agent runs Algorithm \ref{alg:elimination-expectation-subsequence} to compete with the benchmark class $\cA^{\E[\bc]}_S$ over every subsequence $S \in \cS$, where the threshold parameter for each subsequence $S$ is set to $4\sqrt{|S|\ln\frac{|\mathcal{A}||\mathcal{N}||\mathcal{S}|^2J|S|}{\delta}}$. If the sequence of predictions $p_1,\ldots,p_T$ is $(\cN,\cS,\beta_2)$-decision calibrated, then the $\cA^{\E[\bc]}_{S}$-constrained swap regret of any agent over any subsequence $S \in \cS$ is bounded by:
    \begin{align*}
        \Reg_\swap(u,\cA_{S}^{\E[\bc]},S) \le 2 L |\cA| \beta_2(|S|/|\cA|).
    \end{align*}
\end{theorem}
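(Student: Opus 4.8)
The plan is to mirror the proof of Theorem~\ref{thm:swap-regret-realization-subsequence}, substituting the high-probability feasibility guarantee of Corollary~\ref{cor:benchmark-expectation-subsequence} for the deterministic inclusion of Proposition~\ref{prop:benchmark-realization-subsequence}. First I would condition on the event---which holds with probability at least $1-\delta$ under the chosen thresholds---that $\cA_S^{\E[\bc]} \subseteq \widehat\cA_{t,S}^{\E[\bc]} \subseteq U_t$ for every $t \in [T]$ and $S \in \cS_t$. Fixing an agent $(u,\bc) \in \cN$, a subsequence $S \in \cS$, and an action modification rule $\phi: \cA \to \cA_S^{\E[\bc]}$, I would split the swap regret over $S$ into the three familiar parts:
\begin{align*}
    \sum_{t \in S}\left( u(\phi(a_t), y_t) - u(a_t, y_t) \right) &= \sum_{t \in S}\left( u(\phi(a_t), y_t) - u(\phi(a_t), p_t) \right) \\
    &\quad + \sum_{t \in S}\left( u(\phi(a_t), p_t) - u(a_t, p_t) \right) + \sum_{t \in S}\left( u(a_t, p_t) - u(a_t, y_t) \right).
\end{align*}

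Next I would show the middle term is at most $0$. On the conditioned event, $\phi(a_t) \in \cA_S^{\E[\bc]} \subseteq U_t$ for every $t \in S$, and since the agent plays $a_t = \CBR^u_{U_t}(p_t) = \argmax_{b \in U_t} u(b, p_t)$, we get $u(\phi(a_t), p_t) \le u(a_t, p_t)$ term by term. The first and third terms are then controlled by $(\cN,\cS,\beta_2)$-decision calibration, exactly as in Lemmas~\ref{lem:decision-calibration-realization-subsequence} and~\ref{lem:decision-calibration-realization-subsequence-swap} but with $\beta_2$ in place of $\alpha_2$: I would use linearity of $u$ to pull the indicator sums inside each argument, $L$-Lipschitzness to pass to the $\ell_\infty$ bias $\|\sum_{t} \1[t \in S, a_t = a](p_t - y_t)\|_\infty$, the decision-calibration bound $\beta_2(T^{u,\E[\bc],S}(a))$ for each $a \in \cA$, and finally concavity of $\beta_2$ together with $\sum_{a \in \cA} T^{u,\E[\bc],S}(a) = |S|$ to conclude that each of the two terms is at most $L|\cA|\beta_2(|S|/|\cA|)$. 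Summing the three contributions yields the claimed bound $2L|\cA|\beta_2(|S|/|\cA|)$.

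The only genuine departure from the realization-subsequence proof is that the feasibility inclusion $\cA_S^{\E[\bc]} \subseteq U_t$ now holds merely on a high-probability event rather than deterministically, so the whole argument must be executed conditionally on the event of Corollary~\ref{cor:benchmark-expectation-subsequence}---which is exactly why the guarantees in this section carry probability $1-\delta$. I expect this conditioning to be the main (and essentially the sole) obstacle: no fresh concentration is required, since the martingale analysis was already carried out in Proposition~\ref{prop:benchmark-expectation-subsequence}, and the two supporting calibration lemmas are otherwise verbatim analogs of the realization case with $\beta_2$ replacing $\alpha_2$.
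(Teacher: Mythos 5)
Your proposal matches the paper's proof essentially step for step: conditioning on the event of Corollary~\ref{cor:benchmark-expectation-subsequence}, the same three-term decomposition, the middle term bounded by $0$ via $\phi(a_t)\in\cA_S^{\E[\bc]}\subseteq U_t$ and the $U_t$-constrained best response, and the outer terms handled by the $\beta_2$-analogs of the subsequence decision-calibration lemmas with concavity and $\sum_{a}T^{u,\E[\bc],S}(a)=|S|$. The argument is correct and takes the same route as the paper.
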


Similarly to the proof of Theorem \ref{thm:swap-regret-expectation}, for any subsequence $S \in \cS$ and action modification rule $\phi: \cA \to \cA_{S}^{\E[\bc]}$, we can decompose the regret against $\phi$ into three parts as:
\begin{align*}
    \MoveEqLeft \sum_{t \in S} \left( u(\phi(a_t),y_t) - u(a_t,y_t) \right) \\
    &= \sum_{t \in S} \left( u(\phi(a_t),y_t) - u(\phi(a_t),p_t) \right) + \sum_{t \in S} \left( u(\phi(a_t),p_t) - u(a_t,p_t) \right) + \sum_{t \in S} \left( u(a_t,p_t) - u(a_t,y_t) \right).
\end{align*}

Conditioning on the event in Corollary \ref{cor:benchmark-expectation-subsequence} and the fact that $\phi(a_t) \in \cA_S^{\E[\bc]}$,  $\phi(a_t) \in U_t$ for every $t \in S$. Since agents $U_t$-constrained best response to the prediction $p_t$ in round $t$, $a_t = \argmax_{b \in U_t} u(b,p_t)$. This implies that:
\begin{align*}
    u(\phi(a_t),p_t) \le u(a_t,p_t).
\end{align*}
So, the second part in the decomposition is upper bounded by 0.

Thus, it suffices to bound the other two parts using decision calibration.

\begin{lemma} \label{lem:decision-calibration-expectation-subsequence}
    If the sequence of predictions $p_1,\ldots,p_T$ is $(\cN,\cS,\beta_2)$-decision calibrated, then for any $(u,\bc) \in \cN$ and $S \in \cS$:
    \begin{align*}
        \left| \sum_{t \in S} (u(a_t,p_t) - u(a_t,y_t)) \right| \le L |\cA| \beta_2(|S|/|\cA|).
    \end{align*}
\end{lemma}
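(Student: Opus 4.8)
The plan is to prove this lemma by a routine adaptation of the proof of Lemma \ref{lem:decision-calibration-realization-subsequence}, replacing the realization-benchmark quantities $\alpha_2$ and $T^{u,\bc,S}(a)$ with their expectation-benchmark counterparts $\beta_2$ and $T^{u,\E[\bc],S}(a)$. The only structural fact I need is that, by construction of Algorithm \ref{alg:elimination-expectation-subsequence}, the action played is $a_t = \CBR^u_{U_t}(p_t)$, so that the conditioning event $\1[a_t=a]$ coincides exactly with the event $\1[\CBR^u_{U_t}(p_t)=a]$ appearing in the definition of $(\cN,\cS,\beta_2)$-decision calibration (Definition \ref{def:decision-calibration-expectation-subsequence}). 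This lets me invoke decision calibration directly without any change to the algorithm's decision rule.

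The steps, in order, are as follows. First I would partition the sum over $t \in S$ according to the action played, writing $\sum_{t \in S}(u(a_t,p_t)-u(a_t,y_t)) = \sum_{a \in \cA}\sum_{t=1}^T \1[t \in S, a_t=a](u(a,p_t)-u(a,y_t))$. Second, I would use linearity of $u$ in its second argument to pull the indicator-weighted sum inside $u$, turning each action's contribution into $u(a, \sum_t \1[t\in S,a_t=a]p_t) - u(a, \sum_t \1[t\in S,a_t=a]y_t)$. Third, I apply the triangle inequality across $a \in \cA$, and then fourth, the $L$-Lipschitzness of $u$ in the $\ell_\infty$ norm to bound each action's term by $L \big\| \sum_t \1[t\in S,a_t=a](p_t-y_t)\big\|_\infty$. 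Fifth, I invoke $(\cN,\cS,\beta_2)$-decision calibration to bound each such $\ell_\infty$ norm by $\beta_2(T^{u,\E[\bc],S}(a))$, yielding $L\sum_{a \in \cA}\beta_2(T^{u,\E[\bc],S}(a))$.

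The final step uses concavity of $\beta_2$ together with the identity $\sum_{a \in \cA} T^{u,\E[\bc],S}(a) = \sum_{a \in \cA}\sum_{t=1}^T \1[t\in S, a_t=a] = |S|$. By Jensen's inequality, $\sum_{a \in \cA}\beta_2(T^{u,\E[\bc],S}(a)) \le |\cA|\,\beta_2(|S|/|\cA|)$, which gives the claimed bound $L|\cA|\beta_2(|S|/|\cA|)$. I do not anticipate a genuine obstacle here, since every inequality is either an equality from linearity, a triangle inequality, a direct application of an assumption (Lipschitzness, decision calibration, concavity), or a counting identity; the proof is the verbatim expectation-benchmark mirror of Lemma \ref{lem:decision-calibration-realization-subsequence}. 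The one point requiring care is that all statements in this appendix are conditioned on the high-probability event of Corollary \ref{cor:benchmark-expectation-subsequence} (so that the candidate sets and hence the selected actions $a_t$ are well-defined in the relevant sense), but this conditioning is already in force throughout the section and does not affect the present deterministic manipulation.
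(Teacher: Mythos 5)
Your proposal is correct and follows exactly the same route as the paper's proof: partition by action, use linearity to move the indicator-weighted sums inside $u$, apply the triangle inequality and $L$-Lipschitzness, invoke $(\cN,\cS,\beta_2)$-decision calibration via the identification $a_t = \CBR^u_{U_t}(p_t)$, and finish with concavity of $\beta_2$ and the counting identity $\sum_{a \in \cA} T^{u,\E[\bc],S}(a) = |S|$. No gaps.
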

\begin{proof}
    Using the linearity of $u$, we can write:
    \begin{align*}
        \left| \sum_{t \in S} (u(a_t,p_t) - u(a_t,y_t)) \right| &= \left| \sum_{a \in \cA}\sum_{t=1}^T \1[t \in S,a_t=a] (u(a,p_t) - u(a,y_t)) \right| \\
        &= \left| \sum_{a \in \cA} \left( u\left(a, \sum_{t=1}^T \1[t \in S,a_t=a] p_t \right) - u\left(a, \sum_{t=1}^T \1[t \in S,a_t=a] y_t\right) \right)\right| \\
        &\le \sum_{a \in \cA} \left| u\left(a, \sum_{t=1}^T \1[t \in S,a_t=a] p_t \right) - u\left(a, \sum_{t=1}^T \1[t \in S,a_t=a] y_t\right) \right| \\
        &\le \sum_{a \in \cA} L \left\| \sum_{t=1}^T \1[t \in S,a_t=a] (p_t-y_t) \right\|_\infty \\
        &\le L \sum_{a \in \cA} \beta_2(T^{u,\E[\bc],S}(a)) \\
        &\le L |\cA| \beta_2(|S|/|\cA|).
    \end{align*}
    where the first inequality follows from the triangle inequality, the second inequality follows from $L$-Lipschitzness of $u$, the third inequality follows from $(\cN,\cS,\beta_2)$-decision calibration, and the fourth inequality follows from concavity of $\beta_2$.
\end{proof}

\begin{lemma} \label{lem:decision-calibration-expectation-subsequence-swap}
    If the sequence of predictions $p_1,\ldots,p_T$ is $(\cN,\cS,\beta_2)$-decision calibrated, then for any $(u,\bc) \in \cN$ and $S \in \cS$:
    \begin{align*}
        \left| \sum_{t \in S} (u(\phi(a_t),p_t) - u(\phi(a_t),y_t)) \right| \le L |\cA| \beta_2(|S|/|\cA|).
    \end{align*}
\end{lemma}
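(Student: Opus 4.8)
The plan is to transcribe the proof of Lemma~\ref{lem:decision-calibration-expectation-subsequence} essentially verbatim, replacing the played action $a$ by its image $\phi(a)$ inside every utility evaluation. First I would partition the sum over $t \in S$ according to the action actually played, writing $\sum_{t \in S}\left(u(\phi(a_t),p_t) - u(\phi(a_t),y_t)\right) = \sum_{a \in \cA}\sum_{t=1}^T \1[t \in S, a_t = a]\left(u(\phi(a),p_t) - u(\phi(a),y_t)\right)$, which is valid because the event $a_t = a$ forces $\phi(a_t) = \phi(a)$.

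Next, since $u(\phi(a),\cdot)$ is linear in its second argument, I would pull the indicator-weighted sum inside the utility, collapsing the per-round differences into $u\bigl(\phi(a), \sum_{t=1}^T \1[t \in S, a_t=a]\, p_t\bigr) - u\bigl(\phi(a), \sum_{t=1}^T \1[t \in S, a_t=a]\, y_t\bigr)$ for each $a$. Applying the triangle inequality over $a \in \cA$ followed by the $L$-Lipschitzness of $u$ in the $\ell_\infty$ norm bounds the whole expression by $L \sum_{a \in \cA} \bigl\| \sum_{t=1}^T \1[t \in S, a_t=a](p_t - y_t) \bigr\|_\infty$.

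The one conceptually important step is invoking $(\cN,\cS,\beta_2)$-decision calibration: since $\{a_t = a\}$ is exactly the event $\{\CBR^u_{U_t}(p_t) = a\}$, the calibration guarantee bounds each norm term by $\beta_2(T^{u,\E[\bc],S}(a))$. I expect this to be the only place requiring genuine care, and even then the observation is simple: the bias guarantee is stated purely in terms of the conditioning event (which action was played) and is completely agnostic to the function $u(\phi(a),\cdot)$ that we later apply to the accumulated prediction--outcome gap. Consequently, the identical bias bound that controls the unswapped term in Lemma~\ref{lem:decision-calibration-expectation-subsequence} also controls the swapped term here, because both depend only on unbiasedness over the rounds in $S$ on which action $a$ was played. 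Finally, I would invoke the concavity of $\beta_2$ together with the identity $\sum_{a \in \cA} T^{u,\E[\bc],S}(a) = |S|$ (each round in $S$ contributes to exactly one action count) via Jensen's inequality to pass from $L \sum_{a \in \cA} \beta_2(T^{u,\E[\bc],S}(a))$ to the claimed $L|\cA|\beta_2(|S|/|\cA|)$. No substantive obstacle arises; the argument is a direct fusion of the swap structure of Lemma~\ref{lem:decision-calibration-realization-subsequence-swap} with the expectation-benchmark decision-calibration notion of Definition~\ref{def:decision-calibration-expectation-subsequence}.
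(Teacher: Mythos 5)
Your proposal is correct and follows the paper's own proof essentially step for step: partition by the played action, use linearity to collapse the sums inside $u(\phi(a),\cdot)$, apply the triangle inequality and $L$-Lipschitzness, invoke the $(\cN,\cS,\beta_2)$-decision calibration bound on the event $\{\CBR^u_{U_t}(p_t)=a\}$, and finish with concavity of $\beta_2$ and $\sum_{a}T^{u,\E[\bc],S}(a)=|S|$. Your observation that the calibration guarantee depends only on the conditioning event and not on which linear functional is later applied is exactly the point the paper relies on.
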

\begin{proof}
    Using the linearity of $u$, we can write:
    \begin{align*}
        \left| \sum_{t \in S} (u(\phi(a_t),p_t) - u(\phi(a_t),y_t)) \right| &= \left| \sum_{a \in \cA}\sum_{t=1}^T \1[t \in S,a_t=a] (u(\phi(a),p_t) - u(\phi(a),y_t)) \right| \\
        &= \left| \sum_{a \in \cA} \left( u\left(\phi(a), \sum_{t=1}^T \1[t \in S,a_t=a] p_t \right) - u\left(\phi(a), \sum_{t=1}^T \1[t \in S,a_t=a] y_t\right) \right)\right| \\
        &\le \sum_{a \in \cA} \left| u\left(\phi(a), \sum_{t=1}^T \1[t \in S,a_t=a] p_t \right) - u\left(\phi(a), \sum_{t=1}^T \1[t \in S,a_t=a] y_t\right) \right| \\
        &\le \sum_{a \in \cA} L \left\| \sum_{t=1}^T \1[t \in S,a_t=a] (p_t-y_t) \right\|_\infty \\
        &\le L \sum_{a \in \cA} \beta_2(T^{u,\E[\bc],S}(a)) \\
        &\le L |\cA| \beta_2(|S|/|\cA|).
    \end{align*}
    where the first inequality follows from the triangle inequality, the second inequality follows from $L$-Lipschitzness of $u$, the third inequality follows from $(\cN,\cS,\beta_2)$-decision calibration, and the fourth inequality follows from concavity of $\beta_2$.
\end{proof}

We can now complete the proof of Theorem \ref{thm:swap-regret-expectation-subsequence}.
\begin{proof}
    Fix any $(u,\bc) \in \cN$, $S \in \cS$, and $\phi: \cA \to \cA^{\E[\bc]}_{S}$. Applying Lemmas \ref{lem:decision-calibration-expectation-subsequence} and \ref{lem:decision-calibration-expectation-subsequence-swap} to the decomposition of the regret against $\phi$, we have that:
    \begin{align*}
        \sum_{t \in S} \left( u(a,y_t) - u(a_t,y_t) \right) &= \sum_{t \in S} \left( u(a,y_t) - u(a,p_t) \right) + \sum_{t \in S} \left( u(a,p_t) - u(a_t,p_t) \right) + \sum_{t \in S} \left( u(a_t,p_t) - u(a_t,y_t) \right) \\
        &\le 2 L |\cA| \beta_2(|S|/|\cA|).
\end{align*}
\end{proof}

We will again instantiate \textsc{Unbiased-Prediction} to make decision calibrated predictions.

\begin{theorem} \label{thm:unbiased-algorithm-expectation-subsequence}
    Let $\cS$ be a collection of subsequences. Let $\cN$ be a set of agents, where each agent is equipped with a utility function $u: \cA \times \cY \to [0,1]$ and $J$ constraint functions $\{c_j: \cA \times \cY \to [-1,1]\}_{j \in [J]}$. Each agent will run Algorithm \ref{alg:elimination-expectation-subsequence} to compete with the benchmark class $\cA^{\E[\bc]}_S$ over every subsequence $S \in \cS$, where the threshold parameter for each subsequence $S$ is set to $4\sqrt{|S|\ln\frac{|\mathcal{A}||\mathcal{N}||\mathcal{S}|^2J|S|}{\delta}}$. Each agent will maintain a candidate action set $\{U_t\}_{t \in [T]}$. 
    There is an instantiation of \textsc{Unbiased-Prediction} \citep{noarov2023highdimensional}
    ---which we call \textsc{Decision-Calibration-Subsequence}---
    producing predictions $p_1,...,p_T \in \cY$ satisfying, with probability at least $1-\delta$, for any $(u,\bc) \in \cN$, $a \in \cA$, and $S \in \cS$:
    \[
        \left\| \sum_{t=1}^T \1[t \in S, \CBR^u_{U_{t}}(p_t) = a] (p_t - y_t) \right\|_\infty \leq O\left( \ln(d|\cA||\cN||\cS|T) + \sqrt{\ln(d|\cA||\cN||\cS|T) \cdot T^{u,\E[\bc],S}(a)} + \sqrt{\ln\frac{d|\cA||\cN||\cS|}{\delta} \cdot |S|}\right).
    \]
    
\end{theorem}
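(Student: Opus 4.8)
The plan is to obtain this bound as a direct instantiation of the \textsc{Unbiased-Prediction} algorithm of \citet{noarov2023highdimensional}, mirroring the proof of Theorem \ref{thm:unbiased-algorithm-realization} but enlarging the collection of events to encode the subsequence structure. Recall that \textsc{Unbiased-Prediction} takes as input a finite family of \emph{events}, each a $\{0,1\}$-valued function of the observed context, the transcript through the previous round, and the realized prediction $p_t$, and guarantees that the accumulated bias $\sum_t \1[\text{event at }t]\,(p_t - y_t)$ is controlled simultaneously across all events and all $d$ coordinates. The only real work is to specify the right event family and to verify that it is admissible for the framework.

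For each agent $(u,\bc)\in\cN$, each subsequence $S\in\cS$, and each action $a\in\cA$, I would define the event
$$
E_{u,\bc,S,a}(t) = \1\bigl[t\in S,\ \CBR^u_{U_t}(p_t)=a\bigr],
$$
where $U_t$ is that agent's union set maintained by Algorithm \ref{alg:elimination-expectation-subsequence}. The key point is that this is an admissible event: membership $t\in S$ equals $h_S(t,x_t)$, which is fixed once $x_t$ is revealed (before the prediction); the set $U_t=\cup_{S'\in\cS_t}\widehat\cA^{\E[\bc]}_{t,S'}$ is a deterministic function of the transcript through round $t-1$, since it is produced by the (history-measurable) responsible-subsequence attribution and elimination rule of Algorithm \ref{alg:elimination-expectation-subsequence}; and, with $U_t$ and $u$ fixed, $\CBR^u_{U_t}(\cdot)$ is a fixed function of $p_t$ — because $u$ is linear in $y$, it merely partitions $\cY$ into the polytopal best-response regions. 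Thus $E_{u,\bc,S,a}$ depends only on $\bigl(x_t,\ \text{transcript}_{<t},\ p_t\bigr)$, exactly as the framework requires. The total number of events is $|\cN|\,|\cS|\,|\cA|$, and the $d$ coordinates (and two signs) enter only logarithmically.

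I would then invoke the guarantee of \textsc{Unbiased-Prediction} with this family. Its bound splits into a data-dependent part of order $\ln(\text{\#events}\cdot T) + \sqrt{\ln(\text{\#events}\cdot T)\cdot n_E}$, where $n_E$ is the number of rounds on which the event is active, together with a high-probability martingale term. Substituting the event count $O(d|\cA||\cN||\cS|)$ turns the logarithm into $\ln(d|\cA||\cN||\cS|T)$, and noting that $n_{E_{u,\bc,S,a}} = T^{u,\E[\bc],S}(a)$ gives precisely the first two terms of the stated bound.

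The step I expect to require the most care is producing the high-probability term with the horizon $T$ replaced by the subsequence length $|S|$. The relevant martingale-difference sequence for event $E_{u,\bc,S,a}$ is $\1\bigl[t\in S,\ \CBR^u_{U_t}(p_t)=a\bigr]\bigl(\E[y_t\mid\cF_{t-1}]-y_t\bigr)$; because it carries the factor $\1[t\in S]$, and because $\1[t\in S]=h_S(t,x_t)$ is $\cF_{t-1}$-measurable (it is fixed before $p_t$ is drawn), this sequence is identically zero on every round outside $S$. Hence the martingale is supported on the \emph{deterministic} set $S$ of size exactly $|S|$, rather than on a prediction-dependent random subset of $[T]$; this is the essential difference from the realization analogue, whose term scales with $T$. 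Applying Azuma–Hoeffding (Lemma \ref{lem:azuma}) over these $|S|$ rounds yields a per-event deviation of order $\sqrt{\ln(1/\delta')\cdot|S|}$, and a union bound over the $O(d|\cA||\cN||\cS|)$ events with $\delta'=\delta/(\text{\#events})$ produces the final term $\sqrt{\ln\tfrac{d|\cA||\cN||\cS|}{\delta}\cdot|S|}$. Combining the three contributions and absorbing constants into the $O(\cdot)$ gives the claim, holding simultaneously for all $(u,\bc)\in\cN$, $a\in\cA$, and $S\in\cS$ with probability at least $1-\delta$.
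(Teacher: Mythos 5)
Your proposal is correct and follows essentially the same route as the paper: the paper instantiates \textsc{Unbiased-Prediction} with the product events $\{\1[t\in S]\cdot\1[\CBR^u_{U_t}(p_t)=a]\}_{(u,\bc)\in\cN,\,a\in\cA,\,S\in\cS}$ (Corollary \ref{cor:unbiased-prediction-algorithm-subsequence}), which is exactly your family $E_{u,\bc,S,a}$, and obtains the $\sqrt{|S|}$-scaled deviation term by the same Azuma--Hoeffding-over-$S$ plus union bound argument, relying as you do on the predictability of $\1[t\in S]=h_S(t,x_t)$ and of $U_t$. The one imprecision is your description of the martingale: the concentration step converts the algorithm's \emph{expected} bias guarantee $\sum_t\E_{p_t\sim\psi_t}[E(\cdot)(p_t-y_t)]$ into a realized one, so the martingale differences are taken over the draw of $p_t\sim\psi_t$ rather than over the outcome randomness $\E[y_t\mid\cF_{t-1}]-y_t$, but this does not affect the structure of the argument or the resulting bound.
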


Substituting the above bounds into Theorems \ref{thm:swap-regret-expectation-subsequence}, we arrive at the following corollary bounding the $\cA^{\E[\bc]}_{S}$-constrained swap regret over every subsequence $S \in \cS$.

\begin{corollary} \label{cor:swap-regret-expectation-subsequence}
    Let $\cN$ be a set of agents, where each agent is equipped with a utility function $u: \cA \times \cY \to [0,1]$ and $J$ constraint functions $\{c_j: \cA \times \cY \to [-1,1]\}_{j \in [J]}$. The utility functions are linear and $L$-Lipschitz in the second argument. Each agent will run Algorithm \ref{alg:elimination-expectation-subsequence} to compete with the benchmark class $\cA^{\E[\bc]}_S$ over every subsequence $S \in \cS$, where the threshold parameter for each subsequence $S$ is set to $4\sqrt{|S|\ln\frac{|\mathcal{A}||\mathcal{N}||\mathcal{S}|^2J|S|}{\delta}}$.
    The sequence of predictions $p_1,\ldots,p_T$ outputted by \textsc{Decision-Calibration-Subsequence} ensures that with probability at least $1-\delta$, the $\cA^{\E[\bc]}_{S}$-constrained swap regret of any agent over any subsequence $S \in \cS$ is bounded by:
    \begin{align*}
        \Reg_\swap(u,\cA_{S}^{\E[\bc]},S) \le O\left( L |\cA| \sqrt{|S| \ln\frac{d|\cA||\cN||\cS|T}{\delta}} \right).
    \end{align*}
\end{corollary}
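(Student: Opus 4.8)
The plan is to chain together the two immediately preceding results. Theorem~\ref{thm:swap-regret-expectation-subsequence} reduces the subsequence swap-regret guarantee to the existence of $(\cN,\cS,\beta_2)$-decision calibrated predictions, while Theorem~\ref{thm:unbiased-algorithm-expectation-subsequence} certifies that \textsc{Decision-Calibration-Subsequence} produces predictions meeting exactly this condition. So the corollary is obtained purely by substituting the explicit $\beta_2$ from the latter into the bound of the former and simplifying.

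First I would read off from Theorem~\ref{thm:unbiased-algorithm-expectation-subsequence} that, with probability at least $1-\delta$, the predictions are $(\cN,\cS,\beta_2)$-decision calibrated for
\[
    \beta_2(n) = O\!\left( \ln(d|\cA||\cN||\cS|T) + \sqrt{\ln(d|\cA||\cN||\cS|T)\cdot n} + \sqrt{\ln\frac{d|\cA||\cN||\cS|}{\delta}\cdot |S|}\right),
\]
where, for the fixed subsequence $S$ under consideration, $|S|$ is treated as a constant. I would check that this $\beta_2$ is concave in its argument $n$ — it is a constant plus a square-root of $n$ plus another constant — so that the concavity hypothesis underlying Lemmas~\ref{lem:decision-calibration-expectation-subsequence} and~\ref{lem:decision-calibration-expectation-subsequence-swap} (and hence Theorem~\ref{thm:swap-regret-expectation-subsequence}) is satisfied.

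Next I would invoke Theorem~\ref{thm:swap-regret-expectation-subsequence} to get $\Reg_\swap(u,\cA_S^{\E[\bc]},S) \le 2L|\cA|\,\beta_2(|S|/|\cA|)$ for every agent and every $S \in \cS$, then substitute $n = |S|/|\cA|$ and expand $2L|\cA|\,\beta_2(|S|/|\cA|)$ term by term. The three resulting contributions are $O(L|\cA|\ln(d|\cA||\cN||\cS|T))$, $O\!\big(L\sqrt{|\cA||S|\ln(d|\cA||\cN||\cS|T)}\big)$ (using the identity $|\cA|\sqrt{|S|/|\cA|}=\sqrt{|\cA||S|}$), and $O\!\big(L|\cA|\sqrt{|S|\ln\frac{d|\cA||\cN||\cS|}{\delta}}\big)$. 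The last term dominates the other two; bounding $\ln\frac{d|\cA||\cN||\cS|}{\delta}\le\ln\frac{d|\cA||\cN||\cS|T}{\delta}$ and absorbing the lower-order terms then yields the claimed $O\!\big(L|\cA|\sqrt{|S|\ln\frac{d|\cA||\cN||\cS|T}{\delta}}\big)$.

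The proof is essentially routine bookkeeping, and the only point needing care is the probability accounting. The decision-calibration guarantee of Theorem~\ref{thm:unbiased-algorithm-expectation-subsequence} and the benchmark-preservation guarantee of Corollary~\ref{cor:benchmark-expectation-subsequence} (on which Theorem~\ref{thm:swap-regret-expectation-subsequence} implicitly conditions) are logically distinct high-probability events, and the final $1-\delta$ statement requires both to hold simultaneously across all agents and subsequences. I would therefore take a union bound over the two events, running each at level $\delta/2$; this rescaling changes only the constants inside the logarithms and so leaves the stated rate intact.
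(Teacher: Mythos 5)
Your proposal is correct and follows exactly the route the paper intends: the paper gives no explicit proof, stating only that the corollary follows by substituting the calibration bound of Theorem~\ref{thm:unbiased-algorithm-expectation-subsequence} into Theorem~\ref{thm:swap-regret-expectation-subsequence}, which is precisely your computation. Your extra care on the probability accounting (union-bounding the calibration event with the benchmark-preservation event of Corollary~\ref{cor:benchmark-expectation-subsequence} at level $\delta/2$ each) is a point the paper glosses over, and your handling of it is right and affects only constants inside the logarithms.
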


Since constrained swap regret is stronger than constrained external regret, we can also bound the $\cA^{\E[\bc]}_{S}$-constrained external regret over every subsequence $S \in \cS$.
\begin{corollary} \label{cor:external-regret-expectation-subsequence}
    Let $\cN$ be a set of agents, where each agent is equipped with a utility function $u: \cA \times \cY \to [0,1]$ and $J$ constraint functions $\{c_j: \cA \times \cY \to [-1,1]\}_{j \in [J]}$. The utility functions are linear and $L$-Lipschitz in the second argument. Each agent will run Algorithm \ref{alg:elimination-expectation-subsequence} to compete with the benchmark class $\cA^{\E[\bc]}_S$ over every subsequence $S \in \cS$, where the threshold parameter for each subsequence $S$ is set to $4\sqrt{|S|\ln\frac{|\mathcal{A}||\mathcal{N}||\mathcal{S}|^2J|S|}{\delta}}$.
    The sequence of predictions $p_1,\ldots,p_T$ outputted by \textsc{Decision-Calibration-Subsequence} ensures that with probability at least $1-\delta$, the $\cA^{\E[\bc]}_{S}$-constrained external regret of any agent over any subsequence $S \in \cS$ is bounded by:
    \begin{align*}
        \Reg_\ext(u,\cA_{S}^{\E[\bc]},S) \le O\left( L |\cA| \sqrt{|S| \ln\frac{d|\cA||\cN||\cS|T}{\delta}} \right).
    \end{align*}
\end{corollary}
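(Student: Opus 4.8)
The plan is to obtain this bound as an immediate consequence of the swap-regret guarantee already established in Corollary~\ref{cor:swap-regret-expectation-subsequence}, using the elementary fact that constrained external regret is a special case of constrained swap regret. Concretely, fix an agent $(u,\bc) \in \cN$ and a subsequence $S \in \cS$, and take the benchmark class to be $\cB = \cA_S^{\E[\bc]}$. For any fixed benchmark action $a \in \cB$, define the constant action modification rule $\phi_a : \cA \to \cB$ by $\phi_a(b) = a$ for all $b \in \cA$; this is a valid map into $\cB$ precisely because its single image $a$ lies in $\cB$.

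First I would observe that for this constant rule,
\[
    \sum_{t \in S} \bigl( u(a,y_t) - u(a_t,y_t) \bigr) = \sum_{t \in S} \bigl( u(\phi_a(a_t),y_t) - u(a_t,y_t) \bigr) \le \max_{\phi : \cA \to \cB} \sum_{t \in S} \bigl( u(\phi(a_t),y_t) - u(a_t,y_t) \bigr) = \Reg_\swap(u,\cA_S^{\E[\bc]},S).
\]
Taking the maximum of the left-hand side over all $a \in \cB$ yields $\Reg_\ext(u,\cA_S^{\E[\bc]},S) \le \Reg_\swap(u,\cA_S^{\E[\bc]},S)$, and this inequality holds pointwise for every realization of the interaction, requiring no probabilistic argument of its own.

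Finally, I would invoke Corollary~\ref{cor:swap-regret-expectation-subsequence}, which---under the identical hypotheses on $\cN$, $\cS$, the subsequence-specific threshold parameters, and the predictions produced by \textsc{Decision-Calibration-Subsequence}---guarantees that with probability at least $1-\delta$, simultaneously for every agent and every $S \in \cS$, the swap regret is bounded by $O\bigl( L|\cA|\sqrt{|S|\ln\frac{d|\cA||\cN||\cS|T}{\delta}}\bigr)$. Chaining this with the pointwise inequality above transfers the same high-probability bound to $\Reg_\ext(u,\cA_S^{\E[\bc]},S)$, which completes the proof. There is essentially no technical obstacle: the only point to verify is that the benchmark class, the conditioning event (the intersection guaranteed by Corollary~\ref{cor:benchmark-expectation-subsequence}), and the failure probability $\delta$ are exactly those of the swap-regret corollary, so that the $1-\delta$ guarantee carries over verbatim with no additional union bound and no loss in the constants.
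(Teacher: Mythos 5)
Your proposal is correct and matches the paper's argument exactly: the paper derives this corollary from Corollary~\ref{cor:swap-regret-expectation-subsequence} by observing that constrained external regret is the special case of constrained swap regret obtained by restricting to constant modification rules, so the pointwise inequality $\Reg_\ext(u,\cA_S^{\E[\bc]},S) \le \Reg_\swap(u,\cA_S^{\E[\bc]},S)$ transfers the same high-probability bound with no change to the failure probability. Your additional care in checking that the hypotheses and the conditioning event coincide is exactly the right (and only) point that needs verification.
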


\section{Unbiased Prediction Algorithm} \label{app:unbiased-prediction}
In this section we present the $\textsc{Unbiased-Prediction}$ algorithm of \citet{noarov2023highdimensional}, from which our guarantees in Theorems \ref{thm:unbiased-algorithm-realization}, \ref{thm:unbiased-algorithm-realization-subsequence}, \ref{thm:unbiased-algorithm-expectation}, and \ref{thm:unbiased-algorithm-expectation-subsequence} follow.

We first introduce several notations and concepts from \citet{noarov2023highdimensional}. Let $\Pi = \{(x,p,y) \in \cX \times \cY \times \cY\}$ denote the set of possible realized triples at each round. An interaction over $T$ rounds produces a transcript $\pi_T \in \Pi^T$. We write $\pi^{<t}_T$ as the prefix of the first $t-1$ triples in $\pi_T$, for any $t \le T$. We write $\Pi^* = \cup_{T=1}^\infty \Pi^T$ for the space of all transcripts.
An \textit{event} $E\in\cE$ is a mapping from transcripts, contexts and predictions to $[0,1]$ , i.e. $E: \Pi^* \times \cX \times \cY \to [0,1]$. 

The $\textsc{Unbiased-Prediction}$ algorithm makes predictions that are unbiased conditional on a collection of events $\cE$. The algorithm's conditional bias guarantee depends logarithmically on the number of events:

\begin{theorem}\citep{noarov2023highdimensional} \label{thm:unbiased-prediction-algorithm}
For a collection of events $\cE$ and convex prediction/outcome space $\cY\subseteq [0,1]^d$, Algorithm \ref{alg:unbiased-prediction} produces predictions $\psi_1,...,\psi_T \in \Delta \cY$ such that for any sequence of outcomes $y_1,...,y_T \in \cY$ chosen by the adversary:
    \[
    \left\| \sum_{t=1}^T \E_{p_t\sim\psi_t}[E(\pi_T^{<t}, x_t, p_t)(p_t - y_t)] \right\|_\infty \leq O\left( \ln(d|\cE|T) + \sqrt{\ln(d|\cE|T) \cdot n_T(E) } \right).
    \]
    where $n_T(E) = \sum_{t=1}^T E(\pi_T^{<t},x_t,p_t)$. The algorithm can be implemented with per-round running time scaling polynomially in $d$ and $|\cE|$.
\end{theorem}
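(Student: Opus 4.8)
The plan is to prove this by a potential (Lyapunov) argument with a per-round minimax step, the standard route to high-dimensional online unbiasedness. First I would reduce the vector $\ell_\infty$ guarantee to a scalar one by introducing $2d|\cE|$ \emph{signed coordinates} indexed by $k=(E,i,\sigma)$ with $E\in\cE$, $i\in[d]$, and $\sigma\in\{\pm 1\}$. Writing $g_{t,k}=\sigma\,E(\pi_T^{<t},x_t,p_t)(p_{t,i}-y_{t,i})$ for the signed per-round contribution and $V_{t,k}=\sum_{s\le t}\E_{p_s\sim\psi_s}[g_{s,k}]$ for its running (expected) sum, the quantity to be bounded for event $E$ is exactly $\max_{i,\sigma} V_{T,(E,i,\sigma)}$. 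I would track these through the soft-max potential $\Phi_t=\sum_k \exp(\eta V_{t,k})$, so that $\eta\max_k V_{T,k}\le \ln\Phi_T$ and it suffices to keep $\Phi_T$ small for a suitable learning rate $\eta\le 1$.

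The heart of the argument is the single-round step. Since $|g_{t,k}|\le 1$, the inequality $e^x\le 1+x+x^2$ (valid for $|x|\le 1$) gives $\Phi_t-\Phi_{t-1}\le \eta\sum_k w_{t-1,k}\,\E[g_{t,k}] + \eta^2\sum_k w_{t-1,k}\,\E[g_{t,k}^2]$ with weights $w_{t-1,k}=\exp(\eta V_{t-1,k})$ fixed before round $t$. I would choose the prediction law $\psi_t$ to make the first-order term nonpositive. Consider the zero-sum game in which the forecaster picks $\psi\in\Delta\cY$, the adversary picks an outcome law $Y\in\Delta\cY$, and the payoff is the weighted first-order term $\E_{p\sim\psi,\,y\sim Y}[\sum_k w_{t-1,k}\,\sigma\,E(\pi_T^{<t},x_t,p)(p_i-y_i)]$. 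This payoff is linear in $\psi$ and in $Y$ over convex compact sets, so Sion's minimax theorem permits swapping $\inf_\psi\sup_Y=\sup_Y\inf_\psi$; and for any fixed $Y$ the forecaster may play the point mass at the conditional mean $\bar y=\E_{y\sim Y}[y]$, whereupon $E(\pi_T^{<t},x_t,\bar y)$ is a constant and $\E_{y\sim Y}[\bar y-y]=0$, so the inner value is $\le 0$. Hence some $\psi_t$ forces the worst-case first-order term to be nonpositive. Crucially, fixing $p_t$ deterministically in the inner problem is exactly what neutralizes the self-referential dependence of the event $E$ on the prediction.

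With the first-order term killed, $\E[\Phi_t-\Phi_{t-1}]\le \eta^2\E[\sum_k w_{t-1,k}\,g_{t,k}^2]$. For the adaptive $n_T(E)$ rate I would exploit that $g_{t,k}^2\le E(\pi_T^{<t},x_t,p_t)^2\le E(\pi_T^{<t},x_t,p_t)$, since $E\in[0,1]$ and $|p_{t,i}-y_{t,i}|\le 1$; summing over rounds telescopes the quadratic variation of coordinate $k$ to $\sum_t g_{t,k}^2\le n_T(E)$. Feeding this into the second-order exponential-weights inequality $\eta V_{T,k}\le \ln(2d|\cE|)+\eta^2\sum_t g_{t,k}^2$ and tuning $\eta$ to the (a priori unknown) activity of $E$ yields $V_{T,k}=O(\ln(d|\cE|T)+\sqrt{\ln(d|\cE|T)\,n_T(E)})$; the additive term and the $T$ inside the logarithm arise from an anytime/doubling accounting for the unknown $n_T(E)$, and no concentration step is needed because the stated bound already carries the expectation $\E_{p_t\sim\psi_t}$ inside.

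Finally, for efficiency I would observe that $\psi_t$ is an optimal strategy of a minimax problem with $2d|\cE|$ linear payoff constraints, so by Carathéodory it admits an optimum supported on $O(d|\cE|)$ atoms and is computable by linear programming in time polynomial in $d$ and $|\cE|$. I expect the main obstacles to be twofold: (i) making the minimax existence fully rigorous despite the event $E$ depending on $p_t$ --- the ``predict the conditional mean'' move is what renders the inner game value nonpositive, and it must be verified against the precise functional form of $E$ and the measurability of $\psi_t$ in the history; and (ii) converting the \emph{aggregated} potential's second-order term into the \emph{per-event} quantity $n_T(E)$ rather than a uniform $T$, which is where the bound $g_{t,k}^2\le E(\cdot)$ together with an event-wise choice of $\eta$ does the real work.
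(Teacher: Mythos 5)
First, a point of reference: the paper does not prove Theorem \ref{thm:unbiased-prediction-algorithm} at all --- it is imported verbatim from \citet{noarov2023highdimensional}, and Appendix \ref{app:unbiased-prediction} only restates the algorithm and its guarantee. So there is no in-paper proof to compare against; what can be checked is whether your sketch is consistent with Algorithm \ref{alg:unbiased-prediction} and could plausibly be completed. On that score most of your reconstruction is right and matches the algorithm's structure exactly: the $2d|\cE|$ signed coordinates $(E,i,\sigma)$ are precisely the index set of the exponential-weights distribution $q_t$, the soft-max potential is the standard companion of those weights, and the key observation that the inner minimax value is nonpositive because the forecaster can answer any outcome law $Y$ with the point mass at $\bar y = \E_{y\sim Y}[y]$ (making $E(\cdot,\bar y)$ a constant and the linear term vanish) is the actual mechanism in the cited work. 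Your remarks about the measurability/minimax subtleties over $\Delta\cY$, and about concentration being deferred (the theorem bounds the $\psi_t$-expected bias; the realized-prediction version is Corollary \ref{cor:unbiased-prediction-algorithm}), are also correct.

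The genuine gap is the step you yourself flag as where ``the real work'' happens: converting the second-order term into the per-event quantity $n_T(E)$. The potential recursion you write gives $\Phi_t - \Phi_{t-1} \le \eta^2 \sum_k w_{t-1,k}\,\E[g_{t,k}^2]$, hence $\ln\Phi_T \le \ln(2d|\cE|) + \eta^2\sum_t \sum_k q_{t,k}\,\E[g_{t,k}^2]$ --- an \emph{aggregated}, weight-averaged quadratic variation that is only bounded by $T$. This does not immediately yield the per-coordinate inequality $\eta V_{T,k} \le \ln(2d|\cE|) + \eta^2\sum_t g_{t,k}^2$ that you then feed into the tuning step; obtaining a per-coordinate second-order term requires a modified potential (e.g.\ of the form $\sum_k \exp\left(\eta V_{t,k} - \eta^2 \sum_{s\le t} g_{s,k}^2\right)$, as in second-order expert bounds), which you do not supply. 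Moreover, your proposed resolution --- ``tuning $\eta$ to the (a priori unknown) activity of $E$'' via ``an event-wise choice of $\eta$'' --- is inconsistent with Algorithm \ref{alg:unbiased-prediction}, which runs a single instance with one fixed $\eta$ shared by all $2d|\cE|$ coordinates; an event-wise learning rate would change the weights $q_t$ and hence the prediction $\psi_t$ itself. Since the additive $\ln(d|\cE|T)$ term and the $\sqrt{\ln(d|\cE|T)\,n_T(E)}$ term in the stated bound are exactly the artifacts of how this adaptivity is resolved in \citet{noarov2023highdimensional}, this is the one part of the theorem your sketch does not actually establish.
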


\begin{algorithm}[H]
    \For{$t=1$ \KwTo $T$}{
        Observe $x_t$\;
        
        Define the distribution $q_t \in \Delta [2d|\cE|]$ such that for $E \in \cE, i\in[d], \sigma\in \{\pm 1\}$,
        \[
        q_t^{E, i, \sigma} \propto \exp\left( \frac{\eta}{2} \sum_{s=1}^{t-1} \sigma \cdot \E_{p_s\sim\psi_s}[E(\pi_T^{<s}, x_s, p_s) (p_s^i - y_s^i)] \right);
        \]

        Output the solution to the minmax problem:
        \[
        \psi_t \gets \argmin_{\psi_t' \in \Delta\cY} \max_{y \in \cY} \E_{p_t \sim \psi_t'}\left[\sum_{E, i, \sigma} q_t(E, i, \sigma) \cdot \sigma \cdot E(\pi_T^{<s}, x_s, p_s) \cdot (p_s^i - y_s^i) \right];
        \]        
    }    
    \caption{$\textsc{Unbiased-Prediction}$}
    \label{alg:unbiased-prediction}
\end{algorithm}

Theorem \ref{thm:unbiased-prediction-algorithm} is stated as expected error bounds over randomized predictions $\psi_t\in\Delta\cY$. In the following Corollary \ref{cor:unbiased-prediction-algorithm}, we state the guarantees based on realized predictions $p_t$ that are sampled from $\psi_t$. Our guarantees in Theorems \ref{thm:unbiased-algorithm-realization} and \ref{thm:unbiased-algorithm-realization-subsequence} directly follow from Corollary \ref{cor:unbiased-prediction-algorithm}.
\begin{corollary} \label{cor:unbiased-prediction-algorithm}
For a collection of events $\cE$ and convex prediction/outcome space $\cY\subseteq [0,1]^d$, Algorithm \ref{alg:unbiased-prediction} produces predictions $p_1,...,p_T \in \cY$ such that for any sequence of outcomes $y_1,...,y_T \in \cY$ chosen by the adversary, with probability at least $1-\delta$:
    \[
        \left\| \sum_{t=1}^T E(\pi_T^{<t}, x_t, p_t)(p_t - y_t) \right\|_\infty \leq O\left( \ln(d|\cE|T) + \sqrt{\ln(d|\cE|T) \cdot n_T(E) } + \sqrt{\ln(d|\cE| / \delta) \cdot T} \right).
    \]
    where $n_T(E) = \sum_{t=1}^T E(\pi_T^{<t},x_t,p_t)$. The algorithm can be implemented with per-round running time scaling polynomially in $d$ and $|\cE|$.
\end{corollary}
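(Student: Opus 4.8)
The plan is to deduce the realized (high-probability) bound of Corollary~\ref{cor:unbiased-prediction-algorithm} from the expected-bias bound of Theorem~\ref{thm:unbiased-prediction-algorithm} by controlling, via a martingale concentration argument, the gap between the realized sum and its conditional expectation. Fix an event $E \in \cE$, a coordinate $i \in [d]$, and a sign $\sigma \in \{\pm 1\}$, and define
\[
    X_t^{E,i,\sigma} = \sigma \cdot E(\pi_T^{<t}, x_t, p_t)(p_t^i - y_t^i).
\]
Since $E$ takes values in $[0,1]$ and $p_t^i, y_t^i \in [0,1]$, each $X_t^{E,i,\sigma}$ lies in $[-1,1]$, and the quantity whose $\ell_\infty$ norm we wish to bound is exactly $\max_{i,\sigma} \sum_{t=1}^T X_t^{E,i,\sigma}$.

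The next step is to set up the filtration. Let $\cH_{t-1}$ denote the $\sigma$-algebra generated by the transcript prefix $\pi_T^{<t}$ together with the current context $x_t$ and outcome $y_t$ --- that is, all the information at round $t$ except the sampled prediction $p_t$. Because the adversary commits to the outcome distribution $Y_t$ before the forecaster samples $p_t \sim \psi_t$, the prediction $p_t$ is conditionally independent of $y_t$ given the history, so conditioning on $\cH_{t-1}$ leaves only $p_t$ random, and
\[
    \E[X_t^{E,i,\sigma} \mid \cH_{t-1}] = \sigma \cdot \E_{p_t \sim \psi_t}[E(\pi_T^{<t}, x_t, p_t)(p_t^i - y_t^i)],
\]
which is precisely the summand appearing inside Theorem~\ref{thm:unbiased-prediction-algorithm}. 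Since $\{\cH_t\}$ is an increasing filtration (each $\cH_{t-1}$ contains $\pi_T^{<t}$, which already records $y_{t-1}, p_{t-1}$), the centered terms $D_t^{E,i,\sigma} := X_t^{E,i,\sigma} - \E[X_t^{E,i,\sigma}\mid\cH_{t-1}]$ form a martingale difference sequence bounded in absolute value by $2$.

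I would then apply the Azuma--Hoeffding inequality (Lemma~\ref{lem:azuma}) to each such sequence: with probability at least $1-\delta'$, one has $\left|\sum_{t=1}^T D_t^{E,i,\sigma}\right| \le O(\sqrt{T \ln(1/\delta')})$. Taking a union bound over all $2d|\cE|$ triples $(E,i,\sigma)$ with $\delta' = \delta/(2d|\cE|)$ makes all these deviations simultaneously bounded by $O(\sqrt{T \ln(d|\cE|/\delta)})$ with probability at least $1-\delta$. Finally, decomposing $\sum_t X_t^{E,i,\sigma} = \sum_t \E[X_t^{E,i,\sigma}\mid\cH_{t-1}] + \sum_t D_t^{E,i,\sigma}$, the first term is controlled, after maximizing over $i$ and $\sigma$, by the $\ell_\infty$ guarantee of Theorem~\ref{thm:unbiased-prediction-algorithm}, namely $O(\ln(d|\cE|T) + \sqrt{\ln(d|\cE|T)\, n_T(E)})$, while the second term contributes the martingale deviation $O(\sqrt{\ln(d|\cE|/\delta)\, T})$; summing the two yields the claimed bound simultaneously for every $E \in \cE$.

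The step I expect to require the most care is verifying the conditional-expectation identity, i.e.\ that conditioning on $\cH_{t-1}$ isolates $p_t$ as the only remaining source of randomness so that $\E[X_t^{E,i,\sigma}\mid\cH_{t-1}]$ coincides \emph{exactly} with the summand of Theorem~\ref{thm:unbiased-prediction-algorithm}. This hinges on the protocol ordering (the adversary fixing $Y_t$ before $p_t$ is drawn, which delivers the conditional independence of $p_t$ and $y_t$) and on confirming that $\{\cH_t\}$ is a genuine filtration even though $\cH_{t-1}$ already incorporates $y_t$. Once this is in place, the boundedness of the increments and the union-bound bookkeeping are routine.
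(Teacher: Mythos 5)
Your proposal is correct and follows essentially the same route as the paper's proof: center the realized per-coordinate sums around their conditional expectations, note the centered increments form a bounded martingale difference sequence, apply Azuma--Hoeffding with a union bound over coordinates and events, and combine with the expected-bias bound of Theorem~\ref{thm:unbiased-prediction-algorithm} via the triangle inequality. The only cosmetic differences are that you carry the sign $\sigma$ explicitly in the union bound (the paper's two-sided Azuma bound makes this unnecessary) and you condition additionally on $y_t$, which if anything makes the identification of the conditional expectation with the summand of Theorem~\ref{thm:unbiased-prediction-algorithm} more transparent.
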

\begin{proof}
    Fix any $m \in [d]$ and $E \in \cE$. Consider the sequence $\{E(\pi_T^{<t}, x_t, p_t)(p_{t,m} - y_{t,m}) - \E_{p_t\sim\psi_t}[E(\pi_T^{<t}, x_t, p_t)(p_{t,m} - y_{t,m})]\}_{t=1}^T$, where $p_{t,m}$ and $y_{t,m}$ are the $m$-th coordinate of $p_t$ and $y_t$, respectively. It is a sequence of martingale differences, since for any $t \in [T]$: 
    \begin{align*}
        \E\left[ E(\pi_T^{<t}, x_t, p_t)(p_{t,m} - y_{t,m}) - \E_{p_t\sim\psi_t}[E(\pi_T^{<t}, x_t, p_t)(p_{t,m} - y_{t,m})] \mid \sigma(\pi_T^{<t})\right] = 0.
    \end{align*}

    By Azuma-Hoeffding inequality (Lemma \ref{lem:azuma}), we have that with probability at least $1-\frac{\delta}{d|\cE|}$:
    \begin{align*}
       \left| \sum_{t=1}^T E(\pi_T^{<t}, x_t, p_t)(p_{t,m} - y_{t,m}) - \sum_{t=1}^T \E_{p_t\sim\psi_t}[E(\pi_T^{<t}, x_t, p_t)(p_{t,m} - y_{t,m})] \right| \le 2\sqrt{2 \ln (2d|\cE| / \delta) \cdot T}.
    \end{align*}

    Using the union bound over all $m \in [d]$ and $E \in \cE$, we have that with probability at least $1-\delta$, for any $m \in [d]$ and any $E \in \cE$:
    \begin{align*}
       \left\| \sum_{t=1}^T E(\pi_T^{<t}, x_t, p_t)(p_{t} - y_{t}) - \sum_{t=1}^T \E_{p_t\sim\psi_t}[E(\pi_T^{<t}, x_t, p_t)(p_{t} - y_{t})] \right\|_\infty \le 2\sqrt{2 \ln (2d|\cE| / \delta) \cdot T}.
    \end{align*}

    As a result,
    \begin{align*}
        \left\| \sum_{t=1}^T E(\pi_T^{<t}, x_t, p_t)(p_t - y_t) \right\|_\infty &\le \left\| \sum_{t=1}^T \E_{p_t\sim\psi_t}[E(\pi_T^{<t}, x_t, p_t)(p_t - y_t)] \right\|_\infty \\
        &\quad + \left\| \sum_{t=1}^T E(\pi_T^{<t}, x_t, p_t)(p_{t} - y_{t}) - \sum_{t=1}^T \E_{p_t\sim\psi_t}[E(\pi_T^{<t}, x_t, p_t)(p_{t} - y_{t})] \right\|_\infty \\
        &\le O\left( \ln(d|\cE|T) + \sqrt{\ln(d|\cE|T) \cdot n_T(E) } \right) + 2\sqrt{2 \ln (2d|\cE| / \delta) \cdot T} \\
        &= O\left( \ln(d|\cE|T) + \sqrt{\ln(d|\cE|T) \cdot n_T(E) } + \sqrt{\ln (d|\cE| / \delta) \cdot T}\right).
    \end{align*}
\end{proof}

We further extend the guarantee to the multi-subsequence setting in Corollary \ref{cor:unbiased-prediction-algorithm-subsequence}. Our guarantees in Theorems \ref{thm:unbiased-algorithm-expectation} and \ref{thm:unbiased-algorithm-expectation-subsequence} directly follow from Corollary \ref{cor:unbiased-prediction-algorithm-subsequence}.
\begin{corollary} \label{cor:unbiased-prediction-algorithm-subsequence}
Let $\cS$ be a collection of subsequences. For a collection of events $\cE$ and convex prediction/outcome space $\cY\subseteq [0,1]^d$, Algorithm \ref{alg:unbiased-prediction} instantiated with the collection of events $\{\1[t \in S] \cdot E\}_{S \in \cS, E \in \cE}$ produces predictions $p_1,...,p_T \in \cY$ such that for any sequence of outcomes $y_1,...,y_T \in \cY$ chosen by the adversary, with probability at least $1-\delta$, for any $S \in \cS$:
    \[
        \left\| \sum_{t \in S} E(\pi_T^{<t}, x_t, p_t)(p_t - y_t) \right\|_\infty \leq O\left( \ln(d|\cE||\cS|T) + \sqrt{\ln(d|\cE||\cS|T) \cdot n_S(E) } + \sqrt{\ln(d|\cE||\cS| / \delta) \cdot |S|} \right).
    \]
    where $n_S(E) = \sum_{t \in S} E(\pi_T^{<t},x_t,p_t)$. The algorithm can be implemented with per-round running time scaling polynomially in $d$, $|\cE|$, and $|\cS|$.
\end{corollary}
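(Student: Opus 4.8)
The plan is to reduce the claim to the single-sequence Corollary \ref{cor:unbiased-prediction-algorithm} by treating each pair $(S,E)$ as defining one composite event. First I would observe that for each $S \in \cS$ and $E \in \cE$, the map $E_S := \1[t\in S]\cdot E$ is again a valid event in the sense of Appendix \ref{app:unbiased-prediction}: the indicator $\1[t\in S] = h_S(t,x_t)$ is a deterministic function of the round index and the observed context $x_t$, both available before $p_t$ is drawn, and multiplying the $[0,1]$-valued event $E$ by this $\{0,1\}$ indicator again yields a map into $[0,1]$. Hence instantiating \textsc{Unbiased-Prediction} on $\cE' = \{E_S : S \in \cS,\, E \in \cE\}$ is well-defined, with $|\cE'| = |\cS|\,|\cE|$; the claimed per-round running time (polynomial in $d$, $|\cE|$, and $|\cS|$) then follows directly from the running-time clause of Theorem \ref{thm:unbiased-prediction-algorithm}.

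Next I would apply Theorem \ref{thm:unbiased-prediction-algorithm} to the event $E_S$. Since $\1[t\in S]$ does not depend on $p_t$, it factors out of every sum and expectation: $E_S(\pi_T^{<t},x_t,p_t)(p_t-y_t) = \1[t\in S]\,E(\pi_T^{<t},x_t,p_t)(p_t-y_t)$, so the sum over all $T$ rounds collapses to a sum over $t\in S$, and likewise $n_T(E_S) = \sum_{t=1}^T \1[t\in S]\,E(\pi_T^{<t},x_t,p_t) = n_S(E)$. This immediately yields the expected-value bound
\[
\left\|\sum_{t\in S}\E_{p_t\sim\psi_t}[E(\pi_T^{<t},x_t,p_t)(p_t-y_t)]\right\|_\infty \le O\left(\ln(d|\cE||\cS|T) + \sqrt{\ln(d|\cE||\cS|T)\cdot n_S(E)}\right).
\]

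The remaining and most delicate step is passing from this bound on the randomized predictions $\psi_t$ to a high-probability bound on the realized predictions $p_t$, while securing the sharper $\sqrt{|S|}$ (rather than $\sqrt{T}$) concentration term. Mirroring the proof of Corollary \ref{cor:unbiased-prediction-algorithm}, I would fix a coordinate $m \in [d]$ and a pair $(S,E)$ and consider $Z_t := E_S(\ldots)(p_{t,m}-y_{t,m}) - \E_{p_t\sim\psi_t}[E_S(\ldots)(p_{t,m}-y_{t,m})]$, which is a martingale difference sequence with respect to $\sigma(\pi_T^{<t})$ by the same argument used there (the increments are mean-zero because $y_t$ is conditionally independent of the sampled $p_t$). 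The crucial new observation is that for $t\notin S$ the factor $\1[t\in S]$ vanishes and can be pulled out of $\E_{p_t\sim\psi_t}$, so \emph{both} the realized term and its conditional expectation are $0$ and hence $Z_t \equiv 0$ identically; only the $|S|$ rounds in $S$ contribute nonzero increments, each bounded in magnitude by $2$. Applying Azuma--Hoeffding (Lemma \ref{lem:azuma}) to this effectively length-$|S|$ sequence gives a deviation of $O(\sqrt{\ln(d|\cE||\cS|/\delta)\cdot|S|})$ with probability $1-\tfrac{\delta}{d|\cE||\cS|}$; a union bound over all $m\in[d]$, $S\in\cS$, $E\in\cE$, combined with the expectation bound above via the triangle inequality, yields the stated guarantee.

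The main obstacle is simply ensuring the concentration scales with $|S|$ rather than $T$, and this rests entirely on the identical vanishing of $Z_t$ off $S$ — which in turn needs only the elementary fact that $\1[t\in S]$ is a function of $(t,x_t)$, independent of $p_t$, so it factors out of the conditional expectation. The martingale-difference property itself, and the associated measurability bookkeeping, are inherited verbatim from Corollary \ref{cor:unbiased-prediction-algorithm}, so no new probabilistic estimates are required once this vanishing is noted.
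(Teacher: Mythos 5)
Your proposal is correct and follows essentially the same route as the paper: instantiate \textsc{Unbiased-Prediction} on the product family $\{\1[t\in S]\cdot E\}$ (so $|\cE'|=|\cE||\cS|$), invoke Theorem \ref{thm:unbiased-prediction-algorithm} for the in-expectation bound with $n_T(E_S)=n_S(E)$, then Azuma--Hoeffding plus a union bound over $(m,E,S)$ and the triangle inequality. The only (harmless) deviation is in how the $\sqrt{|S|}$ scaling is secured: the paper keeps the indicator outside the martingale differences and argues that the predictably-selected subsequence of rounds in $S$ is again a martingale difference sequence, whereas you fold $\1[t\in S]$ into the increments so that the off-$S$ terms vanish identically and Lemma \ref{lem:azuma} with $M_t=0$ off $S$ gives the same conclusion directly --- an equivalent and, if anything, slightly cleaner bookkeeping.
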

\begin{proof}
    Fix any $m \in [d]$, $E \in \cE$, and $S \in \cS$. Consider the sequence $\{E(\pi_T^{<t}, x_t, p_t)(p_{t,m} - y_{t,m}) - \E_{p_t\sim\psi_t}[E(\pi_T^{<t}, x_t, p_t)(p_{t,m} - y_{t,m})]\}_{t=1}^T$, where $p_{t,m}$ and $y_{t,m}$ are the $m$-th coordinate of $p_t$ and $y_t$, respectively. It is a sequence of martingale differences, since for any $t \in [T]$: 
    \begin{align*}
        \E\left[ E(\pi_T^{<t}, x_t, p_t)(p_{t,m} - y_{t,m}) - \E_{p_t\sim\psi_t}[E(\pi_T^{<t}, x_t, p_t)(p_{t,m} - y_{t,m})] \mid \sigma(\pi_T^{<t},x_t)\right] = 0.
    \end{align*}

    The subsequence of these terms corresponding to rounds $s \in S$, i.e., $\{E(\pi_T^{<s}, x_s, p_s)(p_{s,m} - y_{s,m}) - \E_{p_s\sim\psi_s}[E(\pi_T^{<s}, x_s, p_s)(p_{s,m} - y_{s,m})]\}_{s \in S: s \le t}$, is also a martingale difference sequence, because the selection rule is predictable with respect to the filtration $\sigma(\pi_T^{<s},x_s)$.

    By Azuma-Hoeffding inequality (Lemma \ref{lem:azuma}), we have that with probability at least $1-\frac{\delta}{d|\cE||\cS|}$:
    \begin{align*}
       \left| \sum_{t \in S} E(\pi_T^{<t}, x_t, p_t)(p_{t,m} - y_{t,m}) - \sum_{t \in S} \E_{p_t\sim\psi_t}[E(\pi_T^{<t}, x_t, p_t)(p_{t,m} - y_{t,m})] \right| \le 2\sqrt{2 \ln (2d|\cE||\cS| / \delta) \cdot |S|}.
    \end{align*}

    Using the union bound over all $m \in [d]$, $E \in \cE$, and $S \in \cS$, we have that with probability at least $1-\delta$, for any $m \in [d]$, any $E \in \cE$, and any $S \in \cS$:
    \begin{align*}
       \left\| \sum_{t \in S} E(\pi_T^{<t}, x_t, p_t)(p_{t} - y_{t}) - \sum_{t \in S} \E_{p_t\sim\psi_t}[E(\pi_T^{<t}, x_t, p_t)(p_{t} - y_{t})] \right\|_\infty \le 2\sqrt{2 \ln (2d|\cE| / \delta) \cdot |S|}.
    \end{align*}

    By Theorem \ref{thm:unbiased-prediction-algorithm}, for any $E \in \cE$ and any $S \in \cS$:
    \begin{align*}
        \left\| \sum_{t \in S} \E_{p_t\sim\psi_t}[E(\pi_T^{<t}, x_t, p_t)(p_t - y_t)] \right\|_\infty &= \left\| \sum_{t=1}^T \E_{p_t\sim\psi_t}[\1[t \in S]E(\pi_T^{<t}, x_t, p_t)(p_t - y_t)] \right\|_\infty \\
        &\leq O\left( \ln(d|\cE||\cS|T) + \sqrt{\ln(d|\cE||\cS|T) \cdot n_S(E) } \right).
    \end{align*}
    
    Therefore,
    \begin{align*}
        \left\| \sum_{t \in S} E(\pi_T^{<t}, x_t, p_t)(p_t - y_t) \right\|_\infty &\le \left\| \sum_{t \in S} \E_{p_t\sim\psi_t}[E(\pi_T^{<t}, x_t, p_t)(p_t - y_t)] \right\|_\infty \\
        &\quad + \left\| \sum_{t \in S} E(\pi_T^{<t}, x_t, p_t)(p_{t} - y_{t}) - \sum_{t \in S} \E_{p_t\sim\psi_t}[E(\pi_T^{<t}, x_t, p_t)(p_{t} - y_{t})] \right\|_\infty \\
        &\le O\left( \ln(d|\cE||\cS|T) + \sqrt{\ln(d|\cE||\cS|T) \cdot n_S(E) } \right) + 2\sqrt{2 \ln (2d|\cE||\cS| / \delta) \cdot |S|} \\
        &= O\left( \ln(d|\cE||\cS|T) + \sqrt{\ln(d|\cE||\cS|T) \cdot n_S(E) } + \sqrt{\ln (d|\cE||\cS| / \delta) \cdot |S|}\right).
    \end{align*}
\end{proof}

Theorem \ref{thm:unbiased-algorithm-realization} instantiates Algorithm \ref{alg:unbiased-prediction} with the collection of events $\cE = \{ \1[\CBR^u_{\widehat\cA^\bc_{t}}(p_t) = a] \}_{(u,\bc) \in \cN, a\in\cA}$.
Thus, $|\cE| = |\cA||\cN|$, from which our guarantees follow. 

Similarly, Theorem \ref{thm:unbiased-algorithm-expectation} instantiates Algorithm \ref{alg:unbiased-prediction} with the collection of events $\cE = \{ \1[\CBR^u_{\widehat\cA^{\E[\bc]}_{t}}(p_t) = a] \}_{(u,\bc) \in \cN, a\in\cA}$.
Thus, $|\cE| = |\cA||\cN|$ as well.

% In the multi-subsequence setting, Theorems \ref{thm:unbiased-algorithm-realization-subsequence} and \ref{thm:unbiased-algorithm-expectation-subsequence} instantiate Algorithm \ref{alg:unbiased-prediction} with the collection of events $\cE = \{ \1[t \in S, \CBR^u_{U_t}(p_t) = a] \}_{(u,\bc) \in \cN, a\in\cA, S \in \cS}$. Thus, $|\cE| = |\cA||\cN||\cS|$.
In the multi-subsequence setting, Theorems \ref{thm:unbiased-algorithm-realization-subsequence} and \ref{thm:unbiased-algorithm-expectation-subsequence} instantiate Algorithm \ref{alg:unbiased-prediction} with the collection of events $\cE = \{ \1[\CBR^u_{U_t}(p_t) = a] \}_{(u,\bc) \in \cN, a\in\cA}$. Thus, $|\cE| = |\cA||\cN|$.

% We note that $\widehat\cA_t^\bc$, $\widehat\cA_t^{\E[\bc]}$, and $U_t$ are constructed based on the actions' performances in the first $t-1$ rounds. Therefore, $\1[\CBR^u_{\widehat\cA^\bc_{t}}(p_t) = a]$, $\1[\CBR^u_{\widehat\cA^{\E[\bc]}_{t}}(p_t) = a]$, and $\1[t \in S, \CBR^u_{U_t}(p_t) = a]$ are indeed functions of $\pi_T^{<t}$ and $p_t$, hence they qualify for the events in Theorem \ref{thm:unbiased-prediction-algorithm}.
We note that $\widehat\cA_t^\bc$, $\widehat\cA_t^{\E[\bc]}$, and $U_t$ are constructed based on the actions' performances in the first $t-1$ rounds. Therefore, $\1[\CBR^u_{\widehat\cA^\bc_{t}}(p_t) = a]$, $\1[\CBR^u_{\widehat\cA^{\E[\bc]}_{t}}(p_t) = a]$, and $\1[\CBR^u_{U_t}(p_t) = a]$ are indeed functions of $\pi_T^{<t}$ and $p_t$, hence they qualify for the events in Theorem \ref{thm:unbiased-prediction-algorithm}.

\section{Azuma-Hoeffding's Inequality}
% \begin{lemma} \label{lem:azuma}
% If $X_1, \ldots, X_T$ is a martingale difference sequence, and for every $t$, with probability $1,\left|X_t\right| \leq M$. Then with probability at least $1-\delta$:
% $$
% \left|\sum_{t=1}^T X_t\right| \leq M \sqrt{2 T \ln \frac{2}{\delta}}
% $$
% \end{lemma}

% \begin{lemma} \label{lem:azuma}
% If $X_1, \ldots, X_T$ is a martingale difference sequence, and for every $t$, with probability $1,\left|X_t\right| \leq M$. Then with probability at least $1-2e^{-\frac{\epsilon^2}{2M^2T}}$:
% $$
% \left|\sum_{t=1}^T X_t\right| \leq \epsilon
% $$
% \end{lemma}

\begin{lemma} \label{lem:azuma}
If $X_1, \ldots, X_T$ is a martingale difference sequence, and for every $t$, with probability $1,\left|X_t\right| \leq M_t$. Then with probability at least $1-2e^{-\frac{\epsilon^2}{2\sum_{t=1}^T M_t^2}}$:
$$
\left|\sum_{t=1}^T X_t\right| \leq \epsilon.
$$
\end{lemma}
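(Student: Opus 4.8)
The plan is to prove this standard Azuma-Hoeffding inequality via the exponential moment method (Chernoff bounding), establishing each tail separately and then combining them with a union bound to recover the two-sided statement. I would first focus on the upper tail, bounding $\Pr[\sum_{t=1}^T X_t \geq \epsilon]$, and then obtain the lower tail by applying the identical argument to the martingale difference sequence $-X_1, \ldots, -X_T$. Throughout, I would work with respect to the underlying filtration $\cF_1 \subseteq \cF_2 \subseteq \cdots$ against which $\{X_t\}$ is a martingale difference sequence, so that $\E[X_t \mid \cF_{t-1}] = 0$ almost surely.

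For the upper tail, the key step is to control the exponential moment $\E[\exp(s \sum_{t=1}^T X_t)]$ for a parameter $s > 0$ to be optimized later. I would peel off the terms one at a time using the tower property: conditioning on $\cF_{T-1}$ and factoring out $\exp(s \sum_{t < T} X_t)$, which is $\cF_{T-1}$-measurable, leaves the factor $\E[\exp(s X_T) \mid \cF_{T-1}]$. The central estimate is Hoeffding's lemma: since $X_t$ is conditionally mean-zero and satisfies $-M_t \leq X_t \leq M_t$ almost surely, we have $\E[\exp(s X_t) \mid \cF_{t-1}] \leq \exp(s^2 M_t^2 / 2)$. Iterating this bound across all $T$ terms yields $\E[\exp(s \sum_{t=1}^T X_t)] \leq \exp(s^2 \sum_{t=1}^T M_t^2 / 2)$.

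With the exponential moment in hand, Markov's inequality gives $\Pr[\sum_{t=1}^T X_t \geq \epsilon] \leq \exp(-s\epsilon + s^2 \sum_{t=1}^T M_t^2 / 2)$. Optimizing the right-hand side over $s$ by setting $s = \epsilon / \sum_{t=1}^T M_t^2$ produces the tail bound $\exp(-\epsilon^2 / (2\sum_{t=1}^T M_t^2))$. Applying the same argument to $-X_t$ bounds the lower tail by the identical quantity, and a union bound over the two one-sided events gives the stated two-sided inequality with the factor of $2$.

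The main obstacle---really the only nontrivial ingredient---is Hoeffding's lemma itself. I would establish it by a short convexity argument: for a conditionally mean-zero variable bounded in $[-M, M]$, convexity of $x \mapsto \exp(sx)$ lets me dominate $\exp(sX_t)$ by the chord joining the endpoints $\pm M$; taking conditional expectations then kills the linear term by the zero conditional mean, and a Taylor estimate on the resulting cumulant-generating function of $s$ shows it is bounded by $\exp(s^2 M_t^2 / 2)$. Everything else is routine manipulation of nested conditional expectations and a one-variable optimization, so I would keep those steps brief and foreground the Hoeffding estimate.
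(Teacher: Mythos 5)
Your proof is correct: the exponential-moment argument via the tower property, the conditional Hoeffding lemma bound $\E[e^{sX_t}\mid \cF_{t-1}] \le e^{s^2 M_t^2/2}$, the choice $s = \epsilon/\sum_{t=1}^T M_t^2$, and the union bound over the two tails together yield exactly the stated constant $2e^{-\epsilon^2/(2\sum_t M_t^2)}$. The paper itself states Lemma \ref{lem:azuma} as a standard fact with no proof at all, so there is nothing to compare against; yours is the canonical textbook derivation and fills that gap correctly.
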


\end{document}